\documentclass{article}
\usepackage[utf8]{inputenc}
\usepackage[T1]{fontenc}
\usepackage{graphicx}
\usepackage{amsmath,amsfonts,amssymb,amsthm}
\usepackage{mathtools}
\usepackage{mathrsfs}
\usepackage{booktabs}
\usepackage{bbold}
\usepackage{bm}
\usepackage[font=footnotesize]{caption}
\usepackage{enumitem}
\usepackage{empheq}
\usepackage{framed}
\usepackage{comment}
\usepackage{fullpage}
\usepackage{soul}
\usepackage{dirtytalk}
\usepackage{dsfont}
\usepackage{bbm}
\usepackage{natbib}
\usepackage[dvipsnames]{xcolor}
\usepackage{subcaption}
\usepackage{xspace}
\usepackage[colorlinks,linkcolor=magenta,citecolor=blue, urlcolor=blue, pagebackref=false,backref=false]{hyperref}

\usepackage{cleveref}

\definecolor{myred}{HTML}{880000}
\definecolor{mygreen}{HTML}{008800}
\definecolor{myblue}{HTML}{000088}
\definecolor{linkblue}{HTML}{0000BB}
\definecolor{myred}{HTML}{880000}
\definecolor{mygreen}{HTML}{008800}
\definecolor{myblue}{HTML}{000088}
\definecolor{linkblue}{HTML}{0000BB}

\newcommand{\Var}{\operatorname{Var}}

\renewcommand{\leq}{\leqslant}
\renewcommand{\geq}{\geqslant}
\renewcommand{\le}{\leqslant}
\renewcommand{\ge}{\geqslant}

\usepackage{xspace}

\newtheorem{assumption}{Assumption}
\theoremstyle{remark}

\usepackage{fullpage}

\usepackage{epsf}
\usepackage{fancyhdr}
\usepackage{graphics}
\usepackage{graphicx}
\usepackage{psfrag}
\usepackage{microtype}
\usepackage{epsf}
\usepackage{fancyhdr}
\usepackage{graphics}
\usepackage{psfrag}
\usepackage{microtype}
\usepackage{subcaption}
\usepackage{algorithmic}
\usepackage{subcaption}
\usepackage{algorithm}
\usepackage{url}

\usepackage{enumitem}
\usepackage{booktabs}
\usepackage{caption}

\usepackage{bm,bbm}
\usepackage{mathtools}
\usepackage{cleveref}

\usepackage[mathscr]{eucal}

\usepackage{scalerel}

\newcommand*\D[0]{\mathcal{D}}

\newcommand*\A[0]{\mathcal{A}}

\newtheoremstyle{named}{}{}{\itshape}{}{\bfseries}{.}{.5em}{\thmnote{#3's }#1}
\theoremstyle{named}

\theoremstyle{plain}

\newtheorem{theorem}{Theorem}
\newtheorem{proposition}{Proposition}

\newtheorem{lemma}{Lemma}

\newtheorem{corollary}{Corollary}
\newtheorem{definition}{Definition}

\newlength{\widebarargwidth}
\newlength{\widebarargheight}
\newlength{\widebarargdepth}

\makeatletter
\long\def\@makecaption#1#2{
        \vskip 0.8ex
        \setbox\@tempboxa\hbox{\small {\bf #1:} #2}
        \parindent 1.5em  
        \dimen0=\hsize
        \advance\dimen0 by -3em
        \ifdim \wd\@tempboxa >\dimen0
                \hbox to \hsize{
                        \parindent 0em
                        \hfil
                        \parbox{\dimen0}{\def\baselinestretch{0.96}\small
                                {\bf #1.} #2
                                }
                        \hfil}
        \else \hbox to \hsize{\hfil \box\@tempboxa \hfil}
        \fi
        }
\makeatother

\long\def\comment#1{}
\definecolor{battleshipgrey}{rgb}{0.52, 0.52, 0.51}
\definecolor{darkgray}{rgb}{0.66, 0.66, 0.66}
\definecolor{darkgreen}{rgb}{0.0, 0.2, 0.13}
\definecolor{darkspringgreen}{rgb}{0.09, 0.45, 0.27}
\definecolor{dukeblue}{rgb}{0.0, 0.0, 0.61}
\definecolor{olivedrab7}{rgb}{0.24, 0.2, 0.12}
\definecolor{darkblue}{rgb}{0.0, 0.0, 0.55}
\definecolor{darkscarlet}{rgb}{0.34, 0.01, 0.1}
\definecolor{candyapplered}{rgb}{1.0, 0.03, 0.0}
\definecolor{ao(english)}{rgb}{0.0, 0.5, 0.0}
\definecolor{applegreen}{rgb}{0.55, 0.71, 0.0}

\newcommand{\E}{\mathbb E}

\newcommand{\simiid}{\stackrel{\mathrm{i.i.d.}}{\sim}}

\newcommand{\fakerefassumelip}[1]{\hyperref[assume:smooth-high-order]{{\color{magenta} {\upshape\textbf (}{\upshape{\textbf{Lip}}#1}{\upshape\textbf )}}} }

\newcommand{\cN}{\mathcal{N}}

\DeclareFontFamily{U}{mathx}{}
\DeclareFontShape{U}{mathx}{m}{n}{<-> mathx10}{}
\DeclareSymbolFont{mathx}{U}{mathx}{m}{n}

\DeclareMathAccent{\widecheck}{0}{mathx}{"71}

\newcommand{\Ind}{\mathbb{I}}

\usepackage{mathbbol}

\long\def\comment#1{}

\usepackage[framemethod=TikZ]{mdframed}

\newenvironment{narrowpara}
  {\par\addvspace{\smallskipamount}\narrower\noindent\ignorespaces}
  {\par\addvspace{\smallskipamount}}

\usepackage{authblk}

\begin{document}

\begin{center}
{\bf{\LARGE{Predicting and improving test-time scaling laws via reward tail-guided search}}}

\vspace*{.2in}
{\large{
 \begin{tabular}{ccc}
  Muheng Li$^{ \dagger}$ &  Jian Qian$^{\diamond}$ &
  Wenlong Mou$^{\dagger}$ 
 \end{tabular}

}

\vspace*{.2in}

 \begin{tabular}{c}
 Department of Statistical Sciences, University of Toronto$^{\dagger}$\\
  Department of AI and Data Science, University of Hong Kong$^{\diamond}$
 \end{tabular}

}

\end{center}

\begin{abstract}
  Test-time scaling has emerged as a critical avenue for enhancing the reasoning capabilities of Large Language Models (LLMs). Though the straight-forward ``best-of-$N$'' (BoN) strategy has already demonstrated significant improvements in performance, it lacks principled guidance on the choice of $N$, budget allocation, and multi-stage decision-making, thereby leaving substantial room for optimization.
While many works have explored such optimization, rigorous theoretical guarantees remain limited.
In this work, we propose new methodologies to predict and improve scaling properties via tail-guided search. 
By estimating the tail distribution of rewards, our method predicts the scaling law of LLMs without the need for exhaustive evaluations.
Leveraging this prediction tool, we introduce Scaling-Law Guided (SLG) Search, a new test-time algorithm that dynamically allocates compute to identify and exploit intermediate states with the highest predicted potential.
We theoretically prove that SLG achieves vanishing regret compared to perfect-information oracles, and achieves expected rewards that would otherwise require a polynomially larger compute budget required when using BoN.
Empirically, we validate our framework across different LLMs and reward models, confirming that tail-guided allocation consistently achieves higher reward yields than Best-of-$N$ under identical compute budgets.
Our code is available at \url{https://github.com/PotatoJnny/Scaling-Law-Guided-search}.
\end{abstract}

\section{Introduction}
\label{sec:intro}
The past years have seen remarkable advancements in the capabilities of Large Language Models (LLMs), particularly in complex reasoning tasks~\citep{openai2023gpt4,deepmind2023sparks}. A key driver of these improvements has been the adoption of test-time scaling strategies, which leverage the stochastic nature of LLM outputs to enhance performance without necessitating further model training~\citep{snell2024scaling,lightman2023let,brown2024large}. Among these strategies, the ``best-of-$N$'' (BoN) approach has emerged as a simple yet effective method. By generating multiple outputs and selecting the one with the highest reward according to a predefined metric, BoN has demonstrated significant performance gains across various tasks~\citep{li2023competition,guo2025deepseek}.

Despite its empirical success, the application of BoN strategies still faces several challenges. On the one hand, while it is known that increasing the number of samples $N$ generally leads to better performance, the precise relationship between $N$ and expected reward remains poorly understood. This gap in understanding hinders the ability to make informed decisions about resource allocation, especially in scenarios where computational budgets are constrained. On the other hand, the BoN strategy does not account for the potential benefits of adaptive sampling, where intermediate outputs could inform subsequent sampling decisions.

The answers to these challenges lie in the ability to predict scaling laws governing the performance of BoN strategies, and to develop algorithms that can effectively leverage these predictions. For example, in training-time settings, explicit scaling laws have been established, linking model, data and compute budget to performance metrics~\citep{kaplan2020scaling,hoffmann2022training}. These laws are used to guide resource allocation and model design, and have been theoretically justified under various setups~\cite{lin2025improved,arous2025learning}. By way of contrast, analogous scaling laws for test-time strategies like BoN remain underexplored, both empirically and theoretically.

In this work, we address these gaps by introducing a novel framework for predicting test-time scaling laws of BoN via tail extrapolation of reward distributions. Building on this predictive capability, we further propose Scaling-Law Guided (SLG) Search, an adaptive algorithm that dynamically allocates computational resources based on predicted rewards. Our contribution includes
\begin{itemize}
  \item A new method for predicting the best-of-$N$ scaling behavior with only $m \ll N$ samples, by modeling the tail distribution of rewards and extrapolating to larger $N$.
  \item A scaling-law guided (SLG) search algorithm that utilizes predicted scaling laws to adaptively allocate sampling budgets in two reasoning tasks. We theoretically show that SLG achieves vanishing regret compared to perfect-information oracles, and attains expected rewards that would otherwise require polynomially larger compute budgets under BoN.
  \item Our empirical evaluation across various LLMs and reward models demonstrates that our tail-guided allocation consistently outperforms BoN under identical compute budgets.
\end{itemize}
The rest of the paper is organized as follows. We first introduce notations and discuss related work. \Cref{sec:problem-setup} formalizes the problem setup. In \Cref{sec:scaling-law-prediction}, we present our method for predicting scaling laws via tail extrapolation. \Cref{sec:two-stage} introduces the SLG search algorithm and its theoretical analysis. Finally, \Cref{sec:experiments} details our empirical evaluations, and \Cref{sec:discussion} concludes with a discussion of future directions.

\paragraph{Notation.} 
For any integer $n \ge 1$, let $[n] = \{1, \dots, n\}$. 
We denote the PDF and CDF of the standard normal distribution by $\phi$ and $\Phi$, respectively, and use $\Phi^{-1}$ for its quantile function. 
We write $\phi(\cdot; \mu, \sigma^2)$ for the density of a Gaussian with mean $\mu$ and variance $\sigma^2$.

\subsection{Related work}
\label{sec:related-work}
\paragraph{Test-time scaling laws.}
Recent research has established inference-time compute as a scalable dimension of model performance, distinct from pre-training parameters.
Empirical studies have demonstrated that quality of final outputs improves as test-time budget increases
, allowing smaller models to compete with larger ones given sufficient budget \citep{wu2024inference,chen2024more,arora2025training,liu2025can,schaeffer2025large}.
Regarding the factors governing these laws, \citet{snell2024scaling} empirically found that the optimal scaling strategy depends heavily on the intrinsic difficulty of the prompt.
Subsequent theoretical works have sought to explain these phenomena, typically by modeling the relationship between solution coverage and budget $N$ \citep{brown2024large,levi2024simple,kazdan2025efficient} or analyzing generalization error under imperfect reward models \cite{huang2025best,di2025best,halder2025demystifying}.
Our work contributes to this theoretical lineage by constructing a more accurate statistical model based on distributional tail property to explain and predict test-time scaling laws.

\paragraph{Test-Time Search Algorithms.}
Existing search strategies range from iterative refinement to complex structured exploration. Classical techniques like self-refinement \citep{madaan2023self,jiang2025pag} improve reasoning loops without relying on external signals.
More recent approaches integrate explicit reward models to guide this search, utilizing uncertainty-aware verification \citep{ye2025uncertainty} or generative feedback (``PRMs that think'') \citep{khalifa2025process} to trigger revisions. 
At the most complex end of the spectrum, methods like Tree of Thoughts (ToT) \citep{yao2023tree} and Monte Carlo Tree Search (MCTS) frameworks such as RAP \citep{hao2023reasoning}, TreeBoN \citep{qiu2024treebon} and LATS \citep{zhou2023language} navigate the reasoning space using reward signals.
However, these methods typically rely on dense supervision from trained Process Reward Models (PRMs) or estimate state values using the empirical mean of lookahead rollouts which lacks theoretical guarantees. 
In contrast, we propose a novel search algorithm that leverages predicted scaling laws to guide resource allocation, offering a framework with provable performance guarantees.

\section{Problem setup}
\label{sec:problem-setup}

We formulate the test-time alignment problem as a decision-making task under a fixed computational budget.

\paragraph{Generation process and states.} 
We model the generation process as a trajectory $x=s_0 \to s_1 \to \dots \to s_k \to y$, where $x$ is the initial prompt, $s_i$ represents an \textit{intermediate state} (e.g., a reasoning step, partial plan, or draft segment), and $y$ is the final terminal response.
From any given state $s$, the language model follows a stochastic policy $\pi_{\text{ref}}(\cdot|s)$ to generate the subsequent state or terminate at a response $y$.
Our objective is to identify the generation trajectory that yields the highest reward response $y^*$ subject to a total computational budget.

\paragraph{Reward distribution and value function.} 
We assume access to a reward function $r(y, x)$ that evaluates the quality of a complete response $y$. 
For any intermediate state $s$ (including the prompt $x$), the stochastic policy induces a distribution over potential future outcomes. Let $Y_s$ denote a random final response obtained by rolling out the policy $\pi_{\text{ref}}$ from state $s$ until termination. 
We denote the distribution of its reward as $R_s = r(Y_s, x) \sim F_s$, and refer to $F_s$ as the \textit{reward distribution} associated with state $s$.

To quantify the potential of a state $s$, we define the \textit{value function} $V_N(s)$ as the expected maximum reward achievable by dedicating a sampling budget of $N$ to that state:
\begin{equation}
  \label{eq:value-function}
  V_N(s) := \mathbb{E}\Big[ \max_{1 \leq i \leq N} R_s^{(i)} \Big], \quad \text{where } R_s^{(i)} \overset{\text{i.i.d.}}{\sim} F_s.
\end{equation}
Intuitively, $V_N(s)$ measures the ``scaling potential" of the state $s$: it is the expected best outcome if we were to dedicate a budget of $N$ to exploring it.

\paragraph{Prediction and search objectives.} 
We can formulate the test-time alignment task under a compute budget using the above framework. We consider the following two problems.

\textit{(1) The prediction problem.} Given a state $s$ and a target budget $N$, our first objective is to estimate the scaling behavior for state $s$ using only a small amount of $m$ samples. Concretely, we seek to construct an estimator $\widehat{V}_N(s)$ that approximates
the value function $V_N(s)$ using only $m \ll N$ i.i.d. reward samples $\mathcal{D} = \{ r_1, r_2, \dots, r_m \}$ from the underlying distribution $F_s$.

\textit{(2) The search problem.} Given an initial prompt $x$ and a total budget $N$, our second objective is to design an algorithm that optimizes the expected reward of the maximal reward response generated within the budget. In particular, within one unit of budget, an algorithm $\A$ can sample from any states already generated (including the initial prompt). The decision of which state to sample next can depend on all previously observed states and their sampled rewards. The expected reward of algorithm $\A$ under budget $N$ is defined as
\begin{equation}
  \label{eq:value-function-algorithm}
    V_{N}(\mathcal{A};x) := \mathbb{E}\Big[\max_{y \in \mathcal{Y}_{\text{out}}} r(y,x) \Big],
\end{equation}
where $\mathcal{Y}_{\text{out}}$ is the set of final responses generated by $\mathcal{A}$ given the initial prompt $x$ and budget $N$.
The goal is to find an algorithm $\A$ that maximizes $V_N(\A;x)$. When it is clear from context, we will abbreviate $V_N(\A;x)$ as $V_N(\A)$.

For the rest of the paper, we first address the prediction problem in \Cref{sec:scaling-law-prediction}, and then leverage the resulting estimator to design a 
resource-efficient search algorithm in \Cref{sec:two-stage}.

\section{Predicting scaling laws via tail extrapolation}
\label{sec:scaling-law-prediction}
Based on above setup, we now present our method for predicting the scaling behavior of \(V_N(s)\) for large $N$ using only $m \ll N$ samples. The key insight is that the behavior of the maximum reward is primarily influenced by the tail of the reward distribution. Therefore, accurately modeling the tail allows us to extrapolate \(V_N(s)\) to larger \(N\).

In this section, we first introduce the tail assumption underlying our approach, supported by empirical evidence. We then describe a method to estimate the tail parameters from observed samples, and finally present theoretical guarantees on the accuracy of our scaling law predictions.

\subsection{Tail assumption and empirical validation}\label{subsec:gaussian-tail-assumption}
As the prediction problem of test-time scaling laws requires estimating reward values beyond the observed sample range, it necessitates modeling assumptions on the reward distribution's upper tail behavior.

\paragraph{An empirical observation:} We start by empirically analyzing the reward distributions obtained from LLM-generated responses. By collecting the reward distributions for various prompts and models, we observe consistent patterns in their tail behavior. In \Cref{fig:reward-tail-plot}, we present the histograms of reward samples ($N=5000$) from two representative examples. 
It can be observed that though the main body of the reward distribution is often skewed, the tail aligns closely with a truncated Gaussian distribution. The goodness-of-fit analysis and the Q-Q plots also support this observation, which is provided in Appendix~\ref{appendix:gaussian-tail-check}.

\begin{figure}[h]
    \centering
    \includegraphics[width=1\textwidth]{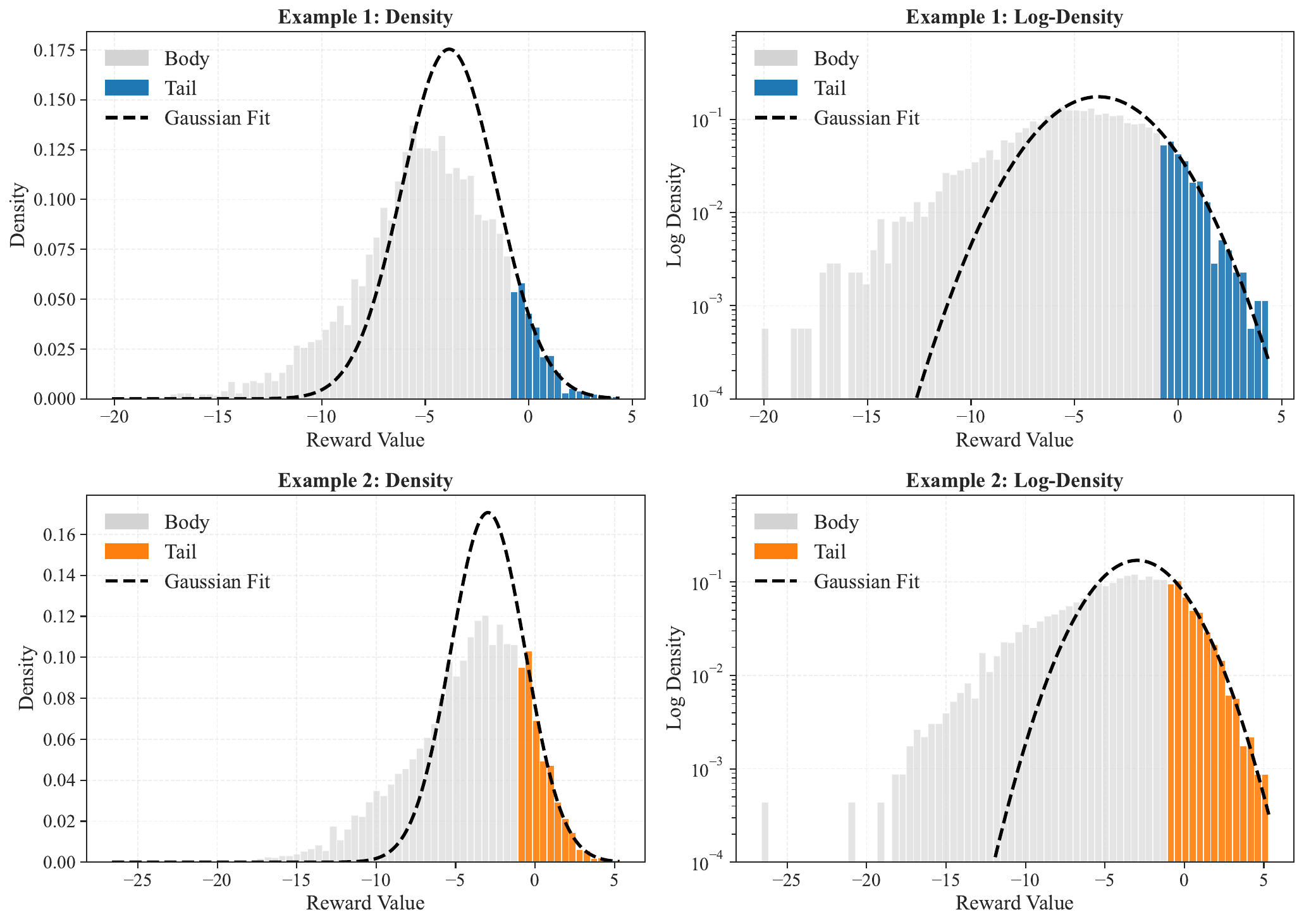} 
    \caption{\textbf{Empirical analysis of reward distributions.} 
    We visualize reward samples ($N=5000$) for two distinct partial responses generated by \texttt{Llama-3.2-1B-Instruct} on an AIME 2024 problem, scored by \texttt{Skywork-Reward-V2}. 
    \textbf{(Left)} The standard density histograms show that the distribution body may contain irregularities or deviations from a perfect bell curve.
    \textbf{(Right)} The log-density plots reveal that despite body noise, the upper tail (highlighted in color) strictly follows a \textbf{parabolic decay}, confirming that the tail behavior is well-approximated by Gaussian.}
    \label{fig:reward-tail-plot}
\end{figure}

 Motivated by these findings, we consider the following Gaussian tail assumption.
\begin{assumption}[Gaussian tail]
\label{assum:gaussian-tail-model}
Let $F_s$ be the cumulative distribution function of the reward distribution at state $s$, and $r_{\alpha} := F_s^{-1}(1-\alpha)$ be the threshold for a tail probability $\alpha \in (0, 1)$. 
We assume that for all $r \geq r_{\alpha}$, the reward density coincides with a Gaussian probability density function:
\begin{equation}
    p_s(r) = \phi(r; \mu, \sigma^2).
\end{equation}
where $\phi(\cdot; \mu, \sigma^2)$ denotes the PDF of a normal distribution with mean $\mu$ and variance $\sigma^2$.
Furthermore, we assume the second moment bound $\mathbb{E}[R_s^2] \leq C_R$.
\end{assumption}

Under Assumption~\ref{assum:gaussian-tail-model}, the full reward density $p_s(r)$ can be viewed as a mixture:
\begin{equation*}
    p_s(r) = \begin{cases} 
      p_{\text{body}}(r) & \text{if } r < r_{\alpha} \\
      \phi(r; \mu, \sigma^2) & \text{if } r \geq r_{\alpha} 
   \end{cases}
\end{equation*}
and we impose no constraints on the distribution's body, allowing the majority of samples to follow an arbitrary $p_{\text{body}}$. 

Under Assumption~\ref{assum:gaussian-tail-model}, the behavior of $V_N(s)$ for large $N$ is governed entirely by the tail parameters $\mu$ and $\sigma$. We now present a method to estimate these parameters and provide theoretical guarantees for the estimation error.

\subsection{Statistical estimation of scaling laws}
We employ the Method of Moments (MoM) on the tail statistics. 
Let $\phi$ and $\Phi$ denote the PDF and CDF of the standard normal distribution, respectively.
Standard results for the truncated normal distribution link the observed tail moments to the generative parameters:
\begin{subequations}
  \label{eq:param-inversion}
\begin{align}
    \mathbb{E}[R \mid R \geq R_{\alpha}] &= \mu + \sigma \lambda(z_{\alpha}), \\
    \text{Var}[R \mid R \geq R_{\alpha}] &= \sigma^2 \left( 1 + z_{\alpha}\lambda(z_{\alpha}) - \lambda(z_{\alpha})^2 \right),
\end{align}
\end{subequations}
where $z_{\alpha} = \Phi^{-1}(1-\alpha)$ and $\lambda(z) = \phi(z) / (1 - \Phi(z))$ is the inverse Mills ratio. 
By inverting these equations, we derive estimates for $\mu$ and $\sigma$ based on the empirical tail mean and variance, and thus
predict $V_N(s)$ via extrapolation. The complete procedure is detailed in Algorithm~\ref{alg:tail-mom}.

\begin{algorithm}[t]
   \caption{Scaling-Law Prediction via Tail Extrapolation}
   \label{alg:tail-mom}
\begin{algorithmic}[1]
   \REQUIRE Reward samples $\mathcal{D}$, tail fraction $\alpha$, budget $N$.  \looseness=-1
   \ENSURE Predicted expected maximum $\hat{V}_N(s)$.
   
   \STATE Compute the empirical threshold $\hat{r}_{\alpha/2}$ as the $(1-\alpha/2)$-quantile of $\mathcal{D}$.
   
   \STATE Extract the tail set $\mathcal{D}_{\text{tail}} = \{ r \in \mathcal{D} \mid r \geq \hat{r}_{\alpha/2} \}$.
   
   \STATE Compute sample mean $\hat{\mu}_{\text{tail}}$ and variance $\hat{\sigma}^2_{\text{tail}}$ of $\mathcal{D}_{\text{tail}}$.
   
   \STATE Estimate parameters $(\hat{\mu}, \hat{\sigma})$ by inverting the truncated normal moments:
   \begin{equation}
       \hat{\sigma} = \sqrt{ \hat{\sigma}^2_{\text{tail}} / {\delta(z)} }, \quad 
       \hat{\mu} = \hat{\mu}_{\text{tail}} - \hat{\sigma} \lambda(z)
   \end{equation}
   where $z = \Phi^{-1}(1-\alpha/2)$ and $\delta(z) = 1 + z\lambda(z) - \lambda(z)^2$, $\lambda(z)$ is the inverse Mills ratio.
   
   \STATE Predict the expected maximum for budget $N$:
   \[
      \hat{V}_N(s) = \hat{\mu} + \hat{\sigma} E(N)
   \]
   where $E(N)$ is the expected maximum of $N$ i.i.d. standard normal variables.

   \STATE \textbf{return} $\hat{V}_N(s)$
\end{algorithmic}
\end{algorithm}

Note that in Algorithm~\ref{alg:tail-mom}, we explicitly restrict estimation to the top $\alpha/2$ quantile rather than the full $\alpha$ tail. 
This conservative threshold mitigates irregularities at the boundary, and enables us to derive the following non-asymptotic error bound for the estimation of $V_N(s)$.

\begin{theorem}
  \label{theorem:error-bound}
  Under Assumptions \ref{assum:gaussian-tail-model}, for any $\delta \in (0,\frac{1}{2})$, when the sample size $m \geq c_1 \log(1/\delta)$,
  then with probability at least $1 - \delta$,
  \begin{align*}
     \left|\hat{V}_N(s) - V_N(s) \right| \leq c_2 \Big\{ \sqrt{\frac{\log N \cdot\log(1/\delta)}{m}} + (1 - \frac{\alpha}{2})^N \Big\},
  \end{align*}
  where
  $c_1, c_2$ are constants only depending on $(\alpha, \sigma, C_R)$.
\end{theorem}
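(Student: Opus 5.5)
The proof splits the error into a deterministic bias term and a stochastic estimation term. Let $\mu^\star+\sigma E(N)$ denote the ``Gaussian surrogate'' of $V_N(s)$. Then
\begin{align*}
|\hat V_N(s)-V_N(s)| \le |\hat\mu-\mu| + |\hat\sigma-\sigma|\,E(N) + |V_N(s)-\mu-\sigma E(N)|.
\end{align*}
The third term is the bias and should produce the $(1-\alpha/2)^N$ contribution. The first two terms carry the statistical error. Since $E(N)\lesssim\sqrt{2\log N}$, the target rate $\sqrt{\log N\log(1/\delta)/m}$ follows once I show $|\hat\mu-\mu|+|\hat\sigma-\sigma|\lesssim\sqrt{\log(1/\delta)/m}$ with probability $1-\delta$.

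\emph{Bias term.} Let $\tilde R\sim N(\mu,\sigma^2)$, and couple $R_s$ with $\tilde R$ through their quantile functions. Under Assumption~\ref{assum:gaussian-tail-model}, the two variables share the same tail above $r_\alpha$. So on the event that the maximum of $N$ draws lands in the top $\alpha/2$ region (indeed any $\alpha$ region), the maxima coincide: the quantile transform maps the top-$\alpha$ uniforms to identical values. Both maxima therefore agree except on the event $\{U_{(N)}<1-\alpha/2\}$, which has probability $(1-\alpha/2)^N$. On that event, I bound the difference by Cauchy--Schwarz. This uses $\mathbb E[\max_i R_i^2 \mathbb 1\{\cdot\}]$ and the fact that on the event all values lie below $r_{\alpha/2}$, a bounded constant determined by $(\alpha,\sigma,C_R)$. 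The lower side is controlled via the second moment bound. This gives a difference of at most $c(1-\alpha/2)^N$, possibly with a factor such as $\sqrt N$ or $\log N$. If such a factor appears, I absorb it by using the $\alpha/2$ versus $\alpha$ slack, i.e.\ replacing $(1-\alpha)^N\cdot\mathrm{poly}(N)$ by $c(1-\alpha/2)^N$.

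\emph{Statistical term.} This is the main obstacle, because the threshold $\hat r_{\alpha/2}$ is random. The plan has four steps.

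\begin{enumerate}
\item Apply DKW or Bernstein to the indicator counts. Once $m\ge c_1\log(1/\delta)$, this shows that with probability $1-\delta/2$ the empirical quantile satisfies $|\hat r_{\alpha/2}-r_{\alpha/2}|\lesssim\sqrt{\log(1/\delta)/m}$. I invert the CDF using the fact that the density near $r_{\alpha/2}$ equals the Gaussian density, which is bounded below. The condition on $m$ also guarantees $\hat r_{\alpha/2}\ge r_\alpha$, so the selected samples fall in the Gaussian region.

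\item Write the tail mean and variance as functionals $g(t)=\mathbb E[R\mid R\ge t]$ and $h(t)=\mathrm{Var}[R\mid R\ge t]$ of the threshold. These are Lipschitz in $t$ near $r_{\alpha/2}$, with a constant depending on $(\alpha,\sigma)$, because the tail is Gaussian. At $t=r_{\alpha/2}$ they equal \eqref{eq:param-inversion} with $z=\Phi^{-1}(1-\alpha/2)$. For this identity I use that under the Gaussian tail $F_s$ and the Gaussian share the upper-$\alpha$ quantiles, so the standardized threshold is exactly $z$.

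\item Decompose the empirical tail mean as $\frac1{m}\sum_i r_i\mathbb 1\{r_i\ge\hat r\}$ normalized by the count. Replacing $\hat r$ by $r_{\alpha/2}$ incurs an error controlled by the number of samples in the band between the two thresholds times their bounded values, which is $O(\sqrt{\log(1/\delta)/m})$. The remaining fixed-threshold averages concentrate via Bernstein, since Gaussian tails give sub-exponential summands. The same argument applies to the second moments.

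\item Finish with the delta method. The inversion map $(\hat\mu_{\rm tail},\hat\sigma^2_{\rm tail})\mapsto(\hat\mu,\hat\sigma)$ is Lipschitz on a neighborhood, because $\delta(z)>0$ and $\sigma>0$. Hence $|\hat\mu-\mu|+|\hat\sigma-\sigma|\le c\sqrt{\log(1/\delta)/m}$.
\end{enumerate}

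Combining the statistical bound with $E(N)\le\sqrt{2\log N}$ and adding the bias bound yields the stated inequality, with the constants $c_1,c_2$ collected from the steps above.
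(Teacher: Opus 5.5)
Your plan follows the paper's architecture. It uses the same split into the Gaussian-surrogate bias $|V_N(s)-\mu-\sigma E(N)|$ and the estimation error $|\hat\mu-\mu|+|\hat\sigma-\sigma|E(N)$. It controls $\hat r_{\alpha/2}$ the same way, by DKW plus the Gaussian density lower bound, including the guarantee $\hat r_{\alpha/2}\ge r_\alpha$. It also finishes with the same Lipschitz inversion of the truncated-moment map.

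Two sub-steps are handled differently.

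\emph{The random threshold.} The paper works on the event $|\hat r_{\alpha/2}-r_{\alpha/2}|\le r_{\alpha/2}-r_\alpha$. There it treats the roughly $m\alpha/2$ retained samples as i.i.d.\ truncated Gaussians at level $\hat r_{\alpha/2}$, which tacitly uses the conditional structure of samples above an order statistic. It concentrates them around $\mu(\hat r_{\alpha/2})$ and $\sigma(\hat r_{\alpha/2})$, then uses Lipschitz continuity in the threshold to move to $r_{\alpha/2}$. Your band-counting comparison with the deterministic threshold $r_{\alpha/2}$ needs only DKW and Bernstein for genuinely i.i.d.\ sums. It is therefore more elementary and sidesteps that dependence issue. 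In your route, the Lipschitz bound of your step 2 is only needed to identify the limits $\mu+\sigma\lambda(z)$ and $\sigma^2\delta(z)$.

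\emph{The bias.} The paper writes $V_N(s)-M(N)=\int(G^N-F_s^N)\,dr$, with $G$ the $\mathcal{N}(\mu,\sigma^2)$ CDF. It uses $F_s=G$ above the threshold and $F_s,G\le 1-\alpha/2$ below it. This gives $(1-\alpha/2)^{N-1}\int_{-\infty}^{r_{\alpha/2}}(F_s+G)\,dr$ and needs only a first moment. Your quantile coupling is the same identity written in $u$-coordinates.

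There is one bookkeeping error in your bias step. Let $D$ be the difference of the two maxima and $A=\{U_{(N)}<1-\alpha/2\}$. Cauchy--Schwarz gives $\|D\,\mathbf{1}_A\|_2\sqrt{\mathbb{P}(A)}\lesssim(1-\alpha/2)^{N/2}$. That is a halved exponent, not a polynomial factor, so as written it does not yield $(1-\alpha/2)^N$.

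The $\alpha$ versus $\alpha/2$ slack you invoke does repair this, but for a different reason than the one you give. Run the argument on $\{U_{(N)}<1-\alpha\}$; off this event the maxima already coincide. Then use $\sqrt{1-\alpha}\le 1-\alpha/2$.

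Alternatively, drop Cauchy--Schwarz and integrate against the density $Nu^{N-1}$ of $U_{(N)}$. This gives $N(1-\alpha)^{N-1}\int_0^{1}|F_s^{-1}(u)-G^{-1}(u)|\,du\le c\,(1-\alpha/2)^N$, which is exactly the polynomial-times-$(1-\alpha)^N$ form you anticipated. With either fix, the argument is complete.
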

\noindent See \Cref{appendix:proof_error_bound} for the proof of this theorem. \Cref{theorem:error-bound} indicates that the estimation error decays at a rate of $\mathcal{O}(\sqrt{\log N/m})$ as the sample size $m$ increases, 
with an additional exponentially decaying term $(1 - \alpha/2)^N$ that becomes negligible for large $N$.

\Cref{theorem:error-bound} provides theoretical guarantees that allow us to use $\hat{V}_N(s)$ as a reliable signal for decision-making. This predictive capability unlocks several critical applications. We will explore one example application in the following subsection. More importantly, we will leverage this prediction tool to design an adaptive search algorithm in Section~\ref{sec:two-stage}.

\subsubsection{Application to resource allocation}
Let us illustrate one application of our scaling law estimator $\hat{V}_N(s)$ in adaptive resource allocation of computational budget across multiple prompts. \citet{raman2025abon} explores this direction by estimating scaling laws using kernel density estimation (KDE) and demonstrates empirical improvements over uniform allocation. However, it is well known that KDE methods do not extrapolate well beyond the observed sample range (see e.g.~\citet{tsybakov2008introduction}). In contrast, our tail-based method provides robust theoretical guarantees for extrapolation, making it more suitable for high-budget predictions.

Specifically, we consider a total budget of $n_{\text{total}}$ to be allocated across $K$ prompts $\{x_i\}_{i=1}^K$. The goal is to maximize the weighted sum of expected maximum rewards across all prompts, for a given set of weights $\{w_i\}_{i=1}^K$.
\begin{align}
  \max_{\{n_i\}_{i=1}^K} \sum_{i=1}^K w_i V_{n_i}(x_i) \quad 
  \text{s.t.}  \sum_{i=1}^K n_i \leq n_{\text{total}},~ n_i \in \mathbb{N}_+.\label{eq:budget-allocation}
\end{align}
Using our scaling law estimator $\hat{V}_n(x_i)$, we can first allocate a small pilot budget $n_{\text{pilot}} \ll n_{\text{total}}$ to estimate the scaling laws for each prompt, each using $n_{\text{pilot}}/K$ samples. We can then solve the above optimization problem~\eqref{eq:budget-allocation} by replacing $V_{n_i}(x_i)$ with the estimated values $\hat{V}_{n_i}(x_i)$. Note that under our tail assumption, $\hat{V}_{n_i}(x_i)$ is a concave function in $n_i$. By relaxing the integer constraint, the resulting problem becomes a convex optimization that can be solved efficiently using standard methods. Finally, we round the solution to obtain integer allocations.

\section{Scaling-Law Guided search for test-time alignment}
\label{sec:two-stage}

Having established a method to predict the scaling potential of a single state in Section~\ref{sec:scaling-law-prediction}, we now address the search problem defined in Section~\ref{sec:problem-setup}.
Specifically, we focus on the two-stage generation setting, where the model first generates an intermediate state $s$ from the prompt $x$, and then produces a final response $y$ based on $s$. As defined in Section~\ref{sec:problem-setup},
the state $s$ can be interpreted as a chain of thought, a partial response, or one mathematical step.

We recall the search problem from Section~\ref{sec:problem-setup}, where we are given a total budget $N$ to allocate between generating new intermediate states $s$ from the prompt $x$ and sampling final responses $y$ from these states. 
The objective is to maximize the expected maximum reward of the final responses generated within this budget.
To design an efficient search strategy,
we leverage the scaling law prediction derived in Section \ref{sec:scaling-law-prediction}. We begin by extending the local tail assumption to the global state space.

\begin{assumption}
  \label{assume:gaussian-tail}
  We assume Assumption~\ref{assum:gaussian-tail-model} holds for all candidate states $s$ with uniform constants $\alpha$ and $C_R$. Furthermore, we assume that the tail parameters satisfy $\sigma(s) \in [\sigma_{\min}, \sigma_{\max}]$ almost surely for some constants $0 < \sigma_{\min} \leq \sigma_{\max} < \infty$
  , and the latent means $\mu(s)$ have a finite second moment $\mathbb{E}[\mu(s)^2] \leq M_{\mu}$.
\end{assumption}

Since the intermediate states $s$ are generated by a stochastic LLM policy conditioned on the prompt $x$, their associated reward parameters $\mu(s)$ and $\sigma(s)$ are inherently random variables. Assumption~\ref{assume:gaussian-tail} imposes mild regularity conditions on this induced distribution: the bounds on $\sigma(s)$ ensure non-degenerate variance (preventing vanishing or exploding noise), while the moment condition on $\mu(s)$ ensures the distribution of state qualities remains well-behaved.

\subsection{Scaling-Law Guided search algorithm}

Leveraging the scaling law derived in Section \ref{sec:scaling-law-prediction}, we propose an algorithm that dynamically allocates the budget $N$ between exploration (finding good states) and exploitation (resampling the best state), as outlined in Algorithm~\ref{alg:two_stage_search}.

Specifically, the algorithm uses an estimation budget $m$ across $K$ candidate states to gather tail statistics, allowing it to extrapolate their potential to the full budget scale. Based on these predictions, it identifies the single most promising state and concentrates the remaining budget on this trajectory to maximize the final reward.

It is important to note that the SLG algorithm is highly parallelizable -- the exploration phase can be executed concurrently across all $K$ candidate states, and the exploitation phase involves independent sampling from the selected state. Overall, the SLG algorithm only requires two sequential rounds of interaction, making it practically efficient.

\begin{algorithm}[h]
   \caption{Scaling-Law Guided (SLG) search}
   \label{alg:two_stage_search}
\begin{algorithmic}[1]
   \STATE \textbf{Input:} Prompt $x$, Total Budget $N$, Search Width $K$, Estimation Samples $m$. \looseness=-1
   \STATE \textbf{Output:} The highest reward response $y^*$.
   
   \vspace{0.1cm}
   \STATE Generate $K$ intermediate states $\{s_i\}_{i=1}^K$ from prompt $x$.
   \FOR{each state $s_i$}
       \STATE Generate $m$ responses $\{y_{i,j}\}_{j=1}^m$ from $s_i$ and observe their rewards $\{r_{i,j}\}_{j=1}^m$.
       \STATE Estimate $\hat{V}_{N}(s_i)$ using Algorithm~\ref{alg:tail-mom} with $\{r_{i,j}\}_{j=1}^m$.
   \ENDFOR
   \STATE Identify the optimal state: $\hat{I} = \operatorname*{argmax}_{i \in [K]} \hat{V}_N(s_i)$.
   \STATE Allocate remaining budget $N - K \cdot m$ to state $s_{\hat{I}}$ to generate additional responses.
   
   \STATE \textbf{return} the response $y^*$ with the highest reward among all generated responses.
\end{algorithmic}
\end{algorithm}

\subsection{Theoretical results}
\label{subsec:theoretical_setup}

Recall that the performance of a search algorithm $\A$ is measured by the expected reward defined in
Equation~\eqref{eq:value-function-algorithm}, which we restate here for clarity:
\begin{align*}
    V_{N}(\mathcal{A}) := \E\Big[\max_{y \in \mathcal{Y}_{\text{out}}} r(y,x) \Big]
\end{align*}
where the set $\mathcal{Y}_{\text{out}}$ consists of final responses generated by $\A$ using budget $N$.

To contextualize the performance of our SLG algorithm, we introduce two reference strategies for comparison.

\paragraph{Reference strategies.} 
First, we consider a theoretical performance ceiling that operates with perfect knowledge:

\begin{definition}[Full-information oracle $\mathcal{A}^*$]
    The oracle $\mathcal{A}^*$ first samples a pool of $N$ candidate states $\mathcal{S}_N = \{s_1, \dots, s_N\}$ from the prompt $x$. It then identifies the optimal state $s^{*} = \operatorname*{argmax}_{s \in \mathcal{S}_N} V_N(s)$ using perfect distributional information and dedicates a budget of $N$ to sampling from it.
\end{definition}
In other words, the oracle $\A^*$ chooses the best state among $N$ candidates based on perfect information about their reward distributions, and fully exploits it. It is a natural enhancement of BoN with such additional knowledge. Note that $\mathcal{A}^*$ is unattainable in practice as it assumes zero-cost evaluation of the tail parameters. Replicating this strategy blindly would require a budget of at least $2N$ (generating $N$ candidates plus sampling $N$ times from the best), even ignoring the cost of estimation.

Second, we define the standard Best-of-$N$ strategy as a baseline for measuring performance gain:

\begin{definition}[Best-of-$N$ baseline $\mathcal{A}_{\text{BoN}}$]
  \label{def:bon-baseline}
    The Best-of-$N$ strategy samples $N$ responses directly from the prompt $x$ (equivalent to sampling from a random mixture of states) without any intermediate selection.
\end{definition}

We will use the oracle $\A^*$ to bound the \textit{regret} (Section \ref{subsubsec:regret}) and the baseline $\A_{\text{BoN}}$ to quantify the \textit{advantage} in a concrete toy example (Section \ref{subsubsec:gaussian_examples}).

\subsubsection{Regret Analysis}
\label{subsubsec:regret}

We now establish the theoretical guarantees of the SLG algorithm. We first derive a lower bound on the performance of SLG compared to the full-information oracle $\A^*$ under a slightly decreased budget.

\begin{theorem}
\label{thm:regret_bound}
Under Assumption~\ref{assume:gaussian-tail}, consider the SLG algorithm configured with estimation sample size $m$ and search width $K=\lfloor N/2m \rfloor$, consuming a total budget of $N$.
Then, for $N \geq 4(m+1)$ and $m \geq c_0 \log N$ for some constant $c_0 > 0$, we have: 
\begin{equation*}
    V_{N}(\A_{\text{SLG}}) \geq V_{\lfloor N/2m \rfloor}(\mathcal{A}^*) - c \frac{\log N}{\sqrt{m}}
\end{equation*}
where $c$ is a constant depending on model parameters
 $(\alpha, \sigma_{\min}$, $\sigma_{\max}, C_R, M_{\mu})$.
\end{theorem}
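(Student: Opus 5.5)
Write $K=\lfloor N/2m\rfloor$ and let $s_1,\dots,s_K$ be the states generated by SLG. The idea is to compare both sides on the same realization of these $K$ states. By definition, $V_K(\mathcal{A}^*)$ is the expectation of $\max_{i\le K} V_K(s_i)$ over such a pool. SLG, in contrast, commits to $s_{\hat I}$ and spends at least $N-Km\ge N/2\ge K$ fresh samples on it. The maximum over responses is at least the maximum over those fresh samples, and $V_n(s)$ is nondecreasing in $n$. Conditioning on the states and on the exploration data, and using that the fresh samples are independent of $\hat I$, we get
\begin{equation*}
V_N(\mathcal{A}_{\text{SLG}})\ \ge\ \mathbb{E}\big[V_{N-Km}(s_{\hat I})\big]\ \ge\ \mathbb{E}\big[V_{K}(s_{\hat I})\big].
\end{equation*}
It therefore suffices to show $\mathbb{E}[\max_i V_K(s_i) - V_K(s_{\hat I})]\lesssim \log N/\sqrt m$.

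The selection $\hat I$ is made with respect to $\hat V_N$, not $\hat V_K$. I would fix this by observing that under the Gaussian tail, $\hat V_n(s)=\hat\mu+\hat\sigma E(n)$. The ranking then depends on $n$ only through the weight $E(n)$, which is a mismatch that is hard to remove directly. The cleaner fix is to define the comparator through the same functional: use $V_N$ in the selection analysis, and at the end use the lower bound $V_{N-Km}(s_{\hat I})\ge V_K(s_{\hat I})$ together with
\begin{equation*}
V_{N-Km}(s)=\mu(s)+\sigma(s)E(N-Km)+O\big((1-\alpha)^{N/2}\,\text{poly}\big).
\end{equation*}
This representation comes from the tail decomposition in the proof of Theorem~\ref{theorem:error-bound}.

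Concretely, I would run a standard argmax-perturbation argument on the good event $G$ on which, for all $i\le K$ simultaneously,
\begin{equation*}
|\hat\mu_i-\mu_i|+|\hat\sigma_i-\sigma_i|\le c\sqrt{\log(K/\delta)/m}.
\end{equation*}
This would come from Theorem~\ref{theorem:error-bound}, or rather from its parameter-level version in the appendix, applied conditionally on $s_i$ with a union bound over $K\le N$ states and $\delta=1/N^2$. The constants are uniform thanks to Assumption~\ref{assume:gaussian-tail}, with $\sigma\in[\sigma_{\min},\sigma_{\max}]$. The requirement $m\ge c_0\log N$ is exactly what Theorem~\ref{theorem:error-bound} needs, namely $m\ge c_1\log(1/\delta)$.

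On $G$, write $W_i(n)=\mu_i+\sigma_i E(n)$ and let $i^*$ maximize $W_i(K)$. Since $\hat I$ maximizes $\hat W_i(N)$,
\begin{equation*}
W_{\hat I}(N) \ \ge\ W_{i^*}(N) - 2\epsilon\,(1+E(N)).
\end{equation*}
To transfer this to $n=K$, I would use $W_i(N)-W_i(K)=\sigma_i(E(N)-E(K))$ and the fact that $E(N)-E(K)=O(\log m/\sqrt{\log N})$, since $E(n)\approx\sqrt{2\log n}$ and $N/K\approx 2m$. The $\sigma$-difference then contributes at most $(\sigma_{\max}-\sigma_{\min})\cdot O(\log m/\sqrt{\log N})$. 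That is not small in general, so this route fails as stated. I would instead compare with $W_{\hat I}(N-Km)\ge W_{\hat I}(N)-\sigma_{\hat I}(E(N)-E(N/2))$, where $E(N)-E(N/2)=O(1/\sqrt{\log N})$, and lower bound the oracle side via $\max_i W_i(K)\le \max_i W_i(N)$. This gives
\begin{equation*}
V_N(\mathcal{A}_{\text{SLG}})\ \ge\ \mathbb{E}\Big[\max_i W_i(N)\Big]-O\big(\epsilon\sqrt{\log N}\big)-O\big(1/\sqrt{\log N}\big)\ \ge\ V_K(\mathcal{A}^*)-O\big(\epsilon\sqrt{\log N}\big)-O\big(1/\sqrt{\log N}\big).
\end{equation*}
With $\epsilon\sqrt{\log N}\asymp \log N/\sqrt m$, the first error term matches the claimed rate. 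The residual $O(1/\sqrt{\log N})$ must be absorbed. I expect this to be the delicate point. I would try to absorb it either by keeping the slack from $E(N)$ versus $E(K)$, which gives a gain of $\sigma_{\min}(E(N)-E(K))\gtrsim \log m/\sqrt{\log N}$ and dominates it when $m$ is large, or by handling it as part of the constant.

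Off $G$, which has probability at most $1/N$, I bound the loss by Cauchy--Schwarz. By the moment conditions $C_R$ and $M_\mu$, $\mathbb{E}[\max_i V_K(s_i)^2]\lesssim \text{poly}(N)$ at most polynomially, so the contribution is $O(1/\sqrt N)$ once $\delta$ is chosen as a small enough power of $N$. The exponential tail terms $(1-\alpha/2)^m$ are likewise negligible because $m\ge c_0\log N$. The main obstacle is the mismatch between the selection horizon $N$ and the comparator horizon $K$ described above.
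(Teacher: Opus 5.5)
\textbf{Verdict: the proposal has a genuine gap.} The final residual term is left unresolved, and it is exactly the step that separates your argument from the paper's.

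\textbf{What is sound.} Your coupling on a common pool of $K$ states is correct. So is the reduction $V_N(\mathcal{A}_{\text{SLG}})\ge\mathbb{E}[V_{N-Km}(s_{\hat I})]$. The uniform good event $G$, built from the parameter-level bounds behind Theorem~\ref{theorem:error-bound}, works, and so does the Cauchy--Schwarz control of $G^c$. You are also right that Algorithm~\ref{alg:two_stage_search} ranks states by $\hat V_N$, not by the value at the budget actually spent on the winner.

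\textbf{The gap.} Your second route ends with a residual $\sigma_{\max}\bigl(E(N)-E(N-Km)\bigr)$ of order $1/\sqrt{\log N}$, and you do not remove it. Your fallback of treating it ``as part of the constant'' fails. The hypotheses allow every $m$ with $c_0\log N\le m\le N/4-1$. The target rate $c\log N/\sqrt m$ is $o(1/\sqrt{\log N})$ as soon as $m\gg(\log N)^3$. For example, at $m\approx N/4$ (where $K=2$) it is of order $\log N/\sqrt N$. As written, your argument only gives the theorem in the regime $m\lesssim(\log N)^3$.

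\textbf{How the paper avoids it.} The paper never meets this residual. Its proof takes the selected state to be $\hat s=\arg\max_{s\in S_K}\hat V_{N-Km}(s)$, i.e.\ it tacitly reads the SLG ranking as extrapolation to the remaining budget $N-Km$. Then:
\begin{enumerate}
\item By monotonicity, $\max_{s\in S_K}V_K(s)\le\max_{s\in S_K}V_{N-Km}(s)$.
\item The standard argmax perturbation bounds the loss by $2\max_{s\in S_K}|\hat V_{N-Km}(s)-V_{N-Km}(s)|$.
\item Lemma~\ref{lemma:selection-regret} bounds the expectation of this by $c\sqrt{\log N\log K/m}\le c\log N/\sqrt m$.
\end{enumerate}
In your notation, on $G$ you would directly get $W_{\hat I}(N-Km)\ge\max_i W_i(N-Km)-2\epsilon(1+E(N))$, with no $\sigma$-mismatch term.

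\textbf{How to finish with the literal $\hat V_N$ rule.} Your first idea (keep the slack) does work, but it has to be carried out. Let $j$ be the oracle's state. You need
\begin{equation*}
\sigma_{\min}\bigl(E(N)-E(K)\bigr)\ge\sigma_{\max}\bigl(E(N)-E(N-Km)\bigr)+c(1-\alpha/2)^K .
\end{equation*}
The last term is the body bias from Lemma~\ref{lemma:bias-error-of-VN}, incurred when writing $V_K(s_j)$ as $\mu_j+\sigma_jE(K)$. It is of order one when $K$ is bounded, so it cannot be ignored. The inequality follows from three facts:
\begin{itemize}
\item $E(K)\le\sqrt{2\log K}$;
\item for large $N$, $E(N)\ge\sqrt{2\log N}-(\log\log N+C)/(2\sqrt{2\log N})$;
\item $N/K\ge 2m\ge 2c_0\log N$.
\end{itemize}
Together these give a slack $E(N)-E(K)$ at least of order $\log\log N/\sqrt{\log N}$, and of order $\sqrt{\log N}$ when $K\le\sqrt N$. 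Meanwhile $E(N)-E(N-Km)\le E(N)-E(\lceil N/2\rceil)=O(1/\sqrt{\log N})$. When $K>\sqrt N$, the bias term satisfies $(1-\alpha/2)^K\le(1-\alpha/2)^{\sqrt N}$. So the inequality holds for $N\ge N_0$, and smaller $N$ can be absorbed into $c$. Without this step, or the paper's change of horizon, the proof is incomplete.
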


\noindent See Appendix~\ref{appendix:proof-regret-bound} for the proof of Theorem~\ref{thm:regret_bound}.

\paragraph{Vanishing Regret.}
By setting $m \asymp (\log N)^\alpha$ with $\alpha > 2$, SLG achieves the same performance as $\A^*$ with a slightly smaller budget $\mathcal{O}(N/(\log N)^\alpha)$, while the estimation error term vanishes as $\mathcal{O}( (\log N)^{1 - \alpha/2} )$.
Under practical settings, this budget reduction is negligible. 
For light-tailed rewards, the oracle $\A^*$ typically satisfies a scaling law of the form $V_N(\A^*) \approx C (\log N)^\gamma$ for some $\gamma \in (0,1)$.\footnote{For instance, the BoN baseline satisfies this with $\gamma=1/2$, as shown in Section~\ref{sec:scaling-law-prediction}.} 
Consequently, the performance loss due to the logarithmic budget reduction scales as:
\begin{align*}
  V_{N}(\A^*) - V_{\frac{N}{ \log^\alpha N}}(\A^*) \asymp \frac{\log \log N}{\log^{1 - \gamma} N},
\end{align*}
which diminishes as $N$ increases. This implies that the gap between SLG and $\A^*$ operating at the \textit{same} budget $N$ also vanishes. Specifically, by choosing $m \asymp (\log N)^3 $, we have:
\begin{equation}
  \label{eq:vanishing-regret}
  V_{N}(\A^*) - V_{N}(\A_{\text{SLG}}) \leq \mathcal{O}\left( \frac{\log \log N}{\log^{1 - \gamma} N} \right).
\end{equation}
We provide the formal statement and proof of this result under mild regularity conditions in Appendix~\ref{appendix:formal_regret_analysis}.

\subsubsection{Comparison to Best-of-N baseline}\label{subsubsec:gaussian_examples}

To explicitly quantify the performance gain of the SLG algorithm, we apply our general framework to a concrete Gaussian example.
This instantiation allows us to derive sharp, non-asymptotic bounds on the advantage gap over the Best-of-N baseline (Definition \ref{def:bon-baseline}).

Suppose the reward distributions for all intermediate states
are Gaussian $p_{s}(r) = \phi(r; \mu(s), \sigma_1)$ with a fixed noise level $\sigma_1 > 0$, and the latent mean parameters are drawn from a Gaussian distribution $\mu(s) \sim \mathcal{N}(\mu_0, \sigma_0^2)$.
We denote the noise ratio as $t = \frac{\sigma_1}{\sigma_0}$.

In this setting, the reward distribution of the prompt simplifies to a Gaussian mixture $\mathcal{N}(\mu_0, \sigma_0^2 + \sigma_1^2)$. And we can explicitly quantify the advantage of our SLG algorithm over the standard Best-of-N baseline:

\begin{proposition}
  \label{prop:homogeneous_gap}
  Consider the setting defined above. Let $t = \frac{\sigma_1}{\sigma_0}$. We set
  the estimation sample size $m = t^2\log N$ and the number of candidates $K = \frac{N}{(1+t)m}$. \footnote{We ignore integer constraints on $m$ and $K$ for clarity of exposition.} For a sufficiently large sampling budget $N$, 
  the expected reward improvement of the SLG algorithm over the Best-of-N baseline is lower-bounded by:
  \begin{equation*}
    V_N(\A_{\text{SLG}}) - V_N(\A_{\text{BoN}})\geq \frac{\sqrt{2}t}{4(t+1)} \sigma_0 \sqrt{\log N}.
  \end{equation*}
\end{proposition}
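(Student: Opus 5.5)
We compare both algorithms through Gaussian extreme-value asymptotics: lower-bound the SLG reward by what it gets from exploiting the selected state, and upper-bound the BoN reward directly.

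\textbf{BoN side.} Under $\A_{\text{BoN}}$ the $N$ rewards are i.i.d.\ $\mathcal{N}(\mu_0,\sigma_0^2+\sigma_1^2)$. Hence
\[
V_N(\A_{\text{BoN}})=\mu_0+\sigma_0\sqrt{1+t^2}\,E(N)\le \mu_0+\sigma_0\sqrt{1+t^2}\sqrt{2\log N}.
\]
This uses the standard bound $E(N)\le\sqrt{2\log N}$.

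\textbf{SLG side.} Throw away every sample except the exploitation samples from the chosen state $s_{\hat I}$. The exploitation budget is $N-Km=N\,t/(1+t)$. Conditional on the selection, $V_N(\A_{\text{SLG}})\ge \E[\mu(s_{\hat I})]+\sigma_1 E(Nt/(1+t))$. Since all states share the same $\sigma_1$, ranking by $\hat V_N(s_i)=\hat\mu_i+\hat\sigma_i E(N)$ is essentially ranking by $\hat\mu_i$ up to estimation error.

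We will actually analyze a cleaner surrogate. In this homogeneous model the sample mean $\bar r_i\sim\mathcal{N}(\mu_i,\sigma_1^2/m)$ is a sufficient-style proxy for $\mu_i$. Comparing to it, or invoking Theorem~\ref{theorem:error-bound}, gives an error in $\hat\mu_i$ of order $\sigma_1\sqrt{\log N/m}=\sigma_0$ with $m=t^2\log N$. We will argue
\[
\E[\mu(s_{\hat I})]\ge \mu_0+\sigma_0\sqrt{2\log K}\,(1-o(1))-O(\sigma_0).
\]
\begin{itemize}
\item The best of $K$ i.i.d.\ means is $\mu_0+\sigma_0\sqrt{2\log K}(1+o(1))$.
\item Selecting on noisy estimates loses only an additive amount controlled by $\max_i|\hat V_N(s_i)-V_N(s_i)|$. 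By a union bound over $K$ states, with $\delta=1/N$, this is $O(\sigma_0+\sigma_1 E(N)\sqrt{\log N/m})$.
\end{itemize}
Note $\log K=\log N-\log((1+t)t^2\log N)=\log N(1-o(1))$.

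\textbf{Combining.}
\[
V_N(\A_{\text{SLG}})\gtrsim \mu_0+(\sigma_0+\sigma_1)\sqrt{2\log N}-O(\sigma_0\sqrt{\log N}\,\epsilon).
\]
The gain over BoN is then governed by $\sigma_0(1+t-\sqrt{1+t^2})\sqrt{2\log N}$. Because $1+t-\sqrt{1+t^2}=2t/(1+t+\sqrt{1+t^2})\ge 2t/(2(1+t))=t/(1+t)$, the leading gain is at least $\frac{t}{1+t}\sigma_0\sqrt{2\log N}$. This leaves a factor-of-four slack relative to the target constant $\frac{\sqrt2 t}{4(t+1)}=\frac{t}{1+t}\cdot\frac{\sqrt2}{4}$. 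That slack is what absorbs the estimation loss.

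\textbf{Main obstacle.} The crux is the estimation error in the selection step. The naive bound from Theorem~\ref{theorem:error-bound}, after multiplying by $E(N)\sim\sqrt{2\log N}$ through $\hat\sigma$, is of order $\sigma_1\sqrt{\log N}\cdot\sqrt{\log N\log(NK)/m}$. This is too big with $m=t^2\log N$.

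\begin{itemize}
\item \emph{First attempt:} exploit the homogeneity of $\sigma_1$. Errors in $\hat\sigma_i$ then only perturb the ranking through the fluctuation $(\hat\sigma_i-\sigma_1)E(N)$. Bound this fluctuation by $c\,\sigma_1\sqrt{\log N}\sqrt{\log N/m}=c\,\sigma_0\sqrt{\log N}$ with an explicit small $c$. Choose constants in the concentration argument so that the total loss is at most $(1-\sqrt2/4)\frac{t}{1+t}\sigma_0\sqrt{2\log N}$.
\item \emph{Fallback, if explicit constants fail:} compare $\E[\mu(s_{\hat I})]$ directly via a Gaussian selection computation, namely $\E[\mu\mid \text{argmax of } \mu+\text{noise}]$. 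For Gaussian signal plus noise of size $\tau$, this equals $\mu_0+\frac{\sigma_0^2}{\sqrt{\sigma_0^2+\tau^2}}\sqrt{2\log K}$. Taking $\tau\lesssim\sigma_0$ yields the required constant fraction of $\sigma_0\sqrt{2\log K}$.
\end{itemize}
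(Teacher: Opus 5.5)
Your outer decomposition matches the paper's: $V_N(\A_{\text{BoN}})=\mu_0+\sigma_0\sqrt{1+t^2}\,E(N)$, $V_N(\A_{\text{SLG}})\ge\E[\mu(s_{\hat I})]+\sigma_1E\big(Nt/(1+t)\big)$, and the margin comes from $1+t-\sqrt{1+t^2}\ge t/(1+t)$. The gap is in the selection step, where neither of your two routes closes as written.

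\textbf{First attempt.} This route cannot work. For the literal tail estimator, $(\hat\sigma_i-\sigma_1)E(N)$ has per-candidate standard deviation of order $\kappa_\alpha\sigma_1\sqrt{2\log N}/\sqrt{m}=\sqrt{2}\,\kappa_\alpha\sigma_0$. Here $\kappa_\alpha$ is fixed by $\alpha$ and the moment inversion, and does not depend on $t$. You are ranking $K$ candidates ($\log K\sim\log N$) whose signal has spread $\sigma_0$, under noise of comparable size. That loses an amount of order $\sigma_0\sqrt{\log N}$, with a constant that does not shrink as $t\to 0$. Your slack, $\frac{3t}{4(1+t)}\sigma_0\sqrt{2\log N}$ (the target equals $\frac{t}{4(1+t)}\sigma_0\sqrt{2\log N}$), does shrink. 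The concentration constants are not yours to choose.

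\textbf{Fallback.} This misstates what is needed. Gaussian noise of size $\tau$ gives $\E[\mu(s_{\hat I})]-\mu_0=\sigma_0E(K)/\sqrt{1+\tau^2/\sigma_0^2}$, and $\tau\asymp\sigma_0$ is not enough. For example, $\tau=\sigma_0$ makes the leading coefficient $1/\sqrt{2}+t-\sqrt{1+t^2}$ negative for all $t<1/(2\sqrt{2})$. You need roughly $\tau^2/\sigma_0^2\lesssim t/(1+t)$.

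Both problems have the same source. You used the union-bound maximum error $\sigma_1\sqrt{\log N/m}\asymp\sigma_0$ as if it were the per-candidate noise. You also kept the $\hat\sigma_iE(N)$ term inside what you called a surrogate.

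\textbf{How to close it.} Commit to the surrogate, in which $\hat\sigma$ plays no role. This is exactly what the paper does. Since all states share $\sigma_1$, it takes $\hat s=\arg\max_s\hat\mu(s)$ with $\hat\mu(s)\mid\mu(s)\sim\mathcal{N}(\mu(s),\sigma_1^2/m)$. The per-candidate noise is then $\tau=\sigma_1/\sqrt{m}=\sigma_0/\sqrt{\log N}$, and the Gaussian posterior-mean identity yields exactly $\E[\mu(\hat s)]=\mu_0+\sigma_0E(K)/\sqrt{1+1/\log N}$. For $\log N\ge 1+1/t$ the prefactor is at least $1-\frac{t}{2(t+1)}$. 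This leaves a leading margin of $\frac{t}{2(t+1)}\sigma_0\sqrt{2\log N}$, half of which absorbs the $O(\log\log N/\sqrt{\log N})$ errors from replacing $E(\cdot)$ by $\sqrt{2\log(\cdot)}$.

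Your own union-bound idea also finishes the surrogate, slightly more elementarily. You have $\mu(\hat s)\ge\max_i\mu_i-2\max_i|\bar r_i-\mu_i|$ and $\E\max_{i\le K}|\bar r_i-\mu_i|\le\frac{\sigma_1}{\sqrt{m}}\sqrt{2\log(2K)}=O(\sigma_0)$. This loss is negligible against $\sigma_0\sqrt{\log N}$. It needs no conditioning computation, only $\E\max_i\mu_i=\mu_0+\sigma_0E(K)$.

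Either way, you are analyzing an idealized selection rule. The paper's proof simply asserts $\arg\max_s\hat V_N(s)=\arg\max_s\hat\mu(s)$ and never confronts the $\hat\sigma_iE(N)$ fluctuations of Algorithm~\ref{alg:tail-mom}. So the obstacle you identified is genuine, and the paper sidesteps it rather than overcoming it.
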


\noindent See Appendix~\ref{appendix:proof-homogeneous-gap} for the proof of Proposition~\ref{prop:homogeneous_gap}.

Proposition~\ref{prop:homogeneous_gap} shows that SLG strictly outperforms Best-of-$N$ by a margin scaling with $\sqrt{\log N}$. We can translate this additive reward gain into a \textit{sample efficiency} gain, demonstrating that SLG effectively amplifies the compute budget.

\begin{corollary}
\label{cor:compute_amplification}
Under the conditions of Proposition~\ref{prop:homogeneous_gap}. Let $t = \frac{\sigma_1}{\sigma_0}$. The performance of SLG with budget $N$ exceeds that of a Best-of-$N$ baseline operating with a polynomially larger budget $N^{1 + \gamma}$:
\begin{equation*}
    V_N(\mathcal{A}_{\text{SLG}}) \geq V_{N^{1 + \gamma}}(\mathcal{A}_{\text{BoN}}),
\end{equation*}
where we define $\gamma := \frac{\sqrt{2}t}{4(t+1)\sqrt{1+t^2}} > 0$.
\end{corollary}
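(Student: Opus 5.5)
The plan is to turn the additive gain of Proposition~\ref{prop:homogeneous_gap} into a budget multiplier, using the exact Best-of-$N$ scaling law in the Gaussian model.

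\textbf{Step 1: closed form for BoN.} Under BoN the reward from the prompt is $\mathcal{N}(\mu_0,\sigma_0^2+\sigma_1^2)=\mathcal{N}(\mu_0,\sigma_0^2(1+t^2))$. Hence
\[
V_M(\A_{\text{BoN}})=\mu_0+\sigma_0\sqrt{1+t^2}\,E(M),
\]
where $E(M)$ is the expected maximum of $M$ i.i.d.\ standard normals. The corollary then reduces to the inequality
\[
E(N^{1+\gamma})-E(N)\le \frac{V_N(\A_{\text{SLG}})-V_N(\A_{\text{BoN}})}{\sigma_0\sqrt{1+t^2}} .
\]
By Proposition~\ref{prop:homogeneous_gap}, for large $N$ it suffices to show
\[
E(N^{1+\gamma})-E(N)\le \frac{\sqrt2\,t}{4(t+1)\sqrt{1+t^2}}\sqrt{\log N}=\gamma\sqrt{\log N}.
\]

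\textbf{Step 2: increment of $E$.} This is the main step. The crude asymptotic $E(M)=\sqrt{2\log M}\,(1+o(1))$ gives
\[
E(N^{1+\gamma})-E(N)\approx(\sqrt{1+\gamma}-1)\sqrt{2\log N}\le\frac{\gamma}{\sqrt2}\sqrt{\log N},
\]
using $\sqrt{1+\gamma}-1\le\gamma/2$. This is below $\gamma\sqrt{\log N}$ by the constant factor $1/\sqrt2$. However, the $o(1)$ corrections are of order $\log\log N/\sqrt{\log N}$ in absolute terms, so I need a two-sided estimate whose error is $o(\sqrt{\log N})$ uniformly.

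I would use the standard bounds
\[
\sqrt{2\log M}-\frac{c\log\log M}{\sqrt{\log M}}\le E(M)\le\sqrt{2\log M}.
\]
The upper bound comes from Jensen with the MGF. The lower bound comes from the tail bound $\Pr(\max<u)=(1-\bar\Phi(u))^M$ at $u=\sqrt{2\log M-\log\log M-C}$, together with a crude bound on the negative part of the maximum.

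These give
\[
E(N^{1+\gamma})-E(N)\le\sqrt{2(1+\gamma)\log N}-\sqrt{2\log N}+O\!\left(\frac{\log\log N}{\sqrt{\log N}}\right)\le\frac{\gamma}{\sqrt2}\sqrt{\log N}+o(\sqrt{\log N}).
\]
For sufficiently large $N$ this is at most $\gamma\sqrt{\log N}$, since $\gamma/\sqrt2<\gamma$. A monotonicity or concavity argument in the spirit of the Section~\ref{sec:scaling-law-prediction} concavity remark would also do, but the explicit bounds are cleaner.

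\textbf{Step 3: conclude.} Combine Steps 1 and 2 with Proposition~\ref{prop:homogeneous_gap}, taking $N$ large enough that both that proposition and the error absorption in Step 2 hold. The slack factor $\sqrt2$ absorbs the lower-order terms, which is presumably why $\gamma$ carries the factor $\sqrt2/4$.
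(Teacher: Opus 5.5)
Your proof is correct and shares the paper's skeleton. Both use $V_M(\A_{\text{BoN}})=\mu_0+\sigma_0\sqrt{1+t^2}\,E(M)$ and Proposition~\ref{prop:homogeneous_gap} to reduce the claim to $E(N^{1+\gamma})-E(N)\le\gamma\sqrt{\log N}$. Where you differ is the increment estimate.

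The paper invokes the non-asymptotic inequality $E(n)-E(m)\le 2(\sqrt{\log n}-\sqrt{\log m})$ for $n>m\ge 3$, which it asserts without proof. Combined with $\sqrt{1+\gamma}-1\le\gamma/2$, this gives exactly $\gamma\sqrt{\log N}$. There is no slack: $\gamma$ is tuned so the last line is an equality, and no largeness of $N$ beyond that of the proposition is needed.

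You instead sandwich $E(M)$ between $\sqrt{2\log M}$ (from the MGF) and $\sqrt{2\log M}-O(\log\log M/\sqrt{\log M})$. The lower bound is the same estimate the paper uses inside the proof of Proposition~\ref{prop:homogeneous_gap}. This gives the sharper leading term $\frac{\gamma}{\sqrt2}\sqrt{\log N}$, at the price of an extra ``$N$ large enough'' to absorb the error. That price is harmless, since the proposition is already asymptotic.

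Your route rests on more standard facts; in fact only $E(M)=(1+o(1))\sqrt{2\log M}$ is needed. It also shows the stated exponent is not tight: the same argument gives the conclusion for any exponent strictly below $\sqrt2\gamma+\gamma^2/2$.

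One correction to your closing remark. In the paper, the $\sqrt2$ in $\gamma$ is not slack reserved for error terms; it simply comes from the $\sqrt{2\log N}$ in the proposition's gap. The paper's increment constant $2$, versus the true asymptotic rate $\sqrt2$, consumes exactly the factor that your argument leaves over.
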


\noindent See Appendix~\ref{appendix:proof-compute-amplification} for the proof of Corollary~\ref{cor:compute_amplification}.

Corollary~\ref{cor:compute_amplification} implies that SLG search yields a polynomial compute amplification over the Best-of-$N$ baseline. 
The amplification exponent $\gamma$ is a function of the variance ratio $t = \sigma_1/\sigma_0$. Intuitively, if the variance of either the intermediate state generation ($x \to s$) or the final response generation ($s \to y$) vanishes, 
the two-stage structure effectively collapses into a single flat layer, causing SLG to degrade to the Best-of-$N$ baseline.

It is also worth noting that though the scaling potential of a state is mostly determined by its reward variance $\sigma(s)$ (as shown in Section~\ref{sec:scaling-law-prediction}), \Cref{prop:homogeneous_gap} and \Cref{cor:compute_amplification} show that scaling improvements can still be achieved even when all states share the same variance level. This shows the intrinsic value of adaptive state selection based on scaling laws, beyond merely identifying high-variance states.

\section{Experiments}
\label{sec:experiments}

\begin{figure*}[t]
    \centering
    \captionsetup[subfigure]{skip=2pt} 

    \begin{subfigure}[b]{0.325\textwidth}
        \includegraphics[width=\linewidth]{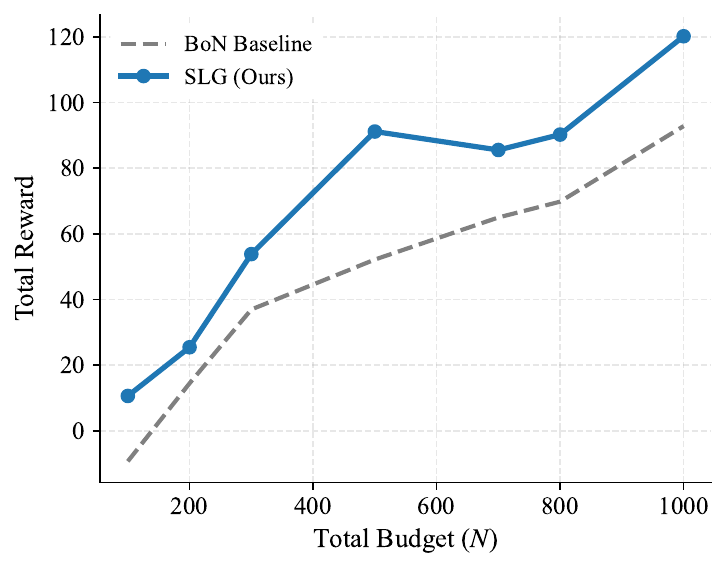}
        \caption{AIME24 (1B)}
        \label{fig:res_1b_aime}
    \end{subfigure}
    \hfill
    \begin{subfigure}[b]{0.325\textwidth}
        \includegraphics[width=\linewidth]{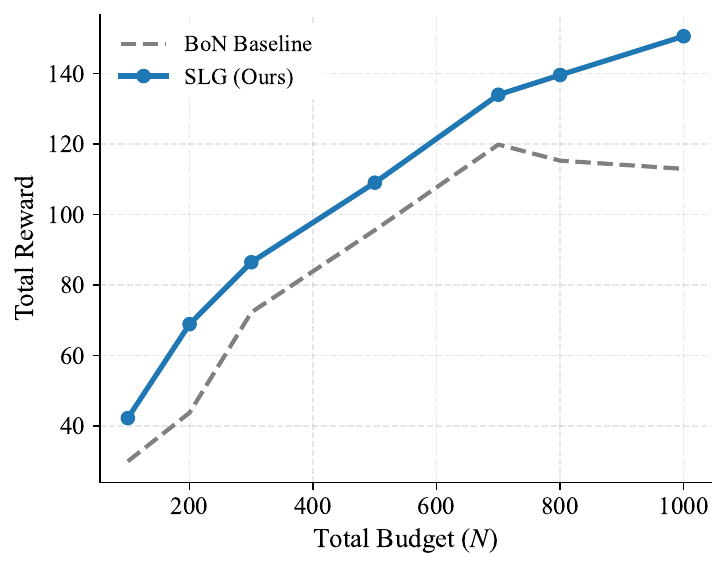}
        \caption{AMC23 (1B)}
        \label{fig:res_1b_amc}
    \end{subfigure}
    \hfill
    \begin{subfigure}[b]{0.325\textwidth}
        \includegraphics[width=\linewidth]{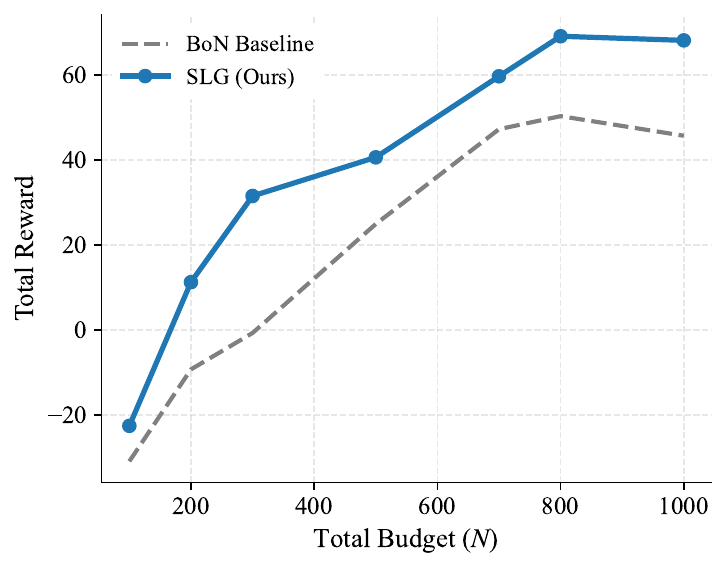}
        \caption{AIME25 (1B)}
        \label{fig:res_1b_aime25}
    \end{subfigure}

    \begin{subfigure}[b]{0.325\textwidth}
        \includegraphics[width=\linewidth]{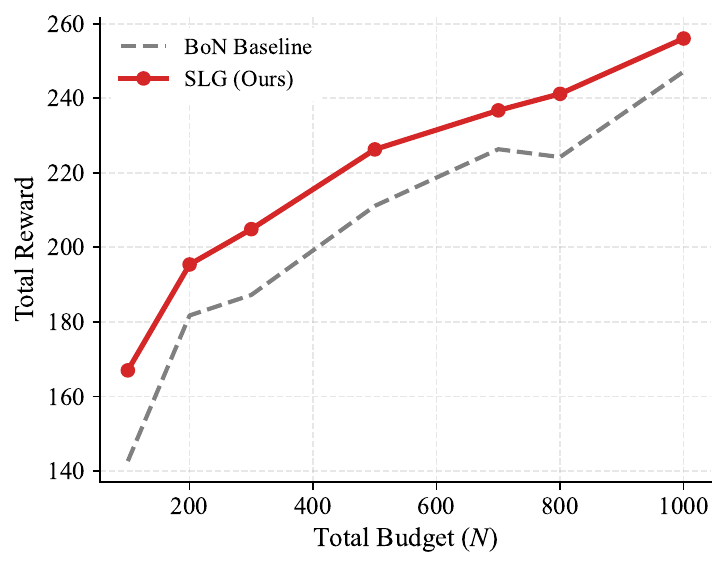}
        \caption{AIME24 (7B)}
        \label{fig:res_7b_aime}
    \end{subfigure}
    \hfill
    \begin{subfigure}[b]{0.325\textwidth}
        \includegraphics[width=\linewidth]{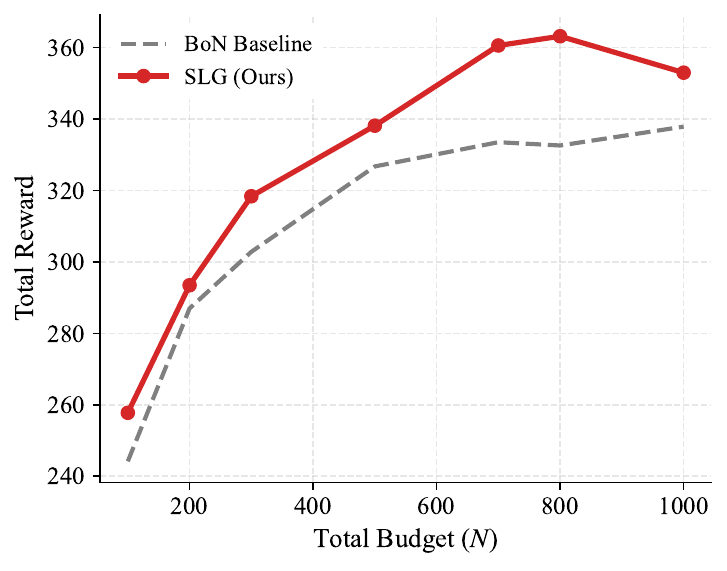}
        \caption{AMC23 (7B)}
        \label{fig:res_7b_amc}
    \end{subfigure}
    \hfill
    \begin{subfigure}[b]{0.325\textwidth}
        \includegraphics[width=\linewidth]{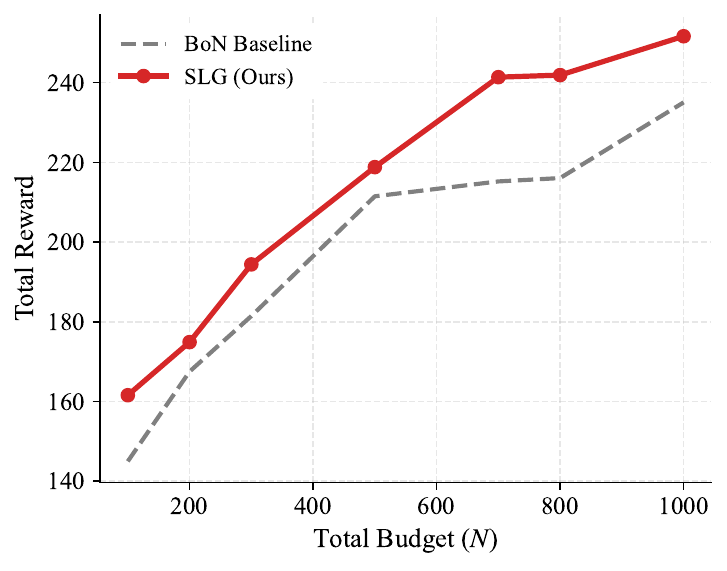}
        \caption{AIME25 (7B)}
        \label{fig:res_7b_aime25}
    \end{subfigure}
    
    \caption{\textbf{Main Results on Test-Time Scaling.} We compare SLG (Solid) vs. BoN (Dashed) across three datasets (\textbf{Columns}) and two model scales (\textbf{Rows}). 
    Here, 1B and 7B denote \texttt{Llama-3.2-1B-Instruct} and \texttt{Qwen2.5-7B-Instruct}.
    SLG demonstrates superior efficiency across all configurations.}
    \label{fig:main_results}
\end{figure*}

In this section, we empirically validate the proposed framework. Our evaluation focuses on two key aspects: 
(1) \textbf{Search efficiency}, assessing if SLG outperforms standard Best-of-$N$ strategies under identical compute budgets; and 
(2) \textbf{Robustness verification}, confirming the stability of SLG across various parameter settings and model choices.

\subsection{Experimental setup}
\label{subsec:setup}

\begin{figure*}[t]
    \centering
    \begin{subfigure}[b]{0.3\textwidth}
        \includegraphics[width=\linewidth]{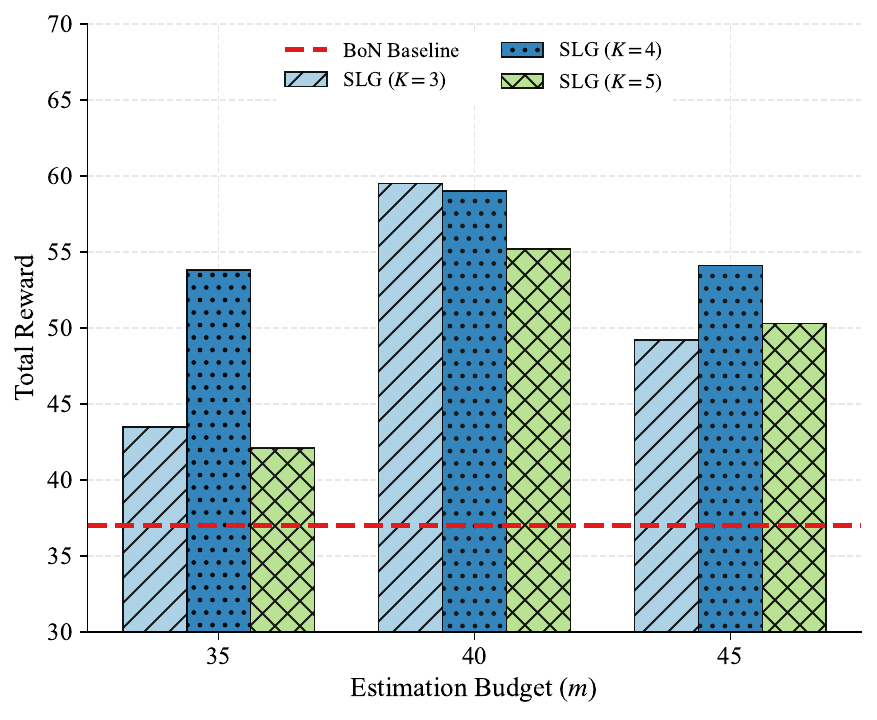}
        \caption{Search Parameters ($K, m$)}
        \label{fig:ablation_km}
    \end{subfigure}
    \hfill
    \begin{subfigure}[b]{0.3\textwidth}
        \includegraphics[width=\linewidth]{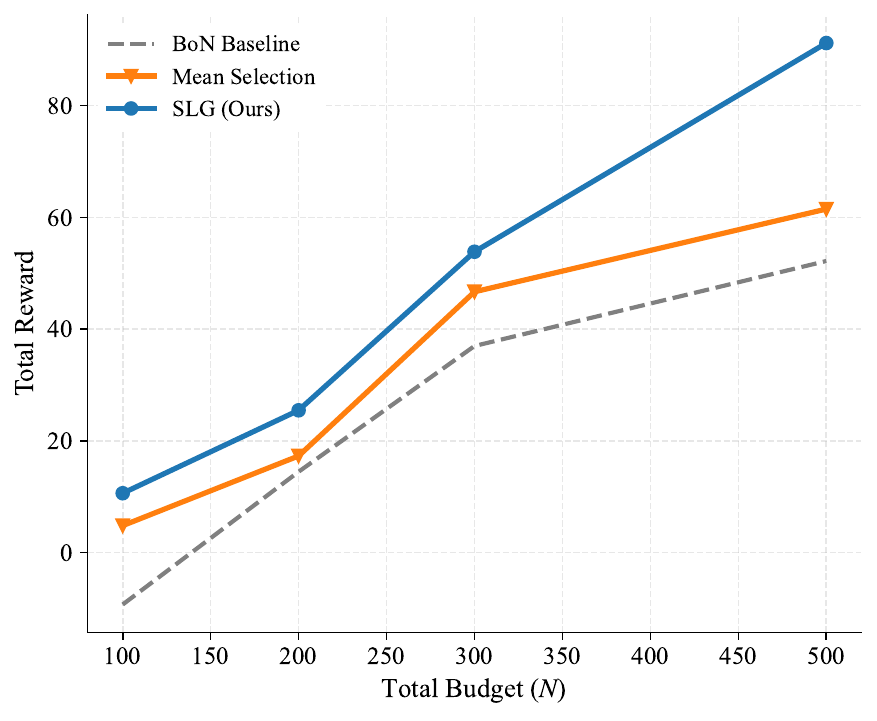}
        \caption{Tail Awareness vs. Mean}
        \label{fig:ablation_tail}
    \end{subfigure}
    \hfill
    \begin{subfigure}[b]{0.3\textwidth}
        \includegraphics[width=\linewidth]{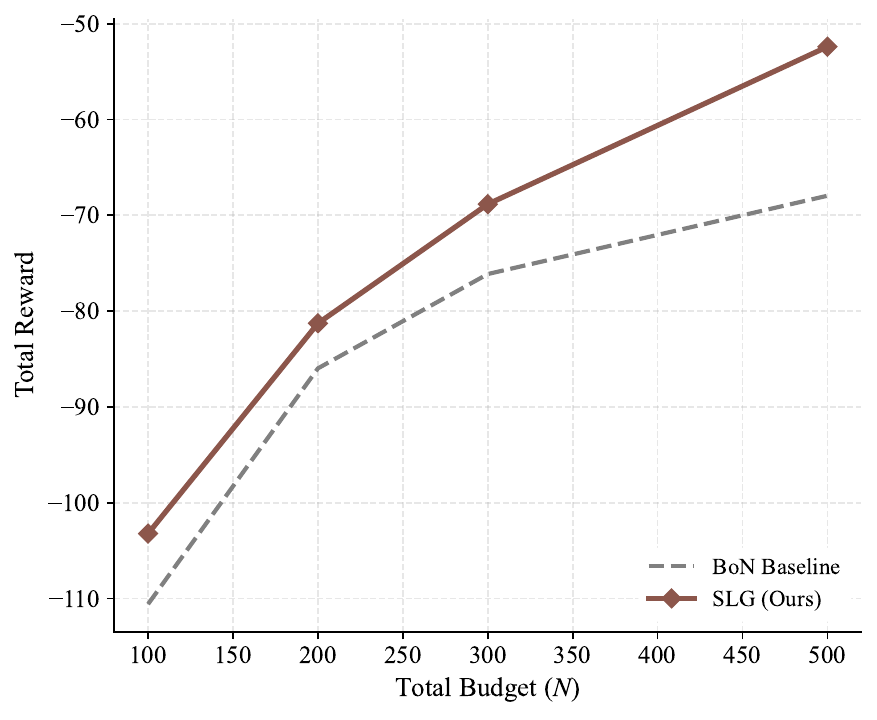}
        \caption{Reward Model Robustness}
        \label{fig:ablation_rm}
    \end{subfigure}
    \caption{\textbf{Ablation Studies.} (a) Impact of search width $K$ and estimation budget $m$, showing a concave trend with a clear optimum. (b) Comparison against a mean-based selection baseline, confirming the necessity of tail-targeting. (c) Verification using the Qwen-based reward model, demonstrating robustness to the feedback signal.}
    \label{fig:ablation_studies}
\end{figure*}

Following recent work on test-time scaling in mathematical reasoning \citep{guha2025openthoughts,agarwal2025first,yu2025limits,wang2025m1}, we evaluate our approach on three verifiable benchmarks targeting advanced problem-solving capabilities:
\textbf{AMC 2023}, a 40-problem subset of the 2023 American Mathematics Competitions assessing non-routine skills in algebra, geometry, and combinatorics \citep{maa2023amc}; 
\textbf{AIME 2024}, consisting of the 30 problems from the 2024 AIME I/II competitions \citep{maa2024aime}; 
and \textbf{AIME 2025}, the analogous set of 30 problems from the most recent 2025 cycle \citep{maa2025aime}.

To provide the feedback signal, we utilize \texttt{Skywork-Reward-V2-Llama-3.1-8B} \citep{liu2025skywork}, a state-of-the-art outcome reward model (ORM) that maps candidate solutions to a scalar quality score $r \in \mathbb{R}$, serving as the oracle for our search optimization.

\paragraph{Baselines and metrics.} 
We primarily compare our Scaling-Law Guided (SLG) search against the standard \textbf{Best-of-$N$ (BoN)} baseline (Definition~\ref{def:bon-baseline}). 
BoN allocates the entire computational budget $N$ to generating independent responses and selecting the candidate with the highest reward. 
To assess performance, we employ the \textbf{total reward} as our primary metric, defined as the accumulated sum of maximum rewards across the test set: 
\[
\mathcal{R}(\mathcal{A}, N) = \sum_{x \in \mathcal{X}} \max_{y \in S_{\mathcal{A}}(x; N)} r(y, x),
\]
where $S_{\mathcal{A}}(x; N)$ denotes the set of candidate responses generated by algorithm $\mathcal{A}$ for prompt $x$ under budget $N$, and $r(y, x)$ is the reward signal provided by the ORM.
This cumulative metric directly measures the aggregate utility under a fixed compute budget, quantifying how efficiently different test-time strategies exploit the reward signal.

\paragraph{Implementation details.} 

We specify the implementation details of our SLG algorithm below. \footnote{Our code is publicly available at \url{https://github.com/PotatoJnny/Scaling-Law-Guided-search}.}

\begin{itemize}
  \item \textbf{State Definition.} We define a state $s$ as the first-100-token partial response generated from the prompt $x$. 
  \item \textbf{Hyperparameters.} We fix the tail fraction $\alpha = 0.2$. Guided by Equation~\ref{eq:vanishing-regret}, we dynamically scale parameters with the budget $N$, setting the estimation budget $m(N) \approx \frac{1}{5}(\ln N)^3$ and search width $K(N) \approx \frac{N}{2m(N)}$. See Appendix~\ref{appendix:experimental_details} for the exact schedule.
\end{itemize}

For the small budget setting where $K \leq m $, we implement a greedy pilot phase to initialize high-quality candidates: we first sample $m$ trajectories $(x \to s \to y)$ and select the top $K$ states $s$ based on their initial outcome rewards for the subsequent scaling law estimation. 
The configuration above, while not strictly optimal for every specific budget, yields consistent gains, indicating that our framework requires minimal hyperparameter tuning.

\subsection{Main Results}
\label{subsec:main_results}

We evaluate SLG against the Best-of-$N$ (BoN) (Figure~\ref{fig:main_results}).

\paragraph{Consistent Superiority.}
SLG strictly outperforms BoN across all budgets and experimental settings. This advantage is robust to scaling: on AIME 2024, SLG yields substantial margins for both the 1B model ($120.2$ vs $92.9$ at $N=1000$) and the stronger 7B model ($256.0$ vs $247.1$), confirming that scaling-law guidance provides additive gains even for capable reasoners. \looseness=-1

\paragraph{Resource Amplification.}
Consistent with Corollary~\ref{cor:compute_amplification}, SLG acts as a compute multiplier. Notably, on AIME 2024 with Qwen-7B, SLG matches the peak performance of BoN ($N=1000$, score $337.9$) using only \textbf{half the budget} ($N=500$, score $338.1$). Similarly, for Llama-1B, SLG ($N=500$) surpasses BoN at $N=700$ ($133.9$ vs $119.8$). This validates that tail-guided search yields superior sample efficiency compared to naive scaling. \looseness=-1

\subsection{Ablation Studies}
\label{subsec:ablation}

We validate core components using \texttt{Llama-3.2-1B-Instruct} on AIME 2024 dataset.

\paragraph{Impact of Search Parameters ($K$ vs. $m$).}
Varying the search width $K$ and pilot budget $m$ at a fixed $N=300$ (Figure~\ref{fig:ablation_km}) demonstrates SLG's robustness; all configurations outperform the baseline. We observe a distinct performance peak at $K=3, m=40$ (59.5 vs. 36.9), which effectively balances estimation cost against exploitation resources. This confirms that identifying the ``sweet spot'' between breadth (exploration) and depth (exploitation) is critical for maximizing compute gain. \looseness=-1

\paragraph{Importance of Tail Awareness.}
Figure~\ref{fig:ablation_tail} compares SLG against a \textbf{Mean-Selection} baseline. SLG consistently dominates, particularly at $N=500$ where it achieves a score of \textbf{91.2} versus \textbf{61.5} for the mean-based approach. This empirically confirms that for reasoning tasks requiring a single best response, explicitly modeling the reward tail is a superior proxy to maximizing expected value. \looseness=-1

\paragraph{Robustness to Reward Models.}
Switching the feedback signal to \texttt{Skywork-Reward-V2-Qwen-8B} (Figure~\ref{fig:ablation_rm}) preserves SLG's significant advantage over the baseline. This indicates that our efficiency gains are intrinsic to the search dynamics and not artifacts of a specific reward model architecture. \looseness=-1

\section{Discussion}
\label{sec:discussion}

In this work, we established a principled framework for test-time scaling that bridges the gap between statistical prediction and algorithmic search. 
We first introduced a methodology to predict the scaling laws of Best-of-$N$ performance via tail extrapolation, allowing us to estimate the ``scaling potential'' of intermediate states without exhaustive sampling. 
Building on this predictive capability, we designed the Scaling-Law Guided (SLG) search, an algorithm that utilizes these forecasts to dynamically allocate compute resources. 
Both our theoretical analysis and empirical results confirm that this tail-guided approach acts as a resource amplifier, consistently achieving higher rewards than the Best-of-$N$ baseline under identical compute budgets.

We view this work as a first step toward a rigorous theory of test-time scaling. Promising directions for future research include: (1) generalizing our framework to recursive settings like MCTS to guide pruning; (2) applying these principles to broader agentic workflows such as code generation; and (3) incorporating the error structure of imperfect reward models into scaling-law predictions.

\newpage
\bibliographystyle{plainnat}  
\bibliography{mybib}
\appendix

\section{Gaussian Tail Check of Reward Distributions}
\label{appendix:gaussian-tail-check}

To rigorously validate the Gaussian Tail Assumption (Assumption~\ref{assum:gaussian-tail-model}), we provide a comprehensive Q-Q (Quantile-Quantile) plot analysis comparing the global distribution fit versus the tail-specific fit.

Figure~\ref{fig:app-qq-plots} presents the diagnostic plots for the two representative examples discussed in Section~\ref{sec:scaling-law-prediction}, and Table~\ref{tab:r2-comparison} summarizes the corresponding goodness-of-fit metrics ($R^2$ values).
\begin{itemize}
    \item \textbf{Top Row (Global Fit):} We fit a standard Gaussian distribution to the entire dataset ($N=5000$). While the fit is generally reasonable, deviations are observable in the extreme quantiles (visualized as data points diverging from the black dashed line), indicating that the body distribution is not perfectly Gaussian, which is also reflected in the lower $R^2$ values (e.g., $0.9707$ for Example 2).
    \item \textbf{Bottom Row (Tail Fit):} We fit a Truncated Gaussian distribution specifically to the top 20\% of the data. The data points align almost perfectly with the theoretical line, yielding significantly higher $R^2$ values.
\end{itemize}

This comparison confirms that while the global reward distribution may be complex, the asymptotic tail behavior—which governs the scaling laws—is strictly well-behaved and accurately modeled by a Truncated Gaussian.

\begin{figure}[h]
    \centering
    \includegraphics[width=0.95\textwidth]{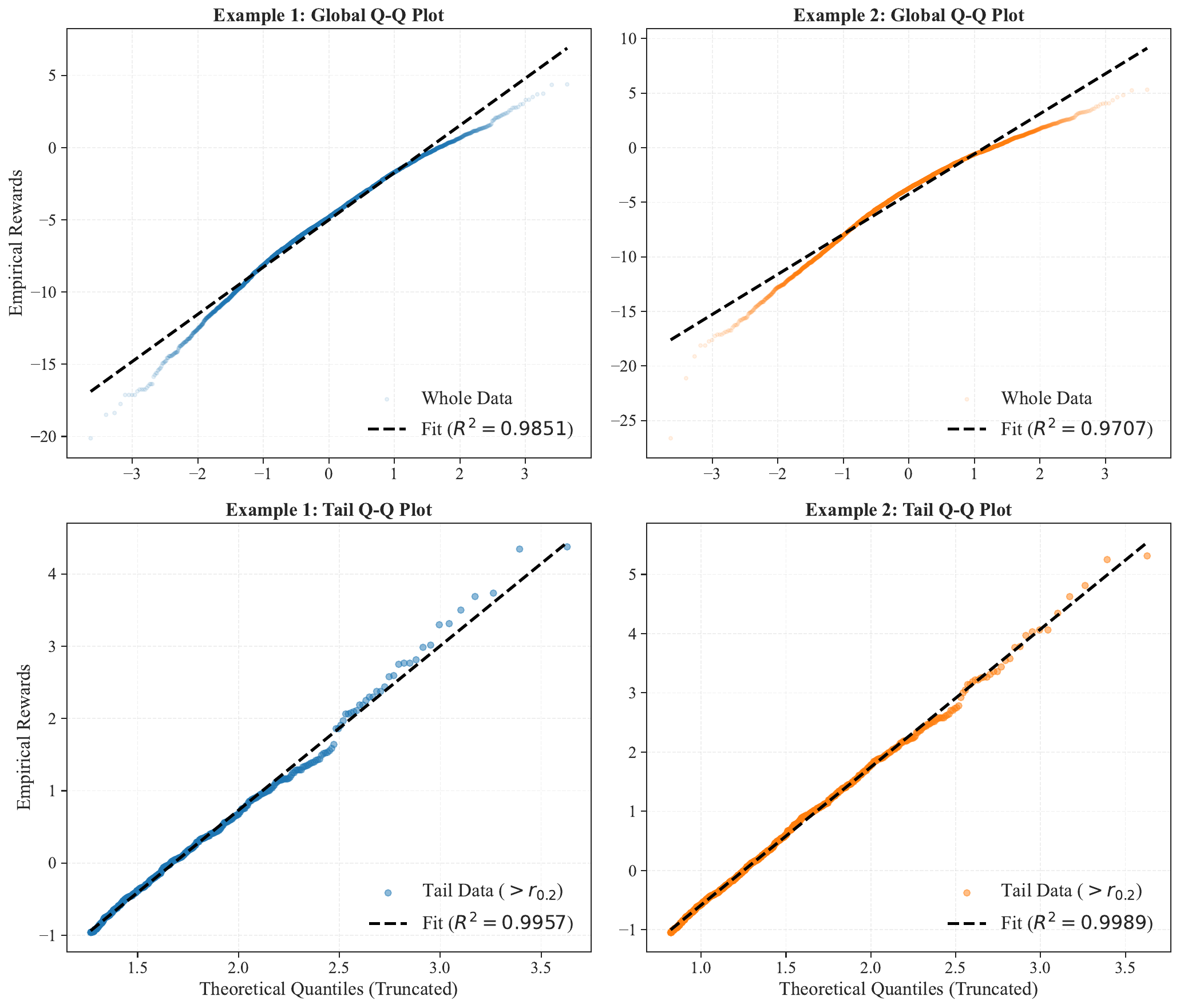}
    \caption{\textbf{Detailed Q-Q Plot Analysis.} 
    \textbf{Top Row:} Q-Q plots for the entire reward distribution against a standard Normal. The high density of points and slight deviations at the ends indicate body irregularities.
    \textbf{Bottom Row:} Q-Q plots for the tail region ($> R_{0.2}$) against a Truncated Normal. The strictly linear alignment and near-perfect $R^2$ scores confirm the validity of the Gaussian Tail Assumption for extrapolation.}
    \label{fig:app-qq-plots}
\end{figure}

 \begin{table}[h]
    \centering
    \caption{\textbf{Comparison of Goodness-of-Fit ($R^2$).}
    For the reward distributions shown in Figure~\ref{fig:reward-tail-plot},
    we compare the fit quality of a standard Gaussian on the entire dataset versus a Truncated Gaussian on the top 20\% tail. The tail fit consistently outperforms the global fit, indicating that Gaussian properties are most dominant in the extreme tail region.}
    \label{tab:r2-comparison}
    \begin{small}
    \begin{tabular}{lcc}
        \toprule
        & \textbf{Global Fit $R^2$}  & \textbf{Tail Fit $R^2$} \\
        \midrule
        Example 1 & 0.9851 & \textbf{0.9957} \\
        Example 2 & 0.9707 & \textbf{0.9989} \\
        \bottomrule
    \end{tabular}
    \end{small}
\end{table}

\subsection{Broader Empirical Validation}
To ensure that the Gaussian tail property is not unique to the two selected examples above, we extend our analysis to the reward distributions of 9 additional states.  
These states are generated by \textit{Llama-3.2-1B-Instruct} on randomly sampled questions from the AIME 2024 dataset, scored by \texttt{Skywork-Reward-V2-Llama-3.1-8B}.

Figure~\ref{fig:appendix_gaussian_check} displays the reward density histograms for these distributions. In each plot, we visualize the fit of a Gaussian curve (black dashed line) specifically to the top 20\% tail (blue bars), while the remaining body of the distribution is shown in grey.

As observed in Figure~\ref{fig:appendix_gaussian_check}, the distribution bodies are highly non-Gaussian, exhibiting significant skewness and multi-modality. In contrast, the tail regions consistently align with the fitted Gaussian envelope across all 9 examples. This confirms that despite the global complexity of the reward landscape, the critical tail behavior remains predictable, strictly validating our Gaussian Tail Assumption (Assumption~\ref{assum:gaussian-tail-model}).
\begin{figure}[h]
    \centering
    \includegraphics[width=1.0\textwidth]{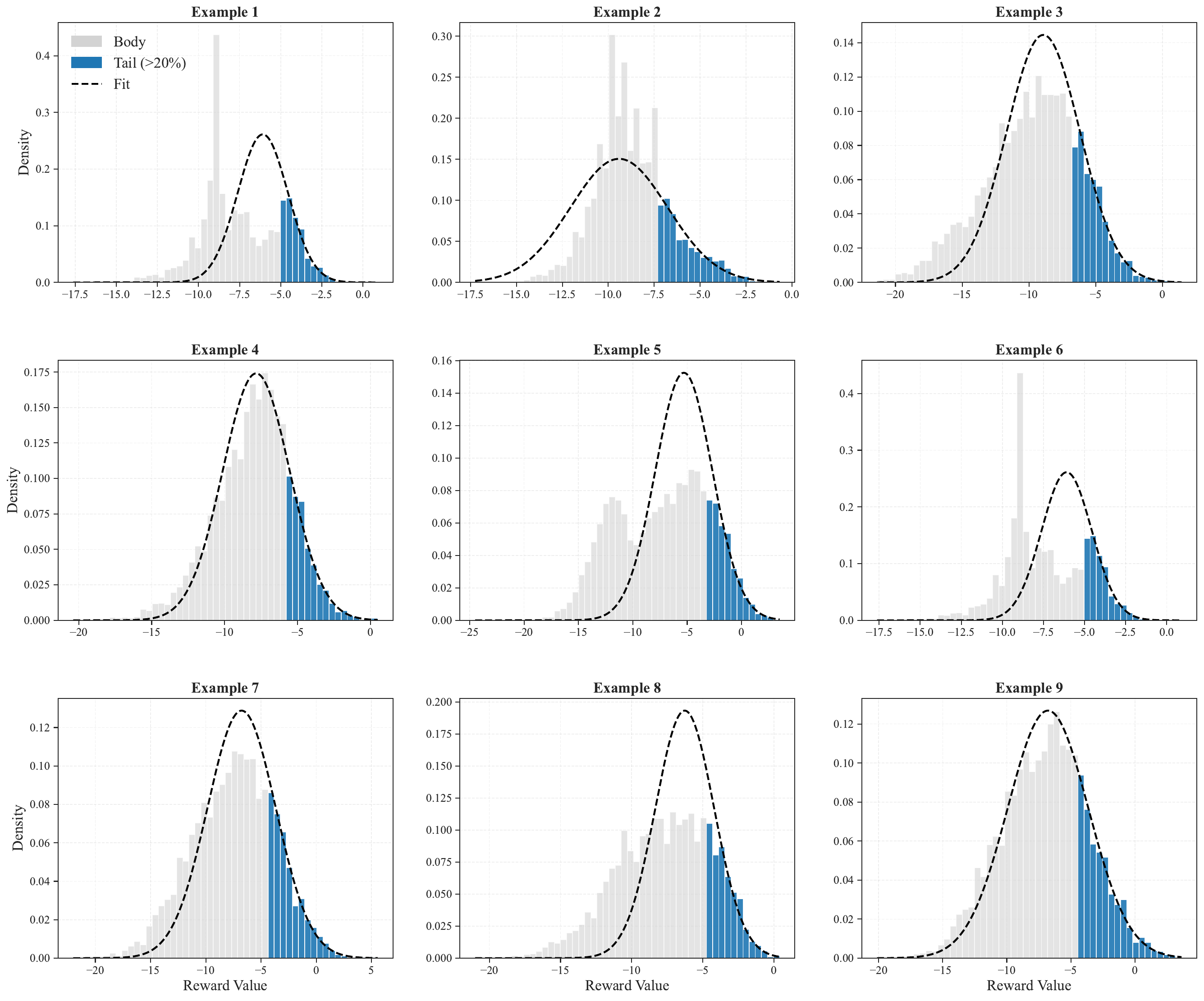}
    \caption{\textbf{Broad Visual Validation of Gaussian Tails.} 
    We visualize the reward densities for 9 randomly sampled query instances distinct from the main text examples.
    \textbf{Grey Bars:} The distribution body ($< 80$th percentile). 
    \textbf{Blue Bars:} The tail region ($> 80$th percentile). 
    \textbf{Dashed Line:} A Gaussian distribution fitted specifically to the tail statistics. 
    Despite the irregular and diverse shapes of the distribution bodies, the tail regions consistently follow a predictable Gaussian profile.}
    \label{fig:appendix_gaussian_check}
\end{figure}

\section{Experimental details}
\label{appendix:experimental_details}

We list the exact hyperparameter values used in our main experiments (Section~\ref{subsec:main_results}) in Table~\ref{tab:km_schedule}.

The search width $K$ and estimation budget $m$ are calculated dynamically based on the total budget $N$. 
We first determine the estimation budget $m$ by scaling with $\ln N$ and rounding to the nearest multiple of 5:
\begin{equation*}
    m(N) = 5 \times \operatorname{round}\left( \frac{(\ln N)^3}{25} \right).
\end{equation*}
Given $m(N)$, we calculate the target exploration width $K$. To ensure the total computational cost remains within bounds, we define $K$ as the target ratio clipped by the maximum feasible budget:
\begin{equation*}
    K(N) = \min \left( \operatorname{round}\left( \frac{N}{2m(N)} \right), \; \left\lfloor \frac{N}{m(N) + 2} \right\rfloor \right).
\end{equation*}
This schedule balances the trade-off between accurate estimation (increasing $m$) and sufficient exploration width ($K$), and aligns with the theoretical insights from Corollary~\ref{cor:regret_to_N}.

\begin{table}[h]
    \centering
    \caption{\textbf{Hyperparameter Schedule.} The exact estimation budget ($m$) and exploration width ($K$) used for each total budget ($N$). Calculated via the formulas: $m \approx \frac{1}{5}(\ln N)^3$ and $K \approx \frac{N}{2m}$.}
    \label{tab:km_schedule}
    \begin{tabular}{ccc}
        \toprule
        \textbf{Total Budget} ($N$) & \textbf{Estimation Budget} ($m$) & \textbf{Exploration Width} ($K$) \\
        \midrule
        100  & 20 & 2 \\  
        200  & 30 & 3 \\ 
        300  & 35 & 4 \\  
        500  & 50 & 5 \\ 
        700  & 55 & 6 \\  
        800  & 60 & 7 \\  
        1000 & 65 & 8 \\ 
        \bottomrule
    \end{tabular}
\end{table}

\section{Proof of Theorem~\ref{theorem:error-bound}}
\label{appendix:proof_error_bound}

By Assumption~\ref{assum:gaussian-tail-model}, we know the density
of the reward distribution $p(r) = \phi(r; \mu, \sigma)$ for $r \geq r_{\alpha/2}$.
We denote $M(N)$ as the expected maximal value when sampling $N$ times from $\cN(\mu, \sigma^2)$, then we know:

\begin{equation}
  \label{eq:def_M_N}
  M(N) = \mu + \sigma E(N), \quad \text{where} \quad E(N) = \E_{Z_i \sim \cN(0,1)} \left[ \max_{i=1}^N Z_i \right]
\end{equation}

and $\hat{V}_N(s) = \hat{\mu} + \hat{\sigma} E(N)$ is an estimator of $M(N)$. 

And we can split the total error into two parts:

\begin{align*}
  \left| V_N(s) - \hat{V}_N(s) \right| \leq \left| V_N(s) - M(N) \right| +
  \left| \hat{V}_N(s) - M(N) \right|
\end{align*}

where the first part is the bias error induced by
the Gaussian tail approximation, and the second part
is the estimation error of the Gaussian maximal.

\subsection{Estimation Error}

We notice that, under the Gaussian Tail Model (Assumption~\ref{assum:gaussian-tail-model}), we can 
write the estimation error as:

\begin{align*}
  \left| \hat{V}_N - M(N) \right| 
  & = \left| (\hat{\mu} - \mu) + (\hat{\sigma} - \sigma) E(N) \right| \\
  & \leq |\hat{\mu} - \mu| + |\hat{\sigma} - \sigma| E(N)
\end{align*}

So it suffices to bound the estimation errors $|\hat{\mu} - \mu|$ and $|\hat{\sigma} - \sigma|$, or equivalently 
$| \hat{\mu}_{\text{tail}} - \E[R | R \geq r_{\alpha/2}] |$ and $| \hat{\sigma}_{\text{tail}} - \sqrt{\Var[R | R \geq r_{\alpha/2}]} |$ 
by Equation~\eqref{eq:param-inversion},
where $\hat{\mu}_{\text{tail}}$ and $\hat{\sigma}_{\text{tail}}$ 
are the empirical mean and standard deviation
computed from the tail dataset $D_{\text{tail}} = \{ r_i \mid r_i \geq \hat{r}_{\alpha/2}, i = 1, 2, \cdots, m_{tail} \}$.

We can decompose these errors into two components:

\begin{itemize}
  \item \textbf{Bias error:} 
  This error arises from approximating the true threshold $r_{\alpha/2}$ with the empirical threshold $\hat{r}_{\alpha/2}$. 
  We denote:
  \begin{align*}
    \mu(r) = \E[R | R \geq r], \quad \sigma^2(r) = \Var[R | R \geq r]
  \end{align*}
  as the conditional mean and variance given a generic threshold $R$. The bias error can then be expressed as:
  \begin{align*}
    |\mu(\hat{r}_{\alpha/2}) - \mu(r_{\alpha/2})|, \quad |\sigma(\hat{r}_{\alpha/2}) - \sigma(r_{\alpha/2})|
  \end{align*}
  
\item \textbf{Statistical error:} This error stems from estimating the conditional moments $\mu(\hat{r}_{\alpha/2})$ and $\sigma^2(\hat{r}_{\alpha/2})$ using a finite number of tail samples,
and can be expressed as:
  \begin{align*}
    |\hat{\mu}_{\text{tail}} - \mu(\hat{r}_{\alpha/2})|, \quad |\hat{\sigma}_{\text{tail}} - \sigma(\hat{r}_{\alpha/2})|
  \end{align*}
\end{itemize}

We first construct a high-probability bound on the error of the empirical threshold $\hat{r}_{\alpha/2}$:

\begin{lemma}\label{lemma:threshold-error}
  Under Assumptions \ref{assum:gaussian-tail-model}, there exists a constant $c_1 > 0$ such that for all sample sizes $m$ satisfying $m \geq c_1 \log(1/\delta)$, with probability at least $1 - \delta$:
  
  $$ \left|\hat{r}_{\alpha/2} - r_{\alpha/2}\right| \leq c_2 \sqrt{\frac{\log(1/\delta)}{m}}, $$
  
  where $c_1, c_2$ are generic positive constants depending on $(\alpha, \sigma)$.
\end{lemma}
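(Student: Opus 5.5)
We will prove the lemma with the standard quantile-concentration argument: convert the event $\{\hat r_{\alpha/2}$ far from $r_{\alpha/2}\}$ into a deviation of an empirical CDF count, and apply Hoeffding (or a multiplicative Chernoff bound) to that count. The key structural fact is that, under Assumption~\ref{assum:gaussian-tail-model}, the density near $r_{\alpha/2}$ is the Gaussian $\phi(\cdot;\mu,\sigma^2)$. We need a window on both sides of $r_{\alpha/2}$, which is why the threshold $r_{\alpha/2}$ sits strictly inside the Gaussian region $[r_\alpha,\infty)$.

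\textbf{Step 1 (density lower bound near the threshold).} Since $F_s(r_{\alpha/2})=1-\alpha/2$ and $F_s(r_\alpha)=1-\alpha$, we have $\int_{r_\alpha}^{r_{\alpha/2}}\phi(r;\mu,\sigma^2)\,dr=\alpha/2$. This forces $r_{\alpha/2}-r_\alpha\ge \eta:=\sigma\sqrt{2\pi}\,\alpha/2$, since the Gaussian density is at most $1/(\sigma\sqrt{2\pi})$. On the region above $r_{\alpha/2}$ the Gaussian tail mass is $\alpha/2$, so $r_{\alpha/2}$ is at most $\mu+\sigma\Phi^{-1}(1-\alpha/2)$. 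Moreover, $r_{\alpha/2}$ cannot lie far below $\mu$ either: mass $\alpha/2$ on an interval of length $\eta$ already requires the density not to be vanishing there.

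Using these facts, we aim to show that on the interval $[r_{\alpha/2}-\eta/2,\ r_{\alpha/2}+\eta/2]$ the density is bounded below by a constant $p_0>0$ depending only on $(\alpha,\sigma)$. We take $\eta/2$ to stay inside $[r_\alpha,\infty)$. I expect this to be the main obstacle, because $\mu$ is not fixed relative to $r_{\alpha/2}$. To handle it, we express everything in standardized units $z=(r-\mu)/\sigma$. Then $z_{\alpha/2}:=(r_{\alpha/2}-\mu)/\sigma$ satisfies $1-\Phi(z_{\alpha/2})=\alpha/2$ exactly, because the Gaussian part carries all the mass above $r_\alpha$. This pins down $z_{\alpha/2}=\Phi^{-1}(1-\alpha/2)$, so the density on the window is at least $\phi(|z_{\alpha/2}|+\eta/(2\sigma))/\sigma=:p_0$.

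\textbf{Step 2 (converting to a count).} Let $\epsilon\le \eta/2$. The event $\hat r_{\alpha/2}>r_{\alpha/2}+\epsilon$ implies that at least $\lceil m\alpha/2\rceil$ of the samples exceed $r_{\alpha/2}+\epsilon$. Each sample does so with probability $q_+:=\alpha/2-\int_{r_{\alpha/2}}^{r_{\alpha/2}+\epsilon}p_s\le \alpha/2-p_0\epsilon$. By Hoeffding, this event has probability at most $\exp(-2m(p_0\epsilon)^2)$, up to an $O(1/m)$ rounding term that we absorb. The lower deviation $\hat r_{\alpha/2}<r_{\alpha/2}-\epsilon$ is symmetric: fewer than about $m\alpha/2$ samples exceed $r_{\alpha/2}-\epsilon$, although each does so with probability at least $\alpha/2+p_0\epsilon$.

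\textbf{Step 3 (choosing $\epsilon$).} We set $\epsilon=\frac{1}{p_0}\sqrt{\log(2/\delta)/(2m)}$, which gives total failure probability at most $\delta$. The requirement $\epsilon\le\eta/2$ holds, and the rounding slack is dominated, once $m\ge c_1\log(1/\delta)$ with $c_1$ depending on $(\alpha,\sigma)$; this is where the sample-size condition enters. This yields $|\hat r_{\alpha/2}-r_{\alpha/2}|\le c_2\sqrt{\log(1/\delta)/m}$ with $c_2\asymp 1/p_0$, using $\delta<1/2$ to replace $\log(2/\delta)$ by a constant multiple of $\log(1/\delta)$.
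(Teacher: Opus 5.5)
Your proof is correct, but it takes a somewhat different route from the paper: you use pointwise Hoeffding at two deterministic points instead of DKW plus a mean-value inversion.

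\textbf{The paper's route.} It conditions on the DKW event $\sup_r|\hat F_m(r)-F(r)|\le\epsilon_m$ and proceeds in two steps:
\begin{enumerate}
\item It localizes $\hat r_{\alpha/2}$ to $[r_{\alpha/2}-\tau,\,r_{\alpha/2}+\tau]$ with $\tau=r_{\alpha/2}-r_\alpha$. This costs the conditions $\epsilon_m\le\kappa\tau/2$ and $\kappa\tau/2>1/m$.
\item It inverts the CDF by the mean value theorem, obtaining $|\hat r_{\alpha/2}-r_{\alpha/2}|\le(\epsilon_m+1/m)/\kappa$.
\end{enumerate}
Uniform control is genuinely needed there, because the paper compares $F$ and $\hat F_m$ at the data-dependent point $\hat r_{\alpha/2}$.

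\textbf{Your route.} Your bracketing argument evaluates the empirical CDF only at the fixed points $r_{\alpha/2}\pm\epsilon$. So Hoeffding for two binomial counts suffices, and localization and inversion happen in a single step. With the generalized-inverse definition $\hat r_{\alpha/2}=\inf\{r:\hat F_m(r)\ge 1-\alpha/2\}$, one has exactly $\{\hat r_{\alpha/2}>r_0\}=\{\#\{i:r_i>r_0\}>m\alpha/2\}$. Hence the $O(1/m)$ rounding slack you planned to absorb does not even arise.

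\textbf{Correspondence with the paper.} Your $p_0$ plays the role of the paper's $\kappa$. Your standardization $z_{\alpha/2}=\Phi^{-1}(1-\alpha/2)$ is the paper's footnote observation that $\kappa$ does not depend on $\mu$. Once you have that exact identification, your preliminary estimate $r_{\alpha/2}-r_\alpha\ge\sigma\sqrt{2\pi}\,\alpha/2$ could be replaced by the exact value $\tau=\sigma\bigl(\Phi^{-1}(1-\alpha/2)-\Phi^{-1}(1-\alpha)\bigr)$. Your window works as stated either way.

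\textbf{What each approach buys.} The DKW route gives a single uniform good event, which is convenient if several thresholds must be controlled at once. Yours is more elementary and yields explicit constants $c_1\asymp 1/(p_0\eta)^2$ and $c_2\asymp 1/p_0$. Both arguments need $\delta$ bounded away from $1$ to absorb $\log(2/\delta)$ into $\log(1/\delta)$. Your restriction $\delta<1/2$ matches the range used in Theorem~\ref{theorem:error-bound}.
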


\begin{proof}[Proof of Lemma~\ref{lemma:threshold-error}]
  Let $\hat{F}_m(r)$ denote the empirical CDF of the reward samples $\{r_i\}_{i=1}^m$. By the Dvoretzky–Kiefer–Wolfowitz (DKW) inequality, the event
  $$ \mathcal{E} = \left\{ \sup_{r \in \mathbb{R}} | \hat{F}_m(r) - F(r) | \leq \epsilon_m \right\}, \quad \text{where} \quad \epsilon_m = \sqrt{\frac{\log(2/\delta)}{2m}}, $$
  occurs with probability at least $1 - \delta$. We condition on $\mathcal{E}$ for the remainder of the proof.

  Denote $\tau = r_{\alpha/2} - r_{\alpha}$ where $r_\alpha = \Phi^{-1}(1 - \alpha; \mu, \sigma)$ is the $\alpha$-quantile of the Gaussian distribution. 
  We first establish that $\hat{r}_{\alpha/2}$ lies within the local neighborhood $\mathcal{N} = [r_{\alpha/2} - \tau, r_{\alpha/2} + \tau]$. 
  By Assumption~\ref{assum:gaussian-tail-model}, the reward density $p(r)$ equals the Gaussian density $\phi(r; \mu, \sigma)$ in this region, and thus $p(r) \geq \kappa >0$  for all 
  $r \in \mathcal{N}$, where $\kappa = \frac{1}{\sqrt{2\pi}\sigma} e^{-\frac{(r_{\alpha/2} + \tau - \mu)^2}{2\sigma^2}} $ is a constant depending on $(\alpha, \sigma)$. 
  \footnote{Since $\tau = \sigma(\Phi^{-1}(1 - \alpha/2) - \Phi^{-1}(1 - \alpha))$ and $r_{\alpha/2} - \mu = \sigma \Phi^{-1}(1 - \alpha/2)$, the constant $\kappa$ is independent of $\mu$.}
  
  When $m \geq \frac{4}{\kappa^2\tau^2} \log(1/\delta)$, we have:

  $$ \epsilon_m = \sqrt{\frac{\log(2/\delta)}{2m}} \leq \frac{1}{2} \kappa \tau. $$

  Under this condition, and utilizing the density lower bound $F(r_{\alpha/2} + \tau) \geq (1-\alpha/2) + \kappa \tau$, we have:
  \begin{align*}
      \hat{F}_m(r_{\alpha/2} + \tau) &\geq F(r_{\alpha/2} + \tau) - \epsilon_m 
      \geq (1-\alpha/2) + \kappa \tau - \frac{1}{2}\kappa \tau 
      > 1 - \alpha/2 + \frac{1}{m}.
  \end{align*}
 
  This implies $\hat{r}_{\alpha/2} \leq r_{\alpha/2} + \tau$. A symmetric argument shows $\hat{R}_\alpha \geq R_\alpha - \tau$. Thus, $\hat{r}_{\alpha/2} \in \mathcal{N}$.

  Since $\hat{r}_{\alpha/2} \in \mathcal{N}$, we can invert the CDF difference using the density lower bound $\kappa$. By the Mean Value Theorem on the CDF $F(\cdot)$:
  $$ |F(\hat{r}_{\alpha/2}) - F(r_{\alpha/2})| = p(\tilde{r}) |\hat{r}_{\alpha/2} - r_{\alpha/2}| \geq \kappa  |\hat{r}_{\alpha/2} - r_{\alpha/2}|. $$
  
  Rearranging this and relating the true CDF to the empirical quantile via the DKW bound:
  \begin{align*}
       |\hat{r}_{\alpha/2} - r_{\alpha/2}| &\leq \frac{1}{\kappa} |F(\hat{r}_{\alpha/2}) - F(r_{\alpha/2})| \\
      &\leq \frac{1}{\kappa} \left( |F(\hat{r}_{\alpha/2}) - \hat{F}_m(\hat{r}_{\alpha/2})| + |\hat{F}_m(\hat{r}_{\alpha/2}) - (1-\alpha/2)| \right) \\
      &\leq \frac{1}{\kappa} \left( \epsilon_m + \frac{1}{m} \right).
  \end{align*}
  
  Substituting $\epsilon_m = \sqrt{\frac{\log(2/\delta)}{2m}}$, we know for $m \geq c_1 \log(1/\delta)$, it holds with probability at least $1 - \delta$ that:
  $$ |\hat{r}_{\alpha/2} - r_{\alpha/2}| \leq c_2 \sqrt{\frac{\log(1/\delta)}{m}}, $$
  for some constants $c_1, c_2 > 0$ depending on $(\alpha, \sigma)$, which completes the proof.
\end{proof}

Then for the bias error, we have the following lemma:

\begin{lemma}\label{lemma:bias-error}
  Under Assumptions \ref{assum:gaussian-tail-model}, there exist constants $\tau > 0$ and $c > 0$ (depending on $\alpha, \sigma$) such that for all $r$ satisfying $\Delta = |r - r_{\alpha/2}| \leq \tau$, we have:
  
  $$ \left| \mu(r) - \mu(r_{\alpha/2}) \right| \leq c \Delta, \quad \text{and} \quad \left| \sigma(r) - \sigma(r_{\alpha/2}) \right| \leq c \Delta. $$
  
\end{lemma}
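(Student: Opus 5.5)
Since $r_{\alpha/2} - r_\alpha > 0$, I would take $\tau$ as in the proof of Lemma~\ref{lemma:threshold-error}, namely $\tau = r_{\alpha/2} - r_\alpha = \sigma\big(\Phi^{-1}(1-\alpha/2) - \Phi^{-1}(1-\alpha)\big)$. Then every $r$ with $|r - r_{\alpha/2}| \le \tau$ satisfies $r \ge r_\alpha$. Under Assumption~\ref{assum:gaussian-tail-model}, the conditional law of $R$ given $R \ge r$ is therefore exactly the Gaussian $\mathcal{N}(\mu,\sigma^2)$ truncated to $[r,\infty)$; the body $p_{\text{body}}$ plays no role. Writing $z(r) = (r-\mu)/\sigma$, the standard truncated-normal identities (the same ones behind \eqref{eq:param-inversion}) give
\begin{align*}
\mu(r) &= \mu + \sigma \lambda(z(r)),\\
\sigma^2(r) &= \sigma^2 \delta(z(r)), \qquad \delta(z) = 1 + z\lambda(z) - \lambda(z)^2 .
\end{align*}
As $r$ ranges over $[r_{\alpha/2}-\tau,\, r_{\alpha/2}+\tau]$, $z(r)$ ranges over the compact interval
\[
J = \big[\,2\Phi^{-1}(1-\alpha/2) - \Phi^{-1}(1-\alpha),\; \Phi^{-1}(1-\alpha)\,\big]
\]
with endpoints reversed appropriately. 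This interval depends only on $\alpha$, not on $\mu$.

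Next I would show that $\lambda$ and $\sqrt{\delta}$ are Lipschitz on $J$. The inverse Mills ratio is smooth on all of $\mathbb{R}$, with $\lambda'(z) = \lambda(z)(\lambda(z) - z) \in (0,1)$. So $|\mu(r) - \mu(r_{\alpha/2})| \le \sigma \sup_J |\lambda'| \cdot |z(r) - z(r_{\alpha/2})| \le \Delta$, using $|z(r) - z(r_{\alpha/2})| = \Delta/\sigma$. This settles the mean bound with constant $1$.

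For the standard deviation, $\sigma(r) = \sigma\sqrt{\delta(z(r))}$, and by the mean value theorem
\[
|\sigma(r) - \sigma(r_{\alpha/2})| \le \sup_{z\in J} \frac{|\delta'(z)|}{2\sqrt{\delta(z)}}\;\Delta .
\]
The main point is that $\delta$ must stay bounded away from zero on $J$. This holds because $\delta(z)$ is the variance of a standard normal truncated below at $z$. That variance is strictly positive for every finite $z$ and continuous in $z$, so on the compact set $J$ it attains a positive minimum $\delta_{\min}(\alpha) > 0$. Likewise, $\delta' = \lambda' z + \lambda + \lambda' - 2\lambda\lambda'$ is continuous and hence bounded on $J$.

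Taking $c = \max\{1,\ \sup_J |\delta'|/(2\sqrt{\delta_{\min}})\}$ then proves the lemma. This constant depends only on $\alpha$ (and trivially on $\sigma$ through $\tau$). The only step requiring care is verifying that the whole neighborhood stays inside the Gaussian region $r \ge r_\alpha$ and that $\delta$ is positive there; the rest is routine calculus.
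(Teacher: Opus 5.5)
Your proof is correct and follows essentially the paper's route: the same $\tau = r_{\alpha/2}-r_\alpha$, the same truncated-normal formulas $\mu+\sigma\lambda(z)$ and $\sigma^2\delta(z)$, and the same $0\le\lambda'\le 1$ bound for the mean. For the standard deviation you apply the mean value theorem to $\sqrt{\delta}$ with compactness of $J$, whereas the paper asserts a global bound $|\delta'|\le 1$ and divides by $\sigma(r_{\alpha/2})$; these are equivalent. One harmless slip: the correct derivative is $\delta' = \lambda + z\lambda' - 2\lambda\lambda'$, with no extra $\lambda'$ term, and since you only use continuity of $\delta'$ this does not affect the argument.
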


\begin{proof}[Proof of Lemma~\ref{lemma:bias-error}]
  We choose $\tau = r_{\alpha/2} - r_{\alpha} = \sigma \left( \Phi^{-1}(1 - \alpha/2) - \Phi^{-1}(1 - \alpha) \right)$, 
  and denote $\lambda(x) = \frac{\phi(x)}{1 - \Phi(x)}$ as the inverse Mills ratio. 

  We prove the two claims separately:
\begin{enumerate}
    \item \textbf{Bias error of mean} 
    
    Let $\mu_{G}(r)$ denote the mean of the Gaussian distribution truncated at $r$:
    $$ \mu_{G}(r) = \frac{\int_{r}^{\infty} x \phi(x;\mu,\sigma^2)dx}{\int_{r}^{\infty} \phi(x;\mu,\sigma^2)dx}. $$

    For any $r$ satisfying $\Delta = |r - r_{\alpha/2}| \leq \tau$, we know that both $r$ and $r_{\alpha/2}$ lie in the region where the reward density $p(r)$ equals the Gaussian density $\phi(r;\mu,\sigma^2)$ by 
    Assumption~\ref{assum:gaussian-tail-model}. Thus, we have:
    \begin{align*}
    \left| \mu(r) - \mu(r_{\alpha/2}) \right|  = \left| \mu_{G}(r) - \mu_{G}(r_{\alpha/2}) \right|
    \end{align*}
  
    This implies that it's equivalent to bound the difference between the mean of truncated Gaussian at two truncation points, and we know that the mean of the truncated Gaussian is given by $\mu + \sigma \lambda(\frac{r-\mu}{\sigma})$.
  A well-known property of $\lambda(x)$ is that $ 0 \le \lambda'(x) \le 1$ for all $x \in \mathbb{R}$ \citep{sampford1953some}, by which we have:
    \begin{align*}
        \left| \mu_{G}(r) - \mu_{G}(r_{\alpha/2}) \right| &\leq \left| \sigma \lambda(\frac{r - \mu}{\sigma}) - \sigma \lambda(\frac{r_{\alpha/2} - \mu}{\sigma}) \right| \\
        & \leq \sup_{x} |f'(x)| \cdot \sigma \left|\frac{r_{\alpha/2} - \mu}{\sigma} - \frac{r - \mu}{\sigma} \right| \leq \Delta.
    \end{align*}

    This completes the proof of the first claim.

\item \textbf{Bias error of variance}

    Let $\sigma^2_G(r)$ denote the variance of the Gaussian distribution truncated at $r$. Again, for any $r$ satisfying $\Delta = |r - r_{\alpha/2}| \leq \tau$, we have:
    \begin{align*}
        \left| \sigma^2(r) - \sigma^2(r_{\alpha/2}) \right| = \left| \sigma^2_G(r) - \sigma^2_G(r_{\alpha/2}) \right|.
    \end{align*}
    
    We know that the variance of the truncated Gaussian can be expressed as:
    $$ \sigma^2_G(r) = \sigma^2 \delta(\frac{r - \mu}{\sigma}), \quad \text{where} \quad \delta(z) =  1 + z \lambda(z) - \lambda(z)^2. $$
    use the properties of the standard truncated normal variance $\delta(z) = 1 + z \lambda(z) - \lambda(z)^2$. 
    Direct calculation yields that 
     $|\delta'(z)| \leq 1$ for all $z$, we have:

    \begin{align*}
       \left| \sigma^2_G(r) - \sigma^2_G(r_{\alpha/2}) \right|  = \sigma^2 \left| \delta(\frac{r - \mu}{\sigma}) - \delta(\frac{r_{\alpha/2}-\mu}{\sigma}) \right| \leq \sigma^2 \left| \frac{r_{\alpha/2}-\mu}{\sigma} - \frac{r-\mu}{\sigma} \right| \leq \sigma \Delta .
    \end{align*}

    Since $\sigma(r) > 0$ and $\sigma(r_{\alpha/2})$ is a positive constant depending only on $\sigma$ and $\alpha$, we have:
    \begin{align*}
        \left| \sigma(r) - \sigma(r_{\alpha/2}) \right| \leq \frac{1}{\sigma(r_{\alpha/2})} \left| \sigma^2(r) - \sigma^2(r_{\alpha/2}) \right| \leq c \Delta,
    \end{align*}
    which completes the proof of the second claim.

\end{enumerate}

\end{proof}

And for the statistical error, we have the following concentration result:

\begin{lemma}\label{lemma:statistical-error}
  Under Assumption \ref{assum:gaussian-tail-model}, for any $\delta \in (0,1)$, when the sample size $m \geq c_1 \log(1/\delta)$,
  then with probability at least $1 - \delta$,
  \begin{align*}
    \left|\hat{\mu}_{tail} - \mu(\hat{r}_{\alpha/2}) \right| & \leq  c_2 \sqrt{\frac{\log(1/\delta)}{m}} \\
    \left|\hat{\sigma}_{tail} - \sigma(\hat{r}_{\alpha/2}) \right| & \leq c_2 \sqrt{\frac{\log(1/\delta)}{m}}
  \end{align*}

  where $c_1, c_2$ are constants only depending on $(\alpha, \sigma)$.
\end{lemma}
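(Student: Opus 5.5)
The plan is to condition on the data-dependent threshold $\hat r_{\alpha/2}$ and reduce to concentration of i.i.d. samples from a truncated Gaussian. Fix the event of Lemma~\ref{lemma:threshold-error} (at level $\delta/3$), on which $\hat r_{\alpha/2}\in\mathcal N=[r_{\alpha/2}-\tau,r_{\alpha/2}+\tau]$. On this neighborhood the density is exactly $\phi(\cdot;\mu,\sigma^2)$. So the tail samples are those falling above a point where the law is a Gaussian truncated at $\hat r_{\alpha/2}$. The conditional tail parameters $\mu(\hat r_{\alpha/2})$ and $\sigma^2(\hat r_{\alpha/2})$ lie in a compact range determined by $(\alpha,\sigma)$, uniformly over $\hat r_{\alpha/2}\in\mathcal N$, by the Lipschitz bounds of Lemma~\ref{lemma:bias-error}.

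\textbf{Step 1 (decoupling the empirical threshold).} The threshold depends on all the samples, so I will not condition on it directly. Instead I use a uniform argument over thresholds $r\in\mathcal N$. Define
\[
S_k(r)=\frac1m\sum_{i=1}^m r_i^k\,\mathbb 1\{r_i\ge r\}, \qquad k=0,1,2 .
\]
Then $\hat\mu_{\text{tail}}=S_1(\hat r)/S_0(\hat r)$ and $\hat\sigma^2_{\text{tail}}=S_2(\hat r)/S_0(\hat r)-\hat\mu_{\text{tail}}^2$, up to an $O(1/m)$ correction from ties and the boundary point. The corresponding population ratios are exactly $\mu(r)$ and $\sigma^2(r)$. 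The task therefore reduces to bounding $\sup_{r\in\mathcal N}|S_k(r)-\E S_k(r)|$ by $c\sqrt{\log(1/\delta)/m}$.

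\textbf{Step 2 (uniform concentration).} For $k=0$ this is the DKW bound already used. For $k=1,2$, the summands $r_i^k\mathbb 1\{r_i\ge r\}$ with $r\ge r_{\alpha/2}-\tau=r_\alpha$ only involve the Gaussian part of the distribution. They are therefore sub-Gaussian for $k=1$ and sub-exponential for $k=2$, with parameters depending on $\sigma$ and on $|\mu|$. The $\mu$-dependence can be removed by centering: write $r_i=\mu+\sigma z_i$, since the estimators are location-equivariant. I will handle the supremum over $r$ in one of two ways. The first is a chaining/bracketing argument over the monotone class $\{\mathbb 1\{\cdot\ge r\}\}$, which has VC dimension 1. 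The simpler alternative is to discretize $\mathcal N$ into a grid of mesh $1/m$ and add a Lipschitz control between grid points, using the bounded density and the bounded range of $r^k$ on a truncated region. Either route yields, for $m\ge c_1\log(1/\delta)$, a bound $c\sqrt{\log(1/\delta)/m}$, where the sub-exponential part is dominated by its sub-Gaussian regime.

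\textbf{Step 3 (ratio and square root).} On the good event, $S_0(\hat r)\ge\alpha/4$, because its population value is at least $\alpha/2-\kappa\tau$ for the chosen $\tau$; this may require shrinking $\tau$. The maps $(S_0,S_1,S_2)\mapsto$ mean and variance are then Lipschitz, giving the mean bound. For $\hat\sigma_{\text{tail}}$, I will use $|\sqrt a-\sqrt b|\le|a-b|/\sqrt b$ with $b=\sigma^2(\hat r)$, which is bounded below on $\mathcal N$ by $\sigma^2\min\delta(\cdot)>0$. A union bound over the three events completes the proof.

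The main obstacle is Step 2: obtaining uniform concentration over the random threshold for the unbounded second-moment terms. I would first try the grid-plus-monotonicity argument, since $r\mapsto S_k(r)$ is monotone in $r$ for nonnegative centered shifts.
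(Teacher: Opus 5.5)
\paragraph{Comparison with the paper's route.}
Your route differs from the paper's. The paper works on the event $E_R=\{|\hat r_{\alpha/2}-r_{\alpha/2}|\le\tau\}$ and treats the $\lfloor m\alpha/2\rfloor$ tail points directly as i.i.d.\ draws from the Gaussian truncated at $\hat r_{\alpha/2}$. It applies sub-Gaussian concentration to the tail mean and sub-exponential concentration to the variance about $\mu(\hat r_{\alpha/2})$, each at level $\delta/3$. It then subtracts the squared mean error to pass to the sample variance, and finishes with the same square-root step as your Step 3. The paper never justifies conditioning on this data-dependent threshold; the Markov property of order statistics does (given the order statistic defining $\hat r_{\alpha/2}$, the points above it are i.i.d.\ from $F$ conditioned to exceed it). Your uniform-in-threshold reduction through $S_k(r)$ avoids that fact and would work for any threshold rule that lands in $\mathcal N$. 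Steps 1 and 3 are sound. In Step 3 you need neither a population argument nor a smaller $\tau$: $S_0(\hat r_{\alpha/2})\ge\alpha/2$ by definition of the empirical quantile, and the population denominator is at least $\mathbb P(R\ge r_{\alpha/2}+\tau)>0$, a constant depending only on $\alpha$. Your bound ``$\alpha/2-\kappa\tau$'' also goes the wrong way, since $\kappa$ is a \emph{lower} density bound.

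\paragraph{The gap in Step 2.}
A single grid on $\mathcal N$ plus a union bound cannot give $c\sqrt{\log(1/\delta)/m}$. Between grid points you incur the deterministic increment $|\E S_k(r)-\E S_k(r_j)|$, which is generically of the order of the mesh. So the mesh must be $\lesssim\sqrt{\log(1/\delta)/m}$, the grid has $\gtrsim\sqrt{m/\log(1/\delta)}$ points, and the union bound produces $\sqrt{\log(m/\delta)/m}$. For fixed $\delta$ and $m\to\infty$ this exceeds the claimed bound by a $\sqrt{\log m}$ factor. The lemma as stated, and the rate of Theorem~\ref{theorem:error-bound}, would then not follow, so your claim that ``either route'' gives the rate is false for this route. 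The chaining route can work, but you have not said how to handle the unbounded envelope $|x|^k$.

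\paragraph{A repair within your framework.}
Reduce everything to DKW with a layer-cake identity. For $x\ge r$ one has $x=r+\int_r^\infty\Ind\{x>u\}\,du$. Center by $\mu$, fix a level $U$ above $\mathcal N$, and let $D(u)=\frac1m\sum_{i}\Ind\{r_i\ge u\}-\mathbb P(R\ge u)$. Then
\[ S_1(r)-\E S_1(r)= r\,D(r)+\int_r^U D(u)\,du+\Big(\frac1m\sum_{i=1}^m (r_i-U)_+-\E(R-U)_+\Big). \]
The first two terms are bounded uniformly over $r\in\mathcal N$ by a constant depending on $(\alpha,\sigma)$ times the DKW deviation. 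The last term is a single $r$-free average of sub-Gaussian variables supported in the Gaussian region, so Hoeffding gives the rate with no $\log m$ loss. The same argument works for $k=2$ using $x^2=r^2+\int_r^\infty 2u\,\Ind\{x>u\}\,du$. There you apply Bernstein to the $r$-free remainder; its sub-exponential term is absorbed because $m\ge c_1\log(1/\delta)$.
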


\begin{proof}[Proof of Lemma~\ref{lemma:statistical-error}]
  We first define the event $E_R = \left\{ \left| r_{\alpha/2} - \hat{r}_{\alpha/2} \right| \leq r_{\alpha/2} - r_\alpha \right\}$. Under this event,  we know by 
  Assumption~\ref{assum:gaussian-tail-model} $R| R \geq \hat{r}_{\alpha/2}$ is a truncated Gaussian random variable, and thus is sub-Gaussian with parameter $\sigma$. And by 
  Lemma~\ref{lemma:threshold-error}, when $m \geq c \log(1/\delta)$ for some constant $c_1$ depending only on $\alpha$ and $\sigma$, we have $\mathbb{P}(E_R) \geq 1 - \delta/3$.

  We further define the following two events for the concentration of the empirical mean and variance:
  \begin{align*}
    E_{\mu} & = \left\{ \left|\hat{\mu}_{tail} - \mu(\hat{r}_{\alpha/2}) \right| \leq  \sqrt{2}\sigma\sqrt{\frac{\log(6/\delta)}{\lfloor m\alpha/2 \rfloor}} \right\},\\
    E_{\sigma} & = \left\{ \left|\sigma^2_{tail} - \sigma^2(\hat{r}_{\alpha/2}) \right| \leq 2 \sqrt{2}\sigma^2 \sqrt{\frac{\log(6/\delta)}{\lfloor m\alpha/2 \rfloor}} \right\}
  \end{align*}
  
  here $\sigma^2_{tail} = \frac{1}{\lfloor m\alpha/2 \rfloor} \sum_{r_i \in \D_{tail}} (r_i - \mu(\D_{tail}))^2$ is the 
  true variance of the tail dataset $\D_{tail}$. By concentration inequality of sub-Gaussian and 
  sub-exponential random variables, we know that under the event $E_R$, when $m \geq c' \log(1/\delta)$ for some constant $c'$ depending only on $\alpha$ and $\sigma$, we have $\mathbb{P}(E_\mu) \geq 1 - \delta/3$ and $\mathbb{P}(E_\sigma) \geq 1 - \delta/3$.

  So if we define the event $E = E_\mu \cap E_\sigma \cap E_R$, then by union bound, we know that when $m \geq c_1 \log(1/\delta)$ for some constant $c_1$ depending only on $\alpha$ and $\sigma$
   , we have $\mathbb{P}(E) \geq 1 - \delta$. Under the event $E$,
  we directly obtain:

  $$\left|\hat{\mu}_{tail} - \mu(\hat{r}_{\alpha/2}) \right| \leq \sqrt{2}\sigma\sqrt{\frac{\log(6/\delta)}{\lfloor m\alpha/2 \rfloor}} \leq c_2 \sqrt{\frac{\log(1/\delta)}{m}}$$

  for some constant $c_2$ depending only on $\alpha$ and $\sigma$. And for the variance estimation, conditioned on the event $E$, we have:

  \begin{align*}
    \left| \hat{\sigma}^2_{tail} - \sigma^2(\hat{r}_{\alpha/2}) \right| & = \left| \frac{1}{\lfloor m\alpha/2 \rfloor} \sum_{r_i \in \D_{tail}} (r_i - \hat{\mu}(\D_{tail}))^2 - \sigma^2(\hat{r}_{\alpha/2}) \right| \\
    & \leq \left| \frac{1}{\lfloor m\alpha/2 \rfloor} \sum_{r_i \in \D_{tail}} (r_i - \mu(\hat{r}_{\alpha/2}))^2 - \sigma^2(\hat{r}_{\alpha/2}) \right| + \left| \hat{\mu}(\D_{tail}) - \mu(\hat{r}_{\alpha/2}) \right|^2 \\  
    & \leq 2 \sqrt{2}\sigma^2 \sqrt{\frac{\log(6/\delta)}{\lfloor m\alpha/2 \rfloor}} + 2 \sigma^2 \frac{\log(6/\delta)}{\lfloor m\alpha/2 \rfloor} \\
    & \leq c_2' \sqrt{\frac{\log (1/\delta)}{m}}
  \end{align*}

  And since $\sigma(\hat{r}_{\alpha/2}) \geq c_\sigma > 0$ for some constant $c_\sigma$ depending only on $\sigma$ and $\alpha$ conditioned on the event $E_R$, we have:
  
  $$ \left| \hat{\sigma}_{tail} - \sigma(\hat{r}_{\alpha}) \right| \leq \frac{1}{\sigma(\hat{r}_{\alpha/2})} \left| \hat{\sigma}^2_{tail} - \sigma^2(\hat{r}_{\alpha}) \right| \leq c_2 \sqrt{\frac{\log (1/\delta)}{m}} $$

  for some constant $c_2$ depending only on $\alpha$ and $\sigma$. 
  This completes the proof.
\end{proof}

Concluding these lemmas, we have the following proposition on the overall estimation error of $\hat{V}_N(s)$:

\begin{proposition}\label{proposition: estimation-error-of-VN}
  Under Assumptions \ref{assum:gaussian-tail-model}, 
  for any $\delta \in (0,\frac{1}{2})$, when the sample size $m \geq c_1 \log(1/\delta)$,
  then with probability at least $1 - \delta$,
    $$ \left|\hat{V}_N(s) - M(N) \right| \leq c_2 \sqrt{\log N} \sqrt{\frac{\log(1/\delta)}{m}} $$

  where $c_1, c_2$ are constants only depending on $(\alpha, \sigma)$.
\end{proposition}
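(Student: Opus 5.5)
We combine the three lemmas through the decomposition already written above:
\[
|\hat V_N(s)-M(N)|\le|\hat\mu-\mu|+|\hat\sigma-\sigma|\,E(N).
\]
The plan has three steps: control the errors of the tail moments, transfer them to $(\hat\mu,\hat\sigma)$ through the explicit moment-inversion map, and multiply by $E(N)$.

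\textbf{Step 1: tail moments.} Split the failure probability $\delta$ into two halves.

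Apply Lemma~\ref{lemma:threshold-error} at level $\delta/2$. On its event we get $|\hat r_{\alpha/2}-r_{\alpha/2}|\le c\sqrt{\log(1/\delta)/m}$. Enlarging $c_1$ so that this right-hand side is at most $\tau=r_{\alpha/2}-r_\alpha$ puts us inside the range where Lemma~\ref{lemma:bias-error} applies. That lemma then gives
\[
|\mu(\hat r_{\alpha/2})-\mu(r_{\alpha/2})|\le c\Delta,\qquad |\sigma(\hat r_{\alpha/2})-\sigma(r_{\alpha/2})|\le c\Delta .
\]
Apply Lemma~\ref{lemma:statistical-error} at level $\delta/2$ to handle $|\hat\mu_{\text{tail}}-\mu(\hat r_{\alpha/2})|$ and $|\hat\sigma_{\text{tail}}-\sigma(\hat r_{\alpha/2})|$. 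A union bound and the triangle inequality then yield, with probability at least $1-\delta$,
\[
|\hat\mu_{\text{tail}}-\mu(r_{\alpha/2})|\le c\sqrt{\log(1/\delta)/m},\qquad |\hat\sigma_{\text{tail}}-\sigma(r_{\alpha/2})|\le c\sqrt{\log(1/\delta)/m}.
\]
Since $\delta<1/2$, $\log(2/\delta)\lesssim\log(1/\delta)$, so halving $\delta$ only changes constants.

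\textbf{Step 2: inversion.} Let $z=\Phi^{-1}(1-\alpha/2)$. Then $\sigma=\sigma(r_{\alpha/2})/\sqrt{\delta(z)}$ and $\mu=\mu(r_{\alpha/2})-\sigma\lambda(z)$, where $\delta(z)>0$ and $\lambda(z)$ depend only on $\alpha$. The algorithm uses exactly the same formulas with hatted inputs, so
\[
|\hat\sigma-\sigma|=\frac{|\hat\sigma_{\text{tail}}-\sigma(r_{\alpha/2})|}{\sqrt{\delta(z)}},\qquad |\hat\mu-\mu|\le|\hat\mu_{\text{tail}}-\mu(r_{\alpha/2})|+\lambda(z)\,|\hat\sigma-\sigma|.
\]
Both are therefore bounded by $c\sqrt{\log(1/\delta)/m}$.

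\textbf{Step 3: the factor $E(N)$.} Use the standard bound $E(N)\le\sqrt{2\log N}$, obtained by Jensen's inequality on the moment generating function. Then
\[
|\hat V_N-M(N)|\le c\,(1+\sqrt{2\log N})\sqrt{\log(1/\delta)/m}.
\]
For $N\ge2$ we have $1\lesssim\sqrt{\log N}$, which gives the stated bound. For $N=1$, $E(1)=0$ and only the $\hat\mu$ term remains; in that case the claimed right-hand side is $0$, so this edge case must be excluded or absorbed by reading the bound with $\max(\log N,1)$.

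\textbf{Main obstacle.} The only delicate point is keeping the events compatible. Lemma~\ref{lemma:statistical-error} already conditions on $\hat r_{\alpha/2}$ lying in the Gaussian region, and the union bound with Lemma~\ref{lemma:threshold-error} must use the same threshold neighborhood. We also need $c_1$ large enough that $\Delta\le\tau$ holds on the good event. Everything else is bookkeeping of constants, which depend only on $(\alpha,\sigma)$.
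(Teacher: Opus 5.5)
Your proposal is correct and follows the same route as the paper. Both arguments use the threshold bound of Lemma~\ref{lemma:threshold-error}, the Lipschitz bounds of Lemma~\ref{lemma:bias-error} and the statistical bounds of Lemma~\ref{lemma:statistical-error}, combined by the triangle inequality, transferred to $(\hat\mu,\hat\sigma)$ through the linear inversion with constants $\lambda(z)$ and $\delta(z)$, and closed with $E(N)\le\sqrt{2\log N}$. Your explicit $\delta/2$ union bound is slightly cleaner than the paper's reuse of the event $E=E_R\cap E_\mu\cap E_\sigma$. Your remark that $N=1$ must be excluded, because the constant term cannot be absorbed into $\sqrt{\log N}$ there, identifies an edge case the paper's final step passes over.
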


\begin{proof}[Proof of Proposition~\ref{proposition: estimation-error-of-VN}]
  We denote the same event as in the proof of Lemma~\ref{lemma:statistical-error}
  \begin{align*}
    E = E_R \cap E_{\mu} \cap E_{\sigma} 
  \end{align*}

  where $E_R, E_\mu, E_\sigma$ are defined in the proof of Lemma~\ref{lemma:statistical-error}, and we know that when
  $m \geq c \log(1/\delta)$, $\mathbb{P}(E) \geq 1 - \delta$, for some constant $c > 0$ depending on $(\alpha, \sigma)$.

  And under the event $E$, by Lemma \ref{lemma:bias-error}, when $m \geq c' \log(1/\delta)$ for some constant $c' > 0$ depending on $(\alpha, \sigma)$, we have:
  \begin{align*}
    |\mu(\hat{r}_{\alpha/2}) - \mu(r_{\alpha/2})| & \leq c_3  \left|\hat{r}_{\alpha/2} - r_{\alpha/2}\right| \leq c_3' \sqrt{\frac{\log(1/\delta)}{m}} \\
    |\sigma(\hat{r}_\alpha/2) - \sigma(r_{\alpha/2})| & \leq c_3 \left|\hat{r}_{\alpha/2} - r_{\alpha/2}\right| \leq  c_3' \sqrt{\frac{\log(1/\delta)}{m}}
  \end{align*}

  for some constants $c_3, c_3' > 0$ depending on $(\alpha, \sigma)$.

  Therefore, by Lemma \ref{lemma:statistical-error} and the triangle inequality, we know when $m \geq c_1 \log(1/\delta)$ for some constant $c_1 > 0$ depending on $(\alpha, \sigma)$
  , with probability at least $1 - \delta$,
  \begin{align*}
    |\hat{\mu}_{tail} - \mu(r_{\alpha/2})| & \leq  \left| \hat{\mu}_{tail} - \mu(\hat{r}_{\alpha/2}) \right| + \left| \mu(\hat{r}_{\alpha/2}) - \mu(r_{\alpha/2}) \right| \leq c_\mu \sqrt{\frac{\log(1/\delta)}{m}}\\
    |\hat{\sigma}_{tail} - \sigma(r_{\alpha/2})| & \leq  \left| \hat{\sigma}_{tail} - \sigma(\hat{r}_{\alpha/2}) \right| + \left| \sigma(\hat{r}_{\alpha/2}) - \sigma(r_{\alpha/2}) \right| \leq c_\sigma \sqrt{\frac{\log(1/\delta)}{m}}
  \end{align*}
  
  and by the formula of $\hat{\mu}$, $\hat{\sigma}$ in Algorithm~\ref{alg:tail-mom}
  , for some constants $c_\mu', c_\sigma' > 0$ depending on $(\alpha, \sigma)$, we have:
  \begin{align*}
   \left| \hat{\sigma} - \sigma \right| & = \frac{|\hat{\sigma}(\D_{tail}) - \sigma(r_{\alpha/2})|}{\sqrt{\delta(\frac{r_{\alpha/2 - \mu }}{\sigma})}} \leq c_\sigma' \sqrt{\frac{\log(1/\delta)}{m}} \\
  \left| \hat{\mu} - \mu \right| & = |\hat{\mu}(\D_{tail}) - \mu(r_{\alpha/2})| + \lambda(\frac{r_{\alpha} - \mu}{\sigma}) |\hat{\sigma} - \sigma|  \leq c_\mu' \sqrt{\frac{\log(1/\delta)}{m}}
  \end{align*}

  where $\lambda(z)$ is the inverse Mills ratio, and $\delta(z) = 1 + z \lambda(z) - \lambda(z)^2$.

  Finally, we can bound the overall estimation error of $\hat{V}_N$ when $m \geq c_1 \log(1/\delta)$ and with probability at least $1 - \delta$:

  \begin{align*}
    |\hat{V}_N(s) - M(N)| & = |\hat{\mu} + \hat{\sigma} E(N) - \mu - \sigma E(N)| \\
    & \leq |\hat{\mu} - \mu| + | \hat{\sigma} - \sigma| \cdot E(N) \\
    & \leq c_\mu' \sqrt{\frac{\log(1/\delta)}{m}} + c_\sigma' \sqrt{\frac{\log(1/\delta)}{m}} \sqrt{2\log N} \\
    & \leq c_2 \sqrt{ \log (N)} \sqrt{\frac{\log(1/\delta)}{m}}
  \end{align*}
  
  for some constant $c_2 > 0$ depending only on $(\alpha, \sigma)$. This completes the proof.
\end{proof}

\subsection{Bias Error}

We can establish a lemma bounding the approximation bias $|V_N(s) - M(N)|$. This bias arises because the true reward distribution $F_s$ differs from the Gaussian distribution in the "body" of the distribution ($r < r_{\alpha/2}$).

\begin{lemma}
  \label{lemma:bias-error-of-VN}Under Assumption~\ref{assum:gaussian-tail-model}, the difference between the true expected maximum $V_N(s)$ and the Gaussian expected maximum $M(N)$ satisfies:
  
  \begin{equation}
    |V_N(s) - M(N)| \leq c (1-\alpha/2)^{N},
  \end{equation}
  
  where $c$ is a finite constant depending only on $(\alpha, \sigma, C_R)$.
\end{lemma}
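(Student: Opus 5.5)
We will couple the true reward distribution $F=F_s$ with the Gaussian $G=\mathcal{N}(\mu,\sigma^2)$ through their quantile functions. Let $U_1,\dots,U_N \simiid \mathrm{Unif}(0,1)$ and set $R_i = F^{-1}(U_i)$ and $Z_i = G^{-1}(U_i)$. Then $V_N(s)=\E[\max_i R_i]$ and $M(N)=\E[\max_i Z_i]$. By Assumption~\ref{assum:gaussian-tail-model}, the density of $F$ equals $\phi(\cdot;\mu,\sigma^2)$ on $[r_\alpha,\infty)$. Hence $1-F(r)=1-G(r)$ for all $r\ge r_\alpha$, and so $F^{-1}(u)=G^{-1}(u)$ for every $u\ge 1-\alpha$. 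Since the quantile maps are monotone, $\max_i R_i = F^{-1}(U_{(N)})$ and $\max_i Z_i = G^{-1}(U_{(N)})$, where $U_{(N)}$ is the largest uniform. These two maxima coincide on the event $A=\{U_{(N)}\ge 1-\alpha/2\}$. We use the threshold $1-\alpha/2$, which is stricter than $1-\alpha$ and harmless, because it matches the stated rate. The complement has probability $\mathbb{P}(A^c)=(1-\alpha/2)^N$.

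It follows that
\begin{equation*}
|V_N(s)-M(N)| \le \E\big[|F^{-1}(U_{(N)})|\,\mathbb{1}_{A^c}\big] + \E\big[|G^{-1}(U_{(N)})|\,\mathbb{1}_{A^c}\big].
\end{equation*}
On $A^c$ every $U_i$ lies below $1-\alpha/2$, so the maximum of the $R_i$ is at most $r_{\alpha/2}$. For the lower side we use the crude bound $|\max_i R_i|\le \max_i |R_i|$, but it is better to avoid a factor of $N$. Conditionally on $A^c$, the $U_i$ are i.i.d.\ uniform on $[0,1-\alpha/2)$. The conditional maximum $\max_i R_i$ is therefore bounded above by $r_{\alpha/2}$, and bounded below in expectation by a single draw, since $\max_i R_i \ge R_1$. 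This gives
\begin{equation*}
\E[|\max_i R_i| \mid A^c] \le |r_{\alpha/2}| + \E[|R_1| \mid U_1<1-\alpha/2] \le |r_{\alpha/2}| + \frac{\sqrt{C_R}}{1-\alpha/2},
\end{equation*}
where the last step uses Cauchy--Schwarz and $\E[R_s^2]\le C_R$. The same argument applies to the Gaussian term, with $\E[Z^2]=\mu^2+\sigma^2$ in place of $C_R$. Multiplying by $\mathbb{P}(A^c)=(1-\alpha/2)^N$ yields the claimed bound $c(1-\alpha/2)^N$.

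The main obstacle is making the constant depend only on $(\alpha,\sigma,C_R)$, since both $r_{\alpha/2}=\mu+\sigma\Phi^{-1}(1-\alpha/2)$ and $\mu^2+\sigma^2$ involve $\mu$. We plan to control $\mu$ through $C_R$. Since $\E[R^2\mathbb{1}\{R\ge r_\alpha\}]\le C_R$ and the tail $[r_\alpha,\infty)$ is a Gaussian piece of fixed mass $\alpha$ with fixed shape in standardized units, we get $\alpha\cdot\min_{r\ge r_\alpha} r^2\lesssim C_R$ whenever $r_\alpha>0$. In all cases $\E[R^2\mid R\ge r_\alpha]\ge (\E[R\mid R\ge r_\alpha])^2=(\mu+\sigma\lambda(z_\alpha))^2$. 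This gives $(\mu+\sigma\lambda(z_\alpha))^2\le C_R/\alpha$, and hence $|\mu|\le \sqrt{C_R/\alpha}+\sigma\lambda(z_\alpha)$. Substituting this bound on $|\mu|$ into the expressions above makes $c$ a function of $(\alpha,\sigma,C_R)$ only, which completes the plan.
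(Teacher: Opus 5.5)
Your argument is correct, and it takes a different route from the paper's. The paper uses the identity $V_N(s)-M(N)=\int_{-\infty}^{\infty}\bigl(G(r)^N-F(r)^N\bigr)\,dr$. The integrand vanishes above $r_{\alpha/2}$, and below it the paper bounds $F^N+G^N\le(1-\alpha/2)^{N-1}(F+G)$, which gives the constant $\int_{-\infty}^{r_{\alpha/2}}(F+G)\,dr$. You instead couple the two samples through common uniforms. The two maxima then agree pathwise outside an event of probability $(1-\alpha/2)^N$ (indeed $(1-\alpha)^N$), and on that event you bound each maximum by $|r_{\alpha/2}|$ plus a single draw. The two arguments localize the discrepancy to the same place, the event that all $N$ samples land in the body. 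The paper's version is shorter; yours also yields a pathwise, total-variation-type comparison of the two maxima.

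Your proposal is more careful on one point: the dependence of the constant. The paper says its constant is free of $\mu$ by a change of variables and then appeals to $\E[R^2]\le C_R$. But $\int_{-\infty}^{r_{\alpha/2}}F(r)\,dr=\E[(r_{\alpha/2}-R)^+]$ is only bounded by something like $|\mu|+\sigma|z_{\alpha/2}|+\E|R|$. The body of $F$ is arbitrary, so centering at $\mu$ does not remove the location, and an explicit bound on $|\mu|$ is still needed. Your Jensen step on the Gaussian tail supplies it:
\begin{equation*}
(\mu+\sigma\lambda(z_\alpha))^2\le \E[R^2\mid R\ge r_\alpha]\le C_R/\alpha .
\end{equation*}
This makes the dependence of $c$ on $(\alpha,\sigma,C_R)$ alone rigorous, and the same bound would repair the paper's version. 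The remark about $\alpha\cdot\min_{r\ge r_\alpha}r^2$ is unnecessary, since the Jensen bound covers every case.
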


\begin{proof}[Proof of Lemma~\ref{lemma:bias-error-of-VN}]
  
  Let $F(r)$ be the CDF of the true reward distribution and $G(r) = \Phi(r; \mu, \sigma^2)$ be the CDF of the Gaussian approximation. And we have:
  
  \begin{equation*}
    V_N(s) - M(N) = \int_{-\infty}^{\infty} (G(r)^N - F(r)^N) dr.
  \end{equation*}
  
  By Assumption~\ref{assum:gaussian-tail-model}, the CDFs are identical for $r \geq r_{\alpha/2}$:
  
  $$    F(r) = G(r), \quad \forall r \geq r_{\alpha/2},$$
  
  which implies that the integral over $[r_{\alpha/2}, \infty)$ is zero. Therefore, we only need to consider the integral over $(-\infty, r_{\alpha/2}]$:
  
  \begin{equation}
    |V_N(s) - M(N)| = \left| \int_{-\infty}^{r_{\alpha/2}} (G(r)^N - F(r)^N) dr \right| \leq \int_{-\infty}^{r_{\alpha/2}} |G(r)^N - F(r)^N| dr.
  \end{equation}
  
  For $r \leq r_{\alpha/2}$, both CDFs are bounded by $1-\alpha/2$. Then we can further bound the integral:

  \begin{align*}
    \int_{-\infty}^{r_{\alpha/2}} |G(r)^N - F(r)^N| dr & \leq \int_{-\infty}^{r_{\alpha/2}} (G(r)^N + F(r)^N) dr \\
    & \leq (1-\alpha/2)^{N-1} \int_{-\infty}^{r_{\alpha/2}} (G(r) + F(r)) dr.
  \end{align*}

  Let $c = \int_{-\infty}^{r_{\alpha/2}} (G(r) + F(r)) dr$. By the change of variable $u = r - \mu$, we observe that the integral depends only on the centered distributions and the centered threshold $r_{\alpha/2} - \mu$. 
  Thus, $c$ is independent of the location parameter $\mu$. Finally, Assumption~\ref{assum:gaussian-tail-model} guarantees finite second moments ($\mathbb{E}[R^2] < C_R$),
   which implies $\mathbb{E}[|R|] < \infty$ and ensures the integral converges to a finite constant which only depends on $C_R$ and $\sigma$. This completes the proof.
\end{proof}

\subsection{Proof of Theorem~\ref{theorem:error-bound}}

\begin{proof}[Proof of Theorem~\ref{theorem:error-bound}]
  By the triangle inequality, we decompose the total error into the bias error and the estimation error:
  
  \begin{align*}
    |\hat{V}_N(s) - V_N(s)| & \leq |\hat{V}_N(s) - M(N)| + |M(N) - V_N(s)| 
  \end{align*}
  
  The first term is bounded by Proposition~\ref{proposition: estimation-error-of-VN}, and the second term is bounded by Lemma~\ref{lemma:bias-error-of-VN}. Combining these results, we conclude that when $m \geq c_1 \log(1/\delta)$, with probability at least $1 - \delta$,
  
  \begin{align*}
    |\hat{V}_N(s) - V_N(s)| & \leq c_2 \big(\sqrt{\log N} \sqrt{\frac{\log(1/\delta)}{m}} +  (1-\alpha/2)^{N} \big)
  \end{align*}
  
  This completes the proof.
  
\end{proof}

\newpage

\section{Proof of Theorem~\ref{thm:regret_bound}}
\label{appendix:proof-regret-bound}

We first consider the selection error of our algorithm. Let $K$ be smaller than $N$, and $\{s_1, \dots, s_K\}$ be $K$ candidate states with true return levels $\{V_N (s_i)\}_{i=1}^K$ and estimators $\{\hat{V}_N(s_i)\}_{i=1}^K$.
And we define the estimation error for each candidate as $\epsilon_i = \hat{V}_N (s_i) - V_N (s_i)$ for all $i \in [K]$.

By Assumption~\ref{assume:gaussian-tail}, we know for any candidate $s$, Theorem~\ref{theorem:error-bound} applies with some constants $c_1, c_2$ depending on $\sigma(s), \alpha$ and $C_R$. And since 
$\sigma(s) \in [\sigma_{min}, \sigma_{max}]$, there exist uniform constants $c_1, c_2$ depending only on $\sigma_{min}, \sigma_{max}, \alpha, C_R$ such that Theorem~\ref{theorem:error-bound} holds for all candidates.

We can then bound the expectation of the maximal estimation error among all candidates:

\begin{lemma}
\label{lemma:selection-regret}
  Under Assumption~\ref{assume:gaussian-tail}, 
  assume the sample size $m$ satisfies $m \geq c_0 \log K$ for a sufficiently large constant $c_0 > 0$ 
  . Then for sufficiently large $N$, we have:
  \begin{align*}
    \E\left[ \max_{i \in [K]} |\epsilon_i| \right] \leq 
    c_1 \sqrt{\frac{\log N \log K}{m}}
  \end{align*}

  where $c_1$ is a constant that depends on $\sigma_{min}, \sigma_{max}, \alpha, M_\mu, C_R.$
\end{lemma}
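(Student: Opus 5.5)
The plan is to convert the high-probability bound of Theorem~\ref{theorem:error-bound}, which holds uniformly over candidates by the remark preceding the lemma, into a bound on the expected maximum. We split on a good event and control the bad event with a crude moment bound. Fix a confidence level $\delta$, to be chosen as $\delta = 1/(K N)$ or similar. For each $i$ we apply Theorem~\ref{theorem:error-bound} with confidence $\delta$; this requires $m \ge c_1 \log(1/\delta) \asymp \log(KN)$, which is where the hypothesis $m \ge c_0 \log K$ (together with $N$ large) enters. A union bound over $i\in[K]$ then gives an event $\mathcal{G}$ with $\mathbb{P}(\mathcal{G}^c)\le K\delta$. On $\mathcal{G}$,
\[
\max_i|\epsilon_i| \le c_2\Big\{\sqrt{\tfrac{\log N \cdot \log(K/\delta')}{m}} + (1-\alpha/2)^N\Big\}.
\]
Here we rather take $\delta = \delta'/K$, so the per-candidate confidence is $\log(K/\delta')$ and the leading term is of order $\sqrt{\log N\log K/m}$ once $\delta'$ is polynomial in $1/N$. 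The $(1-\alpha/2)^N$ term is absorbed for $N$ large.

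Next comes the bad event. We write $\E[\max_i|\epsilon_i|] \le \E[\max_i|\epsilon_i|\mathbb{1}_{\mathcal{G}}] + \E[\max_i|\epsilon_i|\mathbb{1}_{\mathcal{G}^c}]$ and bound the second term by Cauchy--Schwarz:
\[
\E[\max_i|\epsilon_i|\mathbb{1}_{\mathcal{G}^c}] \le \Big(\sum_i \E[\epsilon_i^2]\Big)^{1/2}\,\mathbb{P}(\mathcal{G}^c)^{1/2}.
\]
Thus we need $\E[\epsilon_i^2]\le \mathrm{poly}(N,m)$. We have $|\epsilon_i|\le |\hat V_N(s_i)|+|V_N(s_i)|$. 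First, $|V_N(s_i)|\le |\mu(s_i)| + \sigma_{\max}\sqrt{2\log N}$ plus a body correction, whose second moment is controlled by $M_\mu$ and $C_R$. Second, $\hat V_N = \hat\mu_{\rm tail} + \hat\sigma(E(N)-\lambda(z))$ is built from sample moments of at most $m$ rewards, so $\hat\mu_{\rm tail}^2$ and $\hat\sigma^2$ are bounded by $\frac{c}{\alpha}\cdot\frac1m\sum_j r_{ij}^2$ up to constants. Conditionally on $s_i$ the expectation of this is at most $C_R$, so $\E[\hat V_N^2]\lesssim C_R\log N$. Hence $\sum_i\E[\epsilon_i^2]\lesssim K(\log N + M_\mu + C_R)$. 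Choosing $\delta' = N^{-2}$ (so $\mathbb{P}(\mathcal{G}^c)\le N^{-2}$) makes the bad-event contribution $O(\sqrt{K\log N}/N)=o(\sqrt{\log N/m})$, since $K\le N$. Meanwhile the good-event term stays $\lesssim\sqrt{\log N\,\log(KN^2)/m}$.

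The main obstacle is that choosing $\delta'$ polynomially small in $N$ introduces $\log N$ into the confidence term, giving $\sqrt{\log N(\log K+\log N)/m}$ rather than $\sqrt{\log N\log K/m}$. To obtain the stated form, I would first try replacing the union-bound-plus-Cauchy--Schwarz step by tail integration: $\E[\max_i|\epsilon_i|]\le u_0+\int_{u_0}^\infty \mathbb{P}(\max_i|\epsilon_i|>u)\,du$, with $u_0 \asymp\sqrt{\log N\log K/m}$. Inverting Theorem~\ref{theorem:error-bound} gives $\mathbb{P}(|\epsilon_i|>u)\le \exp(-c\,m u^2/\log N)$ for $u$ in the admissible range $\log(1/\delta)\le m/c_1$, and this range extends up to $u$ of constant order, so the integral over it is $O(u_0)$. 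Beyond constant $u$, the remaining tail is handled by the second-moment (Markov/Cauchy--Schwarz) bound above, restricted to the event $\{|\epsilon_i|\gtrsim 1\}$, whose probability is $\le e^{-cm/\log N}$. If this sub-Gaussian range is still not wide enough, I would accept the slightly weaker bound with $\log(KN)$, since under $K=\lfloor N/2m\rfloor$ one has $\log K\asymp\log N$ and the two forms agree up to constants. This is likely how the lemma is used in Theorem~\ref{thm:regret_bound}.
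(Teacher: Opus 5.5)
Your final strategy is the paper's strategy: integrate the tail of $\max_i|\epsilon_i|$ using the sub-Gaussian bound obtained by inverting Theorem~\ref{theorem:error-bound}, then control the far tail by Cauchy--Schwarz with $\E[\max_i\epsilon_i^2]\lesssim K\log N$. As written, however, it does not close under the stated hypothesis $m\ge c_0\log K$, because you cut the sub-Gaussian range too early. You stop at $u$ of constant order. There the exceptional event has probability only about $K e^{-cm/\log N}$, so the Cauchy--Schwarz term is about $K\sqrt{\log N}\,e^{-cm/(2\log N)}$. That term is dominated by $\sqrt{\log N\log K/m}$ only when $m\gtrsim\log N\log K$. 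In the regime of Theorem~\ref{thm:regret_bound}, with $m\asymp\log N$ and $K\asymp N/\log N$, it is of order $N/\sqrt{\log N}$.

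The missing observation is that the admissible range is much wider than constant order. The constraint $\log(1/\delta)\le m/c_1$, written in terms of $u=c_2'\sqrt{\log N\log(1/\delta)/m}$, allows $u$ up to $t_{\max}=c_2'\sqrt{\log N/c_1}$. At that endpoint the failure probability is $\delta=e^{-m/c_1}$, with no $\log N$ in the exponent. If you cut at $t_{\max}$, as the paper does, you get $\mathbb{P}(\max_i|\epsilon_i|>t_{\max})\le Ke^{-m/c_1}$. The bad-event contribution is then $\lesssim K\sqrt{\log N}\,e^{-m/(2c_1)}$, which is negligible precisely when $m\ge c_0\log K$. The integral $\int_0^{t_{\max}}\min(1,Ke^{-\lambda u^2})\,du$ with $\lambda\asymp m/\log N$ still gives $O(\sqrt{\log N\log K/m})$.

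Your first, union-bound attempt has a separate error. Applying Theorem~\ref{theorem:error-bound} at confidence $\delta'/K$ with $\delta'=N^{-2}$ requires $m\gtrsim\log(KN^2)$. This is not implied by $m\ge c_0\log K$, contrary to your remark about where the hypothesis enters. It also only yields $\sqrt{\log N\log(KN)/m}$.

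Your fallback therefore proves a weaker lemma under a stronger hypothesis. That weaker version does suffice for Theorem~\ref{thm:regret_bound}, where $m\ge c_0\log N$ and only $\log N/\sqrt m$ is needed, since $\log(KN)\le 2\log N$; your claim $\log K\asymp\log N$ is not needed for this and fails when $m$ is close to $N$. But it is not the stated lemma. Separately, your second-moment bound via $\hat\mu_{\rm tail}^2,\hat\sigma^2\lesssim\alpha^{-1}m^{-1}\sum_j r_{ij}^2$ is correct, and more careful than the paper's corresponding step.
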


\begin{proof}
  First, we derive a tail probability bound from Theorem~\ref{theorem:error-bound}. The theorem states that for any $\delta$ satisfying the condition $m \geq c_1 \log(1/\delta)$ (equivalent to $\delta \geq e^{-m/c_1}$), 
  we have with probability at least $1-\delta$:
  
  $$|\epsilon_i| \leq c_2 \big( \sqrt{\log N} \sqrt{\frac{\log(1/\delta)}{m}}+ (1-\alpha/2)^{N} \big).$$

  For sufficiently large $N$, the bias term $(1-\alpha/2)^{N}$ is dominated by the first term, which means there exists a new constant $c_2'$ such that:

  $$|\epsilon_i| \leq c_2' \sqrt{\log N} \sqrt{\frac{\log(1/\delta)}{m}}.$$

  Let $t$ be a positive threshold, then we know that when $t \leq t_{max}$, the following tail bound holds:

  $$ \mathbb{P}(|\epsilon_i| > t) \leq \exp\left( - \frac{m t^2}{c_2^{'2} \log N} \right) $$

  where $t_{max} = c_2' \sqrt{\frac{\log N}{c_1}}$.

  We define the good event region based on this maximum valid threshold. Let $E$ be the event where the error for all candidates is within the valid range of the theorem:
  
  $$E = \left\{ \max_{i \in [K]} |\epsilon_i| \leq t_{max} \right\}$$
  
  Note that the complement event $E^C$ corresponds to the case where the error exceeds the range covered by the sample size condition in Theorem~\ref{theorem:error-bound}. 
  The probability of this occurring for a single arm is at most $\delta_{min} = e^{-m/c_1}$. By a union bound:
  
  $$\mathbb{P}(E^C) \leq K e^{-m/c_1}$$
  
  We decompose the expectation:
  
  \begin{align*}
    \E\left[ \max_{i \in [K]} |\epsilon_i| \right]& = \E\left[ \max_{i \in [K]} |\epsilon_i| \cdot \Ind(E) \right] + \E\left[ \max_{i \in [K]} |\epsilon_i| \cdot \Ind(E^C)\right]
  \end{align*}
  
  \textbf{Term 1: Expectation on the Good Event.} For the first term, we use the integral identity for expectation. 
  Let $M = \max_{i \in [K]} |\epsilon_i|$.
  
  \begin{align*}
    \E[ M \cdot \Ind(E) ] &= \int_{0}^{t_{max}} \mathbb{P}(M > t) dt \\ 
    &\leq \int_{0}^{t_{max}} \min\left(1, \sum_{i=1}^K \mathbb{P}(|\epsilon_i| > t) \right) dt
  \end{align*}
  
  Let $\lambda = \frac{m}{c_2^{'2} \log N}$. Then:
  
  \begin{align*}
    \E[ M \cdot \Ind(E) ] &\leq \int_{0}^{\infty} \min\left(1, K e^{-\lambda t^2} \right) dt
  \end{align*}
  
  We split the integral at $t_0 = \sqrt{\frac{\log K}{\lambda}}$. For $t \leq t_0$, the integrand is bounded by 1. For $t > t_0$, it is bounded by $K e^{-\lambda t^2}$.
  
  \begin{align*}
    \int_{0}^{\infty} \min\left(1, K e^{-\lambda t^2} \right) dt&= t_0 + \int_{t_0}^\infty K e^{-\lambda t^2} dt \\ 
    &\leq \sqrt{\frac{\log K}{\lambda}} + \frac{K}{\lambda t_0} e^{-\lambda t_0^2} \times \frac{1}{2} \quad \text{(using } \int_x^\infty e^{-t^2} dt \leq \frac{e^{-x^2}}{2x} \text{)} \\
    &= \sqrt{\frac{\log K}{\lambda}} + \frac{K}{\lambda \sqrt{\frac{\log K}{\lambda}}} \frac{1}{K} \cdot c \\ 
    &\leq c \sqrt{\frac{\log K}{\lambda}} = c' \sqrt{\frac{\log N \log K}{m}}
  \end{align*}

  for some constant $c, c'$ depending on $(\sigma_{min}, \sigma_{max}, \alpha, C_R)$.
  
  \textbf{Term 2: Expectation on the Bad Event.} For the second term, we use the Cauchy-Schwarz inequality:
  
  \begin{align*}
    \E\left[ \max_{i \in [K]} |\epsilon_i| \cdot \Ind(E^C)\right]& \leq \sqrt{\E\left[ \max_{i \in [K]} \epsilon_i^2 \right]} \sqrt{\mathbb{P}(E^C)}
  \end{align*}
  
  We have derived $\mathbb{P}(E^C) \leq K \exp(-m/c_1)$. So it suffices to bound $\E\left[ \max_{i \in [K]} \epsilon_i^2 \right]$. And we know:

  \begin{align*}
    \E\left[ \max_{i \in [K]} \epsilon_i^2 \right] \leq 2\E\left[ \max_{i \in [K]} V_N(X_i)^2 \right] + 2 \E\left[ \max_{i \in [K]} \hat{V}_N(X_i)^2 \right]
  \end{align*}

  For the first term, we have:
  \begin{align*}
    \E\left[ \max_{i \in [K]} V_N(X_i)^2 \right] & = \E\left[ \max_{i \in [K]} \left( \mu_i + \sigma_i E(N) \right)^2 \right] \\
    & \leq 2 \E\left[ \max_{i \in [K]} \mu_i^2 \right] + 2 E(N)^2 \sigma_{max}^2 \\
    & \leq 2K \E\left[ \mu_i^2 \right] + 2 \sigma_{max}^2 \log N \\
    & \leq c (K + \log N)
  \end{align*}
  for some constant $c$ depending on $(\sigma_{max}, M_\mu)$. Here the second last inequality holds since $E(N) \leq \sqrt{2 \log N}$, and the last inequality holds since by Assumption~\ref{assume:gaussian-tail}, $\mu_i$ is sampled from a distribution with bounded second moment $\E[\mu_i^2] \leq M_\mu$.

  And for the second term, we have:
  \begin{align*}
    \E\left[ \max_{i \in [K]} \hat{V}_N(X_i)^2 \right] 
    & \leq K  \E\left[ \hat{V}_N(X_i)^2 \right] \\
    & \leq 2K \E\left[ \hat{\mu}_i^2 \right] + 2K E(N)^2 \E\left[ \hat{\sigma}_i^2 \right]
  \end{align*}

  Since $\hat{\sigma}^2 = C \hat{\sigma}^2(D_{tail})$ and $\hat{\mu} = \hat{\mu}(D_{tail}) - C \hat{\sigma}$ for some constant $C$ depending on $(\alpha, \sigma_{min}, \sigma_{max})$, we only need to bound the moments of the empirical mean and variance on the tail dataset $D_{tail}$
  . By
  Assumption~\ref{assume:gaussian-tail}, we know $\E[R^2] \leq C_R$, and by properties of variance and mean, we have:
  $$ \E[\hat{\sigma}^2(D_{tail})] \leq C_R, \quad \E[\hat{\mu}(D_{tail})^2] \leq C_R $$

  Therefore, there exists a constant $c'$ depending on $(\sigma_{min}, \sigma_{max}, \alpha, C_R)$ such that:

  $$\E\left[ \max_{i \in [K]} \hat{V}_N(X_i)^2 \right]   \leq c' K \log N$$
  
  Considering that $K + \log N \leq 2 K \log N$ for sufficiently large $N$, we know there exists a constant $c$ depending on $(\sigma_{min}, \sigma_{max}, \alpha, C_R, M_\mu)$ such that:
  \begin{align*}
    \E\left[ \max_{i \in [K]} \epsilon_i^2 \right] \leq c K \log N
  \end{align*}

  Therefore:
  
  \begin{align*}
    \E\left[ \max_{i \in [K]} |\epsilon_i| \cdot \Ind(E^C)\right]& \leq \sqrt{cK \log N} \cdot \sqrt{K} e^{-m/2c_1} \\ 
    &= C K \sqrt{\log N} \exp(-C' m)
  \end{align*}
  
  where $C, C'$ depends on $(\sigma_{min}, \sigma_{max}, \alpha, C_R, M_\mu)$.

  Combining Term 1 and Term 2 gives:
  \begin{align*}
    \E\left[ \max_{i \in [K]} |\epsilon_i| \right] \leq c\left(\sqrt{\frac{\log N \log K}{m}} + K \sqrt{\log N} \exp(-C' m)  \right)
  \end{align*} 

  when $m \geq c_0 \log K$ for constant $c_0$ depending on $(\sigma_{min}, \sigma_{max}, \alpha, C_R, M_\mu)$, we know the second term is dominated by the first term,
  and thus we have:
  
  \begin{align*}
    \E\left[ \max_{i \in [K]} |\epsilon_i| \right] \leq c_1 \sqrt{\frac{\log N \log K}{m}}
  \end{align*}
  
  where $c_1$ is a constant depending on $(\sigma_{min}, \sigma_{max}, \alpha, C_R, M_\mu)$.
  This concludes the proof.
\end{proof}

Now we are ready to prove Theorem~\ref{thm:regret_bound}.

\begin{proof}[Proof of Theorem~\ref{thm:regret_bound}]

   We apply a coupling argument. Let $\xi = (s_1, s_2, \dots)$ be an infinite sequence of states sampled from the parameter prior distribution,
  and we condition on a fixed realization of $\xi$.

   Conditioned on $\xi$, we know that the states sampled by Scaling-Law Guided(SLG) algorithm with budget $N$ and the oracle algorithm $\A^*$ with budget $\lfloor N/2m \rfloor$
  are the same since $K = \lfloor N/2m \rfloor$ candidates are sampled in both algorithms. Denote the sampled states set as $S_K = \{s_1, s_2, \dots, s_K\}$, since
  $N \geq 4(m+1)$, we know $ N - Km \geq \lfloor N/2m \rfloor$.

   Then we can bound the performance of the oracle algorithm $\A^*$ as follows:
  \begin{align*}
    \E[V_{\lfloor N/2m \rfloor}(\A^*) \mid \xi] & = \max_{s \in S_K} V_{\lfloor N/2m \rfloor}(s) \\
    & \leq  \max_{s \in S_K} V_{N -Km}(s)
  \end{align*}
  We denote $s^* = \arg\max_{s \in S_K} V_{N-Km}(s)$ as the optimal state, and $\hat{s} = \arg\max_{s \in S_K} \hat{V}_{N-Km}(s)$ as the state selected by SLG algorithm based on the estimated expected maximal reward.

  Then we can bound the difference between the oracle algorithm $\A^*$ and SLG algorithm conditioned on $\xi$ as follows:
  \begin{align*}
    \E[V_{\lfloor N/2m \rfloor}(\A^*) - V_{N}(\A_{\text{SLG}}) \mid \xi] & \leq  V_{N - Km}(s^*) - V_{N - Km}(\hat{s})  \\
    & = V_{N - Km}(s^*) - \hat{V}_{N - Km}(s^*) + \hat{V}_{N - Km}(s^*) - \hat{V}_{N - Km}(\hat{s}) + \hat{V}_{N - Km}(\hat{s}) - V_{N - Km}(\hat{s}) \\
    & \leq 2 \max_{s \in S_K} |\epsilon_s|
  \end{align*}

  where $\epsilon_s = \hat{V}_{N - Km}(s) - V_{N - Km}(s)$ is the estimation error for each candidate state $s$. Since $K = \lfloor N/2m \rfloor \leq N-Km$ and $m \geq c_0 \log K$
  , we can take 
  expectation over $\xi$ on both sides and apply Lemma~\ref{lemma:selection-regret} to get:
  
  \begin{align*}
    V_{\lfloor N/2m \rfloor}(\A^*) - V_{N}(\A_{\text{SLG}}) \leq \E\left[ 2 \max_{s \in S_K} |\epsilon_s| \right] \leq c_1 \sqrt{\frac{\log (N-Km) \log K}{m}} \leq c_1 \frac{\log N}{\sqrt{m}}
  \end{align*}

  for some constant $c_1$ depending on $(\sigma_{min}, \sigma_{max}, \alpha, C_R, M_\mu)$. This completes the proof.

\end{proof}

\section{Formal Regret Analysis under Regularity Conditions}
\label{appendix:formal_regret_analysis}

In this section, we formalize the intuition presented in Section~\ref{subsubsec:regret} regarding the vanishing regret of SLG against the oracle $\A^*$ operating at the same budget $N$.

We first introduce the necessary regularity conditions on the latent parameter distributions.

\begin{assumption}
  \label{assum:regularity}
  For random state $s$ generated from prompt $x$, the latent parameters $(\mu(s), \sigma(s))$ satisfies
  \begin{enumerate}
    \item $\mu (s)$ and $\sigma (s)$ are independent.
    \item The random variable $\sigma (s)$ has sufficient mass near its maximum: $P(\sigma (s) \geq \sigma_{\max} - \delta) \geq \delta^\alpha$ for some $\delta > 0$.
    \item The expected maximum of the $\mu (s)$ scales as:
    $$\Big| \E[\max_{i \in [N]} \mu_i] - (\mu_0 + C_\mu (\log N)^\gamma)  \Big| \leq \mathcal{O}( (\log N)^{-\beta} )$$
    for constants $\mu_0, C_\mu, \beta > 0$ and tail index $\gamma \in (0,1)$.
  \end{enumerate}
\end{assumption}

And we have the following corollary of Theorem~\ref{thm:regret_bound}:

\begin{corollary}
  \label{cor:regret_to_N}
  Under Assumption~\ref{assume:gaussian-tail} and Assumption~\ref{assum:regularity},
  for sufficiently large $N$,
  by choosing the parameters $K \asymp N (\log N)^{-3}$ and $m \asymp (\log N)^{3}$, we have
  \begin{align*}
    V_{N}(\mathcal{A}^*) - V_N(\A_{\text{SLG}}) \leq \frac{c\log \log N}{(\log N)^{\min\{\beta, 1-\gamma, 1/2 \}}} .
  \end{align*}
  for some constant $c > 0$ depending on the model parameters $(\alpha, \sigma_{\min}, \sigma_{\max}, C_R, M_{\mu})$.
\end{corollary}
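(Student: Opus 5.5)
The proof will be a telescoping decomposition around Theorem~\ref{thm:regret_bound}:
\[
V_N(\A^*) - V_N(\A_{\text{SLG}}) = \big[V_N(\A^*) - V_{K}(\A^*)\big] + \big[V_{K}(\A^*) - V_N(\A_{\text{SLG}})\big],
\]
with $m \asymp (\log N)^3$ and $K = \lfloor N/2m\rfloor \asymp N(\log N)^{-3}$.

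\textbf{Second bracket.} Theorem~\ref{thm:regret_bound} applies directly: $m \geq c_0\log N$ and $N\geq 4(m+1)$ hold for large $N$. It gives a bound $c\log N/\sqrt{m} \asymp (\log N)^{-1/2}$, which accounts for the exponent $1/2$ in the minimum.

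\textbf{First bracket: representation.} This is the budget-reduction loss of the oracle and is the main work. Using the Gaussian tail model and Lemma~\ref{lemma:bias-error-of-VN}, I would write
\[
V_n(\A^*) = \E\Big[\max_{i\le n} \big(\mu_i + \sigma_i E(n)\big)\Big] + O\big((1-\alpha/2)^n\big),
\]
with the bias term negligible.

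\textbf{First bracket: upper bound at budget $N$.} Since $\max_i(\mu_i+\sigma_iE(N)) \le \max_i \mu_i + \sigma_{\max}E(N)$, item 3 of Assumption~\ref{assum:regularity} gives
\[
V_N(\A^*) \le \mu_0 + C_\mu(\log N)^\gamma + \sigma_{\max}E(N) + O\big((\log N)^{-\beta}\big).
\]

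\textbf{First bracket: lower bound at budget $K$.} Here I would use independence (item 1) and the mass condition (item 2). Take $\delta_N \asymp 1/\log N$ and split the $K$ states into two halves.
\begin{itemize}
\item On the first half, pick the index maximizing $\mu$. Its $\sigma$ is at least $\sigma_{\min}$, but this is too weak on its own.
\item Better: among the $K$ states, the subset with $\sigma_i\ge\sigma_{\max}-\delta_N$ has size about $K\delta_N^\alpha = K/\mathrm{polylog}(N)$ with overwhelming probability, by Chernoff.
\item By independence, the $\mu$'s on that subset are i.i.d.\ from the $\mu$-distribution. The maximum over that subset then has expectation $\mu_0 + C_\mu(\log(K\delta_N^\alpha))^\gamma - O((\log N)^{-\beta})$.
\end{itemize}
Choosing the argmax of $\mu$ within this subset gives
\[
V_K(\A^*) \ge \mu_0 + C_\mu\big(\log N - O(\log\log N)\big)^\gamma + (\sigma_{\max}-\delta_N)E(K) - O\big((\log N)^{-\beta}\big).
\]

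\textbf{First bracket: combining.} Subtract the two bounds. The $\mu$ part is controlled by the mean value theorem, $(\log N)^\gamma - (\log N - O(\log\log N))^\gamma \lesssim \log\log N/(\log N)^{1-\gamma}$. The $\sigma$ part consists of $\sigma_{\max}(E(N)-E(K))$ and $\delta_N E(K)$.
\begin{itemize}
\item Using $E(n) = \sqrt{2\log n} + O(\log\log n/\sqrt{\log n})$, we get $E(N) - E(K) \lesssim \log\log N/\sqrt{\log N}$.
\item $\delta_N E(K) \lesssim (\log N)^{-1/2}$.
\end{itemize}
Both are covered by the $(\log N)^{-1/2}$ rate up to the $\log\log N$ factor.

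\textbf{Main obstacle.} The hardest step is the lower bound on $V_K(\A^*)$. It must simultaneously capture a large $\mu$ and near-maximal $\sigma$, which is why independence and the mass condition are needed. The bad event of the Chernoff bound must be handled, via Cauchy--Schwarz and $\E[\mu^2]\le M_\mu$, contributing only $O(\sqrt{K}e^{-cK\delta_N^\alpha})$. Collecting terms gives the stated bound with exponent $\min\{\beta,1-\gamma,1/2\}$.
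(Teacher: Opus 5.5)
Your proof is correct, but it is organized differently from the paper's appendix proof.

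\textbf{The paper's route.} It inserts a restricted oracle $\hat{\A}$: draw $K$ states, pick the best by the true $V_N$, and spend the \emph{full} budget $N$ on it. The regret is then split as $[V_N(\A^*)-V_N(\hat{\A})]+[V_N(\hat{\A})-V_N(\A_{\text{SLG}})]$.
\begin{itemize}
\item The second piece is Lemma~\ref{lemma:regret_to_hat_A}. It re-derives the selection error from Lemma~\ref{lemma:selection-regret} and charges the sample shortfall separately as $\sigma_{\max}(E(N)-E(N-Km))\lesssim Km/(N\sqrt{\log N})$.
\item The first piece is Lemma~\ref{lemma:oracle-comparison}. It only loses candidate states ($N\to K$) and keeps the full factor $E(N)$ on the $\sigma$-side.
\end{itemize}

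\textbf{Your route.} You pass through $\A^*$ at the reduced budget $K=\lfloor N/2m\rfloor$ and invoke Theorem~\ref{thm:regret_bound} as a black box. This is exactly the heuristic sketched in the main text after that theorem. It also fits the description of the result as a corollary of Theorem~\ref{thm:regret_bound} better than the appendix proof does.

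The cost is that your first bracket loses both states and final samples. This produces two extra terms: $\sigma_{\max}(E(N)-E(K))\asymp\log\log N/\sqrt{\log N}$ (since $\log(N/K)\asymp\log\log N$), and $\delta_N E(K)\lesssim(\log N)^{-1/2}$. Both are absorbed, because the target exponent is capped at $1/2$ and the bound already carries a $\log\log N$ factor.

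\textbf{What each buys.} The paper's split accounts for the sample-size loss more finely, and its Lemma~\ref{lemma:regret_to_hat_A} is usable for general $(K,m)$ trade-offs. Your route needs no new selection-error lemma. However, it is tied to the coupling $K=\lfloor N/2m\rfloor$ built into Theorem~\ref{thm:regret_bound}, which is one admissible instance of the corollary's parameter choice.

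You are also more careful than the paper at two points:
\begin{itemize}
\item You retain the $O((1-\alpha/2)^n)$ tail-approximation bias from Lemma~\ref{lemma:bias-error-of-VN}.
\item You control the failed-Chernoff event by Cauchy--Schwarz. The paper's step $V_N(\hat{\A}_\delta)\geq\E[R_N(\hat{\A}_\delta)\mid E]\,\mathbb{P}(E)$ tacitly assumes nonnegative rewards.
\end{itemize}

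Finally, the ``split the $K$ states into two halves'' remark in your lower-bound step is never used and can be dropped.
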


While Assumption~\ref{assum:regularity} posits independence between $\mu(s)$ and $\sigma(s)$ for analytical simplicity, this condition can be relaxed. 
As shown in the proof provided in next section, $\A_{\text{SLG}}$ predominantly selects states with variance near $\sigma_{\max}$ as $N$ grows. Consequently, it suffices to assume that the tail-scaling property of $\mu(s)$ (Condition 3) holds \textit{conditional} on the variance being close to $\sigma_{\max}$. Intuitively, this only requires that the mean and variance are not strongly negatively correlated in the high-variance region.
    
Conditions (2) and (3) are mild regularity conditions and are satisfied by many standard distributions. For instance, if the means $\mu(s)$ follow a Gaussian distribution and the variances $\sigma(s)$ are uniformly distributed, then the regularity conditions hold with $\alpha = 1$, $\gamma = 1/2$, and any $\beta > 0$.

We provide the proof of Corollary~\ref{cor:regret_to_N} in next section.

\subsection{Proof of Corollary~\ref{cor:regret_to_N}}
\label{appendix:proof-corollary-regret-to-N}

To help us evaluate the difference between $\A_{\text{SLG}}$ with exploration width $K$ and the full-information oracle $\A^*$, we first define an intermediate 
oracle algorithm $\hat{\A}$ that knows the true expected maximal reward $V_N(s)$ for each candidate state $s$ but only selects among the $K$ candidates sampled beforehand.

\begin{definition}
   The Oracle $\hat{\A}$ first samples a pool of $K$ candidate states $\mathcal{S}_K = \{s_1, \dots, s_K\}$ from the prompt $x$. It then identifies the optimal state $s^{*} = \operatorname*{argmax}_{s \in \mathcal{S}_K} V_N(s)$ using perfect parameter information and dedicates the entire budget $N$ to sampling from it.
\end{definition}

Then we can compare the performance of $\A_{\text{SLG}}$ with $\hat{\A}$ first, which is shown in the following lemma.

\begin{lemma}
\label{lemma:regret_to_hat_A}
Under Assumptions \ref{assume:gaussian-tail},
let $N$ be the total sampling budget and $m$ be the estimation sample size. 
When we choose the sample size scales as $m = c_0 \left( \frac{N}{K} \right)^{2/3} \log(N)$ for some constant $c_0 > 0$, 
the performance gap between $\A_{\text{SLG}}$ and the oracle $\hat{\A}$ satisfies:
\begin{equation}
    V_{N}(\hat{\mathcal{A}})- V_N(\A_{\text{SLG}}) \leq C \left( \frac{K}{N} \right)^{1/3} \sqrt{\log N}
\end{equation}
where $C$ is a constant depending on $(\sigma_{min}, \sigma_{max}, \alpha, C_R, M_\mu)$.
\end{lemma}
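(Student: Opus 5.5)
We compare $\A_{\text{SLG}}$ and $\hat{\A}$ on the same draw of $K$ candidate states, in the spirit of the proof of Theorem~\ref{thm:regret_bound}. Condition on the candidate pool $\mathcal{S}_K$. The oracle $\hat{\A}$ then earns $\max_{s\in\mathcal{S}_K} V_N(s)=V_N(s^*)$. SLG selects $\hat{s}=\operatorname{argmax}_{s}\hat{V}_N(s)$ and spends $N-Km$ fresh samples on it; the $m$ pilot samples already drawn from $\hat s$ can only help. Hence
\begin{equation*}
V_N(\hat{\A})-V_N(\A_{\text{SLG}}) \le \E\big[V_N(s^*)-V_{N-Km}(\hat s)\big],
\end{equation*}
and we split this as
\begin{equation*}
\big[V_N(s^*)-V_N(\hat s)\big]+\big[V_N(\hat s)-V_{N-Km}(\hat s)\big].
\end{equation*}

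\emph{Selection term.} The standard three-term argument gives $V_N(s^*)-V_N(\hat s)\le 2\max_{s\in\mathcal{S}_K}|\epsilon_s|$, where $\epsilon_s=\hat V_N(s)-V_N(s)$. We require $m\ge c_0\log K$, which holds for the prescribed $m$ since $K\le N$. Lemma~\ref{lemma:selection-regret} then bounds the expectation by $c\sqrt{\log N\log K/m}\le c\log N/\sqrt m$.

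\emph{Budget-loss term.} Assumption~\ref{assume:gaussian-tail} applies to every state. Combining it with Lemma~\ref{lemma:bias-error-of-VN}, $V_n(s)=\mu(s)+\sigma(s)E(n)+O((1-\alpha/2)^n)$ uniformly over states, so
\begin{equation*}
V_N(\hat s)-V_{N-Km}(\hat s)\le \sigma_{\max}\big(E(N)-E(N-Km)\big)+O\big((1-\alpha/2)^{N-Km}\big).
\end{equation*}
We use the Gaussian maximum asymptotics $E(n)=\sqrt{2\log n}+O(\log\log n/\sqrt{\log n})$, or more cleanly the fact that $E$ is concave in $\log n$ with derivative $\frac{d E}{d\log n}\lesssim 1/\sqrt{\log n}$. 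Writing $\rho=Km/N$, this gives $E(N)-E(N(1-\rho))\lesssim \frac{\log\frac{1}{1-\rho}}{\sqrt{\log N}}\lesssim \rho$ when $\rho\le 1/2$. In fact the loss is much smaller than $\rho$ because of the $1/\sqrt{\log N}$ factor.

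\emph{Balancing.} The total bound is $\frac{\log N}{\sqrt m}+\frac{Km}{N}$ up to constants. With $m=c_0(N/K)^{2/3}\log N$ the first term is $\sqrt{\log N}(K/N)^{1/3}/\sqrt{c_0}$. The second term is $c_0(K/N)^{1/3}\log N$, which is off by a $\sqrt{\log N}$ factor. Using the sharper bound $E(N)-E(N-Km)\lesssim \rho/\sqrt{\log N}$ recovers $(K/N)^{1/3}\sqrt{\log N}$. Both terms are then $O\big((K/N)^{1/3}\sqrt{\log N}\big)$.

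The main obstacles are twofold. First, the derivative bound on $E(\cdot)$ must be made rigorous; I would try Gaussian maxima comparison or bound $E(N)-E(n)$ directly through $\int(\Phi^n-\Phi^N)$. Second, we must keep $\rho\le 1/2$, so that $N-Km\gtrsim N$ and the exponential bias stays negligible. This needs $K\lesssim N/m$, which forces $K/N\lesssim(\log N)^{-3}$; that is consistent with the regime of Corollary~\ref{cor:regret_to_N}, and I would state it as an implicit condition.
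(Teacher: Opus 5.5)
Your proposal is correct and follows the paper's proof essentially step for step. It uses the same coupling on the candidate pool, the same split into a selection term bounded via Lemma~\ref{lemma:selection-regret} and a budget-loss term bounded by $\sigma_{\max}(E(N)-E(N-Km)) \lesssim Km/(N\sqrt{\log N})$, and the same balancing choice $m = c_0 (N/K)^{2/3}\log N$. You are in fact slightly more careful than the paper on two points: it drops the exponentially small body-bias term $O((1-\alpha/2)^{N-Km})$, and it uses $N-Km \ge N/2$ (equivalently $K/N \lesssim (\log N)^{-3}$ under the prescribed $m$) without stating it as a hypothesis.
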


\begin{proof}[Proof of Lemma~\ref{lemma:regret_to_hat_A}]
  We apply the same coupling argument as in the proof of Theorem~\ref{thm:regret_bound}.
  Let $\xi = (s_1, s_2, \dots)$ be an infinite sequence of states sampled from the parameter prior distribution,
  and we condition on a fixed realization of $\tau$.


Conditioned on $\xi$, we know that the states sampled by Scaling-Law Guided(SLG) algorithm and the oracle algorithm $\hat{\A}$ are the same, and we denote 
$s^*$ as the optimal state with the highest true expected maximal reward among the $K$ candidates, i.e., $s^* = \arg\max_{i \in [K]} V_N(s_i)$,
and $\hat{s}$ as the state selected by SLG algorithm based on the estimated expected maximal reward, i.e., $\hat{s} = \arg\max_{i \in [K]} \hat{V}_N(s_i)$.

Then we can decompose the performance gap between SLG and the oracle algorithm $\hat{\A}$ as follows:
\begin{align*}
  V_{N}(\hat{\A}) - V_N(\A_{\text{SLG}}) 
  & \leq \E[ V_N(s^*) - V_{N - Km}(\hat{s}) ] \\ 
  & = \E[V_N(s^*) - V_N(\hat{s})] + \E[V_N(\hat{s}) - V_{N - Km}(\hat{s})]
\end{align*}

where the inequality holds because conditioned on $\tau$, the expected 
reward of SLG algorithm is lower bounded by the expected maximal reward of the selected state $\hat{s}$ with $N - Km$ samples.

For the first term, using the same technique as in the proof of Theorem~\ref{thm:regret_bound} and by applying Lemma~\ref{lemma:selection-regret}, we have:
\begin{align*}
  \E[V_N(s^*) - V_N(\hat{s})] \leq C_1 \sqrt{\frac{\log N \log K}{m}}
\end{align*}

for some constant $C_1$ depending on $(\sigma_{min}, \sigma_{max}, \alpha, C_R, M_\mu)$.

For the second term, we analyze the regret caused by the reduced sample size. Using the scaling law property, we have:
\begin{align*}
 V_N (\hat{s}) - V_{N - Km} (\hat{s}) 
  & =  \sigma(\hat{s}) (E(N) - E(N-Km)) \\
  & \leq \sigma_{max} (E(N) - E(N-Km))
\end{align*}
where $E(N)$ is the expected maximum of $N$ standard normal variables.
Since the above inequality holds for any fixed $\tau$, we can take expectation over $\tau$ on both sides, and since $E(n) - E(m) \leq 2(\sqrt{\log n} - \sqrt{\log m})$ for any $n > m \geq 3$, we have:
\begin{align*}
   \E\left[ V_N (\hat{s}) - V_{N - Km} (\hat{s}) \right] & \leq 2 \sigma_{max} (\sqrt{\log N} - \sqrt{\log (N - Km)}) \\
   & \leq \sigma_{max} \frac{Km}{(N-Km) \sqrt{\log (N - Km)}}
\end{align*}

And considering that $N - Km \geq \frac{N}{2}$, there exists a constant $C_2$ depending on $\sigma_{max}$ such that:
\begin{align*}
   \E\left[ V_N (\hat{s}) - V_{N - Km} (\hat{s}) \right] & \leq C_2 \frac{K}{N \sqrt{\log N}} m
\end{align*}

Combining the two parts, the total performance gap between $\hat{\A}$ and $\A_{\text{SLG}}$ is bounded by:
\begin{align*}
  V_{N}(\hat{\A}) - V_N(\A_{\text{SLG}})
   &\leq C_1 \sqrt{\frac{\log N \log K}{m}} + C_2 \frac{K m}{N \sqrt{\log N}}
\end{align*}

To minimize the upper bound, we choose the sample size $m$ to balance the order of the two terms,  which suggests setting $m = c_0 (N/K)^{2/3} \log N$ for some constant $c_0 > 0$.
Substituting this choice of $m$ back into the inequality yields:
\begin{align}
  \label{eq:hatA-SLG-regret}
  V_{N}(\hat{\A}) - V_N(\A_{\text{SLG}}) & \leq C \left( \frac{K}{N} \right)^{1/3} (\log N)^{1/2}
\end{align}

where $C$ is a universal constant depending on the problem parameters. This completes the proof.

\end{proof}

And then it suffices to compare the performance of $\hat{\A}$ with the full-information oracle $\A^*$, which is shown in the following lemma.

\begin{lemma}
\label{lemma:oracle-comparison}
  Under Assumption~\ref{assume:gaussian-tail} and Assumption~\ref{assum:regularity}
  , we can bound the performance gap between the full-information oracle $\A^*$ and the restricted oracle $\hat{\A}$ with $K \asymp N(\log N)^{-3}$ as:
  \begin{align*}
    V_N(\A^*) - V_N(\hat{\A}) \leq C \frac{\log \log N}{(\log N)^{\min\{\beta, 1-\gamma \}}}.
  \end{align*}
  where $C$ is a constant depending on $(\sigma_{min}, \sigma_{max}, \alpha, C_R, M_\mu, C_\mu, \mu_0, \beta, \gamma)$.
\end{lemma}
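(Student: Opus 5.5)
The plan is to reduce both oracles to the closed-form Gaussian scaling law and then compare maxima over $N$ versus $K$ candidates. Lemma~\ref{lemma:bias-error-of-VN} gives $V_N(s)=\mu(s)+\sigma(s)E(N)+O((1-\alpha/2)^N)$, uniformly over states thanks to Assumption~\ref{assume:gaussian-tail}. The exponential term is negligible, so it suffices to compare
\begin{equation*}
\E\Big[\max_{i\le N}\big(\mu_i+\sigma_iE(N)\big)\Big] \quad\text{and}\quad \E\Big[\max_{i\le K}\big(\mu_i+\sigma_iE(N)\big)\Big],
\end{equation*}
where the $(\mu_i,\sigma_i)$ are i.i.d.\ draws of the latent parameters.

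\textbf{Upper bound for $\A^*$.} We use $\mu_i+\sigma_iE(N)\le \mu_i+\sigma_{\max}E(N)$ and condition (3) of Assumption~\ref{assum:regularity}. This gives
\begin{equation*}
V_N(\A^*)\le \mu_0+C_\mu(\log N)^\gamma+\sigma_{\max}E(N)+O((\log N)^{-\beta}).
\end{equation*}

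\textbf{Lower bound for $\hat{\A}$.} Fix a small $\delta_N>0$ and let $J$ be the set of indices $i\le K$ with $\sigma_i\ge\sigma_{\max}-\delta_N$. By condition (2) (read as holding for all small $\delta$), $|J|$ is Binomial with mean at least $K\delta_N^\alpha$. By a Chernoff bound, $|J|\ge K':=K\delta_N^\alpha/2$ except with probability $e^{-cK\delta_N^\alpha}$.

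Restricting the maximum to $J$ gives $\max_{i\le K}(\cdots)\ge \max_{i\in J}\mu_i+(\sigma_{\max}-\delta_N)E(N)$. By independence of $\mu$ and $\sigma$ (condition (1)), conditionally on $J$ the $\mu_i$ with $i\in J$ are i.i.d.\ from the marginal law of $\mu$. Since $\E[\max_{i\le n}\mu_i]$ is monotone in $n$, on the good event condition (3) yields
\begin{equation*}
\E\Big[\max_{i\in J}\mu_i\Big]\ge \mu_0+C_\mu(\log K')^\gamma-O((\log K')^{-\beta}).
\end{equation*}
On the bad event we lower bound the maximum by a single term $\mu_1+\sigma_1E(N)$. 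Cauchy–Schwarz with $\E[\mu^2]\le M_\mu$ and $\sigma\le\sigma_{\max}$ then shows this contributes at most $O(\sqrt{\log N}\,e^{-cK\delta_N^\alpha/2})$, which is negligible.

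\textbf{Combining.} The gap is bounded by three pieces:
\begin{equation*}
C_\mu\big[(\log N)^\gamma-(\log K')^\gamma\big]+\delta_N E(N)+O((\log N)^{-\beta}).
\end{equation*}
For the first piece, the mean value theorem and $\log N-\log K'=3\log\log N+\alpha\log(1/\delta_N)+O(1)$ (using $K\asymp N(\log N)^{-3}$) give the bound
\begin{equation*}
\gamma\,\frac{3\log\log N+\alpha\log(1/\delta_N)+O(1)}{(\log K')^{1-\gamma}}.
\end{equation*}
For the second piece we use $E(N)\le\sqrt{2\log N}$. Choosing $\delta_N=(\log N)^{-(3/2-\gamma)}$ makes $\delta_NE(N)=O((\log N)^{-(1-\gamma)})$ while keeping $\log(1/\delta_N)=O(\log\log N)$. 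It also keeps $K\delta_N^\alpha$ polynomially large in $N$, so the Chernoff term vanishes. Altogether the gap is $O\big(\log\log N/(\log N)^{\min\{\beta,1-\gamma\}}\big)$, as claimed.

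\textbf{Main obstacle.} I expect the delicate step to be the lower bound for $\hat{\A}$. It requires the conditioning on the random set $J$ to be legitimate, which needs independence in order to keep the $\mu_i$ i.i.d.\ after selecting on $\sigma$. It also requires controlling the bad event using only second moments. Finally, $\delta_N$ must be tuned so that the $\sigma$-loss $\delta_NE(N)$ and the reduced-sample loss in $(\log K')^\gamma$ are simultaneously of the target order.
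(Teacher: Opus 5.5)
Your proposal is correct and follows essentially the same route as the paper. Both upper-bound $V_N(\A^*)$ using $\sigma(s)\le\sigma_{\max}$ and condition (3), and both lower-bound $V_N(\hat{\A})$ by restricting to the high-variance subset $\{\sigma(s_i)\ge\sigma_{\max}-\delta\}$, with a Chernoff bound on its size, independence of $\mu$ and $\sigma$, and condition (3) at effective sample size $\delta^\alpha K/2$ for a polylogarithmically small $\delta$. Your differences are only refinements: you take $\delta_N=(\log N)^{-(3/2-\gamma)}$ instead of $(\log N)^{-2}$, you use the mean value theorem instead of $(1-x)^\gamma\ge 1-x$, and your Cauchy--Schwarz treatment of the bad event avoids the paper's tacit step $V_N(\hat{\A}_\delta)\ge\E[\,\cdot\mid E]\,\mathbb{P}(E)$, which needs the contribution on $E^c$ to be nonnegative.
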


\begin{proof}[Proof of Lemma~\ref{lemma:oracle-comparison}]
  Let $\mathcal{S}_N$ denote the set of states sampled by the oracle strategy $\mathcal{A}^*$. We upper bound the performance of $\mathcal{A}^*$ as follows:
  \begin{align*}
  V_{N}(\A^*) & = \E\left[ \max_{s \in \mathcal{S}_N} V_N(s) \right] \\
    & = \E\left[ \max_{s \in \mathcal{S}_N} \left(\mu(s) + \sigma(s) E(N)\right) \right] \\
    & \leq \E \left[ \max_{s \in \mathcal{S}_N} \mu(s) \right] + E(N) \sigma_{\max} \\
    & \leq \mu_0 + C_{\mu} (\log N)^{\gamma} + E(N)\sigma_{\max} + C(\log N)^{-\beta},
  \end{align*}
  where $E(N)$ is the expected maximum of $N$ standard normal variables, and the last inequality follows from Assumption~\ref{assum:regularity}.
  
  To establish a lower bound for $\hat{\mathcal{A}}$, we consider a weaker oracle algorithm $\hat{\mathcal{A}}_{\delta}$ for some $\delta \in (0,1)$, constructed as follows:
  \begin{itemize}
    \item $\hat{\mathcal{A}}_{\delta}$ draws $K$ states $\{s_1, \dots, s_K\}$ from the prior. It has access to the true values $\{V_N(s_i)\}_{i=1}^K$.
    \item Define the candidate set $\mathcal{S}_\delta = \{ s_i \mid \sigma(s_i) \geq \sigma_{\max} - \delta, \, i \in [K] \}$. $\hat{\mathcal{A}}_{\delta}$ selects the state $s^*$ that maximizes the value within $\mathcal{S}_\delta$, i.e., $s^* = \arg\max_{s \in \mathcal{S}_\delta} V_N(s)$. If $\mathcal{S}_\delta$ is empty, an arbitrary state is selected.
    \item The algorithm samples $N$ times from $s^*$ and outputs the best response.
  \end{itemize}

  Since the search space of $\hat{\mathcal{A}}_{\delta}$ is a subset of the space explored by $\hat{\mathcal{A}}$ (with additional constraints), 
  we have $V_N(\hat{\mathcal{A}}) \geq V_N(\hat{\mathcal{A}}_\delta)$. Thus, it suffices to lower bound the performance of $\hat{\mathcal{A}}_\delta$.

  By Assumption~\ref{assum:regularity}, the probability that a sampled state has variance at least $\sigma_{\max} - \delta$ is lower bounded by $\delta^{\alpha}$. This implies $\E[|\mathcal{S}_\delta|] \geq \delta^{\alpha} K$. Define the event $E = \{ |\mathcal{S}_\delta| \geq \frac{1}{2}\delta^\alpha K \}$. By a standard Chernoff bound, we have:
  \[ \mathbb{P}(E) \geq 1 - \exp(-C' \delta^{\alpha} K), \]
  for some constant $C' > 0$. 
  
  Since $\mu$ and $\sigma$ are independent under the prior, conditioning on $E$ (which depends only on $\sigma$) does not alter the distribution of $\mu$ for states in $\mathcal{S}_\delta$. We invoke the mean scaling assumption with effective sample size $|\mathcal{S}_\delta|$:
  \begin{align*}
    \E\left[\max_{s \in \mathcal{S}_\delta} \mu(s) \mid E\right] & \geq \mu_0 + C_{\mu} \left(\log ( \delta^{\alpha} K / 2)\right)^{\gamma} - C \left(\log ( \delta^{\alpha} K / 2)\right)^{-\beta}.
  \end{align*}
  
  Conditioned on $E$, every state in $\mathcal{S}_\delta$ satisfies $\sigma(s) \geq \sigma_{\max} - \delta$. We lower bound the expected value:
  \begin{align*}
    V_N(\hat{\mathcal{A}}_\delta) & = \E[ R_N(\hat{\mathcal{A}}_\delta) ]  \geq \E[R_N(\hat{\mathcal{A}}_\delta) \mid E] \mathbb{P}(E) \\
    & \geq \left( \mu_0 + C_{\mu} (\log ( \delta^{\alpha} K / 2))^{\gamma} - C (\log ( \delta^{\alpha} K / 2))^{-\beta} + E(N) (\sigma_{\max} - \delta) \right) \left( 1 - \exp(-C' \delta^{\alpha} K) \right).
  \end{align*}

  We set $\delta = (\log N)^{-2}$. Recalling that $K = C_K N (\log N)^{-3}$, the failure probability $\exp (-C' \delta^{\alpha} K)$ decays polynomially in $N$ (specifically as $O(N^{-c})$ for some $c>0$), which is negligible compared to polylogarithmic terms.
  
  Let $P = \frac{N}{\delta^\alpha K/2}$. We have $\log P = O(\log \log N)$ and $\log (\delta^\alpha K / 2) = \log N - \log P$. For sufficiently large $N$, using the inequality $(1-x)^\gamma \geq 1 - x$ for $x \in [0,1)$ and $\gamma < 1$, the lower bound simplifies to:
  \begin{align*}
    V_N(\hat{\mathcal{A}}_\delta)
    & \geq \mu_0 + C_{\mu} (\log N)^\gamma \left(1 - \frac{\log P}{\log N}\right)^{\gamma} - C'' (\log N)^{-\beta} + E(N) \sigma_{\max} - z_N \delta - O(N^{-c}) \\
    & \geq \mu_0 + C_{\mu} (\log N)^\gamma \left(1 - \frac{\log P}{\log N}\right) - C'' (\log N)^{-\beta} + E(N) \sigma_{\max} - \frac{z_N}{(\log N)^2} - O(N^{-c}).
  \end{align*}

  Subtracting this from the upper bound of $V_N(\mathcal{A}^*)$, we obtain:
  \begin{align*}
    V_{N}(\A^*) - V_{N}(\hat{\A}) & \leq V_{N}(\A^*) - V_{N}(\hat{\A}_\delta) \\
    & \leq C_\mu (\log N)^{\gamma-1} \log P + \frac{E(N)}{(\log N)^2} + C (\log N)^{-\beta} + O(N^{-c}).
  \end{align*}

  Using the facts that $\log P = O(\log \log N)$ and $E(N) \leq \sqrt{2 \log N}$, the gap is dominated by:
  \begin{align*}
     V_{N}(\A^*) - V_{N}(\hat{\A})
      & \leq O\left( \frac{\log \log N}{(\log N)^{1-\gamma}} \right) + O\left( \frac{1}{(\log N)^{3/2}} \right) + O\left( \frac{1}{(\log N)^\beta} \right)\\
     & \leq C \frac{\log \log N}{(\log N)^{\min\{\beta, 1-\gamma\}}},
  \end{align*}
  for some constant $C > 0$.
\end{proof}

Then we are ready to prove Corollary~\ref{cor:regret_to_N}.

\begin{proof}[Proof of Corollary~\ref{cor:regret_to_N}]

  We first decompose the performance gap between $\A^*$ and Scaling-Law Guided (SLG) algorithm as follows:
  \begin{align*}
    V_{N}(\A^*) - V_N(\A_{\text{SLG}}) & =  V_{N}(\A^*) - V_N(\hat{\A}) + V_N(\hat{\A}) - V_N(\A_{\text{SLG}})
  \end{align*}

  Since the choice of $K,m$ satisfies the conditions in Lemma~\ref{lemma:regret_to_hat_A}, we know by Equation~\eqref{eq:hatA-SLG-regret} that:
  \begin{align*}
    V_N(\hat{\A}) - V_N(\A_{\text{SLG}})
    & \leq C \left( \frac{K}{N} \right)^{1/3} (\log N)^{1/2}\\
    & \leq C_1 \frac{1}{(\log N)^{1/2}}
  \end{align*}

  for some constant $C_1 > 0$. And by Lemma~\ref{lemma:oracle-comparison}, we have:

  \begin{align*}
    V_{N}(\A^*) - V_N(\hat{\A})
    & \leq C_2 \frac{\log \log N}{(\log N)^{\min\{\beta, 1-\gamma \}}}
  \end{align*}

  for some constant $C_2 > 0$. Combining the two parts, we conclude that:
  \begin{align*}
    V_{N}(\A^*) - V_N(\A_{\text{SLG}})
    & \leq C_2 \frac{\log \log N}{(\log N)^{\min\{\beta, 1-\gamma \}}} + C_1 \frac{1}{(\log N)^{1/2}} \\
    & \leq C \frac{\log \log N}{(\log N)^{\min\{\beta, 1-\gamma, 1/2 \}}}
  \end{align*}
  for some constant $C > 0$ depending on the problem parameters. This completes the proof.

\end{proof}

\section{Proofs in Section~\ref{subsubsec:gaussian_examples}}
\label{appendix:gaussian_examples_proofs}

\subsection{Proof of Proposition~\ref{prop:homogeneous_gap}}
\label{appendix:proof-homogeneous-gap}

We analyze the performance gap between SLG and Best-of-N algorithms under the homogeneous noise setting as stated in Proposition~\ref{prop:homogeneous_gap}.

\begin{proof}[Proof of Proposition~\ref{prop:homogeneous_gap}]
  We first analyze the performance of our algorithm SLG and the Best-of-N algorithm under the homogeneous noise setting. 

  If we denote $S_{\text{SLG}}$ as the set of states selected by SLG algorithm
  (where we know $|S_{\text{SLG}}| = K$), 
  and $\hat{s} = \arg\max_{s \in S_{\text{SLG}}} \hat{V}_N(s)$ as the candidate with the highest estimated value among the selected states, then since
  the variance $\sigma(s)=\sigma_1$ for all $s$, we have:
  \begin{align*}
    \hat{s}= \arg\max_{s \in S_{\text{SLG}}} \hat{V}_N(s) & = \arg\max_{s \in S_{\text{SLG}}} \hat{\mu}(s)
  \end{align*}
  and we can lower bound the expected performance of SLG algorithm as follows:
  \begin{align*}
    V_N(\A_{\text{SLG}}) & \geq \E[ V_{N - Km}(\hat{s}) ] \\
    & = \E[ \mu(\hat{s}) + \sigma_1 E(N - Km) ] \\
    & = \E[ \mu(\hat{s}) ] + \sigma_1 E(N - Km)
  \end{align*}
  Because we know $\hat{\mu}(s) \mid \mu(s) \sim \mathcal{N}(\mu(s), \sigma_1^2/m)$ and $\mu(s) \sim \mathcal{N}(\mu_0, \sigma_0^2)$, by the properties of Gaussian distribution, we have:
  \begin{align*}
    \E[\mu(\hat{s})] & = \E_{\hat{\mu}} \left[ \E[ \mu(\hat{s}) \mid \hat{\mu}(\hat{s}) ]  \right] \\
    & = \E_{\hat{\mu}} \left[ \mu_0 + \frac{\sigma_0^2}{\sigma_0^2 + \sigma_1^2/m} (\hat{\mu}(\hat{s}) - \mu_0) \right] \\
    & = \mu_0 + \frac{\sigma_0^2}{\sigma_0^2 + \sigma_1^2/m} \E_{\hat{\mu}}[ \hat{\mu}(\hat{s}) - \mu_0 ] \\
    & = \mu_0 + \frac{\sigma_0^2}{\sigma_0^2 + \sigma_1^2/m} \sqrt{\sigma_0^2 + \sigma_1^2/m} E(K) 
  \end{align*}

  where the last equality follows from the fact that $\hat{\mu}(s) \sim \mathcal{N}(\mu_0, \sigma_0^2 + \sigma_1^2/m)$ for all $s$ and $\hat{s}$ is the maximizer among $K$ i.i.d. samples.
  If we denote $t = \frac{\sigma_1}{\sigma_0}$, then we can simplify the above expression as:
  \begin{align*}
    \E[\mu(\hat{s})] & = \mu_0 + \frac{\sigma_0}{\sqrt{1 + t^2/m}} E(K)
  \end{align*}
  which leads to the following lower bound for SLG algorithm:
  \begin{align*}
    V_N(\A_{\text{SLG}}) & \geq \mu_0 + \sigma_0 \left[ \frac{1}{\sqrt{1 + t^2/m}} E(K) + t E(N - Km) \right]
  \end{align*}
  For the Best-of-N algorithm, since the reward distribution of the prompt $x$ is $\mathcal{N}(\mu_0, \sigma_0^2 + \sigma_1^2)$, we have:
  \begin{align*}
    V_N(\A_{\text{BoN}}) & = \E[\max_{R_1, \dots, R_N \simiid \mathcal{N}(\mu_0, \sigma_0^2 + \sigma_1^2)} R_i ] = \mu_0 + \sigma_0 \sqrt{1 + t^2} E(N)
  \end{align*}

  So the performance gap between the two algorithms can be lower bounded as:
\begin{align}
    V_N(\mathcal{A}_{\text{SLG}}) - V_N(\mathcal{A}_{\text{BoN}}) & \geq \sigma_0 \left[ \frac{1}{\sqrt{1 + t^2/m}} E(K) + t E(N - Km) - \sqrt{1 + t^2} E(N) \right].
\end{align}
To proceed, we define a surrogate function to approximate the gap, and then derive a lower bound for this surrogate. Let:
\begin{align*}
    G(t,N) = \frac{1}{\sqrt{1 + t^2/m}} \sqrt{2\log K} + t \sqrt{2\log (N - Km)} - \sqrt{1 + t^2} \sqrt{2\log N}.
\end{align*}
By choosing $m = t^2 \log N$ and $K = \frac{N}{(1+t)m} = \frac{N}{(1+t) t^2 \log N}$, we have:
\begin{align*}
    \frac{1}{\sqrt{1 + t^2/m}} \sqrt{2\log K} & = \frac{1}{\sqrt{1 + 1/\log N}} \sqrt{2 \log N - 2 \log \log N - 2 \log (t^2(1+t))} \\
    & \geq \frac{1}{\sqrt{1 + 1/\log N}} \sqrt{2\log N}\left( 1 - \frac{\log \log N + \log (t^2(1+t))}{\log N} \right) \\
    & \geq \frac{1}{\sqrt{1 + 1/\log N}} \sqrt{2\log N} - C_1 \frac{\log \log N}{\sqrt{\log N}}
\end{align*}
for some constant $C_1 > 0$ depending on $t$. Similarly, we have:
\begin{align*}
    t \sqrt{2 \log(N-Km)} & = t \sqrt{2 \log N  - 2 \log (1+ 1/t)} \\
    & \geq t \sqrt{2 \log N} \left( 1 - \frac{\log(1 + 1/t)}{\log N} \right) \\
    & \geq t \sqrt{2 \log N} - C_2 \frac{1}{\sqrt{\log N}}
\end{align*}
for some constant $C_2 > 0$ depending on $t$. Combining these parts yields:
\begin{align*}
    G(t,N) & \geq \left( \frac{1}{\sqrt{1 + 1/\log N}} + t - \sqrt{1 + t^2} \right) \sqrt{2\log N} - C \frac{\log \log N}{\sqrt{\log N}}
\end{align*}
for some constant $C > 0$ depending on $t$. When $N \geq \exp(1 + 1/t)$, we observe:
\begin{align*}
    \frac{1}{\sqrt{1 + 1/\log N}} + t - \sqrt{1 + t^2} & \geq \frac{1}{\sqrt{1 + \frac{t}{t+1}}} + t - \sqrt{1 + t^2} \\
    & \geq \left(1 - \frac{1}{2}\frac{t}{t+1}\right) + t - \sqrt{1+t^2} \\
    & \geq \frac{t}{t+1} - \frac{1}{2}\frac{t}{t+1} = \frac{1}{2} \frac{t}{t+1},
\end{align*}
where the last inequality follows from the fact that $t+1 - \sqrt{1+t^2} \geq \frac{t}{t+1}$ for all $t > 0$. Therefore:
\begin{align*}
    G(t,N) & \geq \frac{1}{2} \frac{t}{t+1} \sqrt{2\log N} - C \frac{\log \log N}{\sqrt{\log N}}
\end{align*}
for $N \geq \exp(1 + 1/t)$.

It suffices to analyze the approximation error between the original gap and the surrogate function, which arises from approximating $E(N)$ by $\sqrt{2\log N}$. Using the known bound:
\begin{align*}
    \left| E(N) - \sqrt{2\log N} \right| & \leq C_3 \frac{\log \log N}{\sqrt{\log N}}
\end{align*}
for some constant $C_3 > 0$ and all $N \geq 3$, we have:
  \begin{align*}
    & \quad \left| V_N(\A_{\text{SLG}}) - V_N(\A_{\text{BoN}}) - \sigma_0 G(t,N) \right| \\
    & \leq \sigma_0 \left( \frac{1}{\sqrt{1 + t^2/m}} \left| E(K) - \sqrt{2\log K} \right| + t \left| E(N - Km) - \sqrt{2\log (N - Km)} \right| + \sqrt{1 + t^2} \left| E(N) - \sqrt{2\log N} \right| \right) \\
    & \leq \sigma_0 \left( C_3 \frac{\log\log K}{\sqrt{\log K}} + t C_3 \frac{\log\log (N - Km)}{\sqrt{\log (N - Km)}} + \sqrt{1 + t^2} C_3 \frac{\log\log N}{\sqrt{\log N}} \right) \\
    & \leq C_4 \frac{\log\log N}{\sqrt{\log N}}
  \end{align*}
for some constant $C_4 > 0$ depending on $t$.

Combining the above results, we observe that the lower bound consists of a dominant term of order $\sqrt{\log N}$ and a negative error term of order $\frac{\log \log N}{\sqrt{\log N}}$. 
Since $\lim_{N \to \infty} \frac{\log \log N}{\log N} = 0$, there exists a threshold $N_0$ (dependent on $t$) such that for all $N \geq N_0$, the error term is at most half of the dominant term. 
Therefore, for $N \geq N_0$:
\begin{align*}
    V_N(\mathcal{A}_{\text{SLG}}) - V_N(\mathcal{A}_{\text{BoN}}) & \geq \frac{t}{4(t+1)} \sigma_0 \sqrt{2\log N}
\end{align*}
This completes the proof.
\end{proof}

\subsection{Proof of Corollary~\ref{cor:compute_amplification}}
\label{appendix:proof-compute-amplification}

We now prove Corollary~\ref{cor:compute_amplification} by translating the performance gap into an equivalent budget amplification.

\begin{proof}[Proof of Corollary~\ref{cor:compute_amplification}]
We aim to find a constant $\gamma > 0$ such that for $N \geq N_0$ (as defined in Proposition~\ref{prop:homogeneous_gap}), the inequality $V_N(\mathcal{A}_{\text{SLG}}) \geq V_{N^{1+\gamma}}(\mathcal{A}_{\text{BoN}})$ holds.

From Proposition~\ref{prop:homogeneous_gap}, the performance of the SLG algorithm is lower-bounded by the performance of the standard Best-of-$N$ baseline plus a strictly positive margin:
\begin{equation}
    V_N(\mathcal{A}_{\text{SLG}}) \geq V_N(\mathcal{A}_{\text{BoN}}) + \frac{t}{4(t+1)} \sigma_0 \sqrt{2\log N}.
    \label{eq:slg_gap}
\end{equation}
Now, consider the Best-of-$N$ baseline operating with an amplified budget $N^{1+\gamma}$. Its expected value is given by:
\begin{equation*}
    V_{N^{1+\gamma}}(\mathcal{A}_{\text{BoN}}) = \mu_0 + \sigma_0 \sqrt{1+t^2} E(N^{1+\gamma}).
\end{equation*}
The difference in value between the amplified baseline and the standard baseline is:
\begin{align*}
    \Delta V_{\text{BoN}} &= V_{N^{1+\gamma}}(\mathcal{A}_{\text{BoN}}) - V_N(\mathcal{A}_{\text{BoN}}) \\
    &= \sigma_0 \sqrt{1+t^2} \left( E(N^{1+\gamma}) - E(N) \right).
\end{align*}
Since we know $\E(n) - E(m) \leq 2(\sqrt{\log n} - \sqrt{\log m})$ for any $n > m \geq 3$, we can further bound the difference:
\begin{align*}
    E(N^{1+\gamma}) - E(N) &\leq 2(\sqrt{\log (N^{1+\gamma})} - \sqrt{ \log N}) \\
    &= 2(\sqrt{1+\gamma} - 1) \sqrt{\log N}.
\end{align*}
Substituting this back into the expression for $\Delta V_{\text{BoN}}$, we have:
\begin{equation*}
    \Delta V_{\text{BoN}} \leq 2\sigma_0 \sqrt{1+t^2} (\sqrt{1+\gamma} - 1) \sqrt{\log N} \leq \sigma_0 \sqrt{1+t^2} \gamma \sqrt{\log N}.
\end{equation*}
And when we choose $\gamma = \frac{\sqrt{2}t}{4(t+1)\sqrt{1+t^2}}$, we have:

\begin{align*}
  V_{N}(\mathcal{A}_{\text{SLG}}) &\geq V_{N^{1+\gamma}}(\mathcal{A}_{\text{BoN}}) - \Delta V_{\text{BoN}} + \frac{t}{4(t+1)} \sigma_0 \sqrt{2\log N}\\
  & \geq V_{N^{1+\gamma}}(\mathcal{A}_{\text{BoN}}) - \sigma_0 \sqrt{1+t^2} \gamma \sqrt{\log N} + \frac{t}{4(t+1)} \sigma_0 \sqrt{2\log N} \\
  & = V_{N^{1+\gamma}}(\mathcal{A}_{\text{BoN}})
\end{align*}

This completes the proof.

\end{proof}

\end{document}